\documentclass{article}

\usepackage[preprint]{neurips_2026}

\usepackage[utf8]{inputenc} % allow utf-8 input
\usepackage[T1]{fontenc}    % use 8-bit T1 fonts
\usepackage{hyperref}       % hyperlinks
\usepackage{url}            % simple URL typesetting
\usepackage{booktabs}       % professional-quality tables
\usepackage{amsfonts}       % blackboard math symbols
\usepackage{nicefrac}       % compact symbols for 1/2, etc.
\usepackage{microtype}      % microtypography
\usepackage{xcolor}         % colors

\usepackage{amsmath,amssymb,amsfonts,amsthm,mathtools}
\usepackage{mathrsfs}
\usepackage{bbm}
\usepackage{enumitem}
\usepackage{comment}

\newcommand{\cA}{\mathcal{A}}
\newcommand{\cD}{\mathcal{D}}
\newcommand{\cF}{\mathcal{F}}

\newcommand{\cL}{\mathcal{L}}

\newcommand{\cX}{\mathcal{X}}

\newcommand{\EE}{\mathbb{E}}
\newcommand{\PP}{\mathbb{P}}
\newcommand{\R}{\mathbb{R}}

\DeclareMathOperator{\supp}{supp}

\DeclareMathOperator{\KL}{KL}
\DeclareMathOperator{\Var}{Var}

\DeclareMathOperator{\softmax}{softmax}
\DeclareMathOperator{\Regret}{Regret}

\DeclareMathOperator*{\argmax}{arg\,max}

\newcommand{\iid}{\overset{\mathrm{i.i.d.}}{\sim}}

\theoremstyle{plain}
\newtheorem{theorem}{Theorem}
\newtheorem{lemma}[theorem]{Lemma}

\newtheorem{proposition}[theorem]{Proposition}

\theoremstyle{definition}
\newtheorem{definition}[theorem]{Definition}

\theoremstyle{remark}
\newtheorem{remark}[theorem]{Remark}

\title{Demystifying the Unreasonable Effectiveness of\\Online Alignment Methods}

\author{%
  Enoch Hyunwook Kang\thanks{ehwkang@uw.edu} \\
  Foster School of Business\\
  University of Washington\\
}

\begin{document}

\maketitle

  \begin{abstract}
Iterative alignment methods based on purely greedy updates are remarkably effective in practice, yet existing theoretical guarantees of \(O(\log T)\) KL-regularized regret can seem pessimistic relative to their empirical performance. In this paper, we argue that this mismatch arises from the regret criterion itself: KL-regularized regret conflates the statistical cost of learning with the exploratory randomization induced by the softened training policy. To separate these effects, we study the traditional temperature-zero regret criterion, which evaluates only the top-ranked response at inference time. Under this decision-centric notion of performance, we prove that standard greedy online alignment methods, including online RLHF and online DPO, achieve constant \((O(1))\) cumulative regret. By isolating the cost of identifying the best response from the stochasticity induced by regularization, our results provide a sharper theoretical explanation for the practical superb efficiency of greedy alignment. 
\end{abstract}

\section{Introduction}

A common online, i.e., iterative alignment loop is strikingly simple: deploy the current model, sample a small slate of responses, collect preference feedback, refit a preference model on the accumulated data, and redeploy the greedily improved policy induced by that model. This loop and its variants underlie much of modern preference-based post-training, and they have also become a central object of recent alignment theory \citep{zhu2023principled,xiong2024iterative,ye2024online,zhao2024sharp}.

The theoretical picture has sharpened substantially in the last two years. Recent analyses show that the regularizer can fundamentally change the statistical difficulty of learning, yielding logarithmic online regret in reward-based settings where one would classically expect slower rates \citep{zhao2024sharp,zhao2025online,zhao2025offline}. More recent work further shows that these fast orders need not rely on optimistic or pessimistic confidence constructions: in both Bradley--Terry and more general preference models, purely greedy sampling can already attain the same $O(\log T)$ regret in terms of the KL-regularized regret criterion \citep{wu2025greedy}. Follow-up analyses also emphasize that on-policy coverage can improve over iterations, that sampling and reference choices shape iterative alignment dynamics, and that similarly fast phenomena can persist under richer, potentially intransitive preference structures \citep{kim2026coverage,chen2026sampling,lee2026regularized}.

Accordingly, the point of this paper is not to claim that greedy alignment is efficient \emph{per se}; that conclusion is already well supported by the recent literature. The question we ask is subtler: efficient with respect to \emph{what}? 

Under the standard policy-based criterion, KL-regularized regret blends two distinct components of suboptimality. First, the learner may not yet have enough information to identify which response is truly top-ranked under the latent reward, so regret reflects the statistical cost of exploration and learning. Second, even after the correct top-ranked response has effectively been identified, the finite-temperature KL-tilted policy continues to randomize away from it, contributing regret that is induced by the regularizer rather than by continued uncertainty. To understand whether greedy alignment has learned the right best response, these two effects should be separated.

Once evaluation is restricted to the first component of regret, greedy alignment begins to look familiar from a different literature. In contextual bandits, successful greedy learning has long been associated with the presence of implicit exploration supplied by the data distribution itself, via covariate diversity, smoothing, favorable margins, or local anti-concentration. Under such conditions, greedy procedures can achieve no-regret and polylogarithmic regret, and in favorable margin regimes even bounded regret \citep{hao2020adaptive,papini2021leveraging,bastani2021mostly,tirinzoni2022scalable,kannan2018smoothed,raghavan2023greedy,kim2024local}. Our key intuition is that KL-regularized alignment exhibits a closely related phenomenon once evaluation is focused on the decision-error component of regret, rather than on the stochasticity of the softened policy itself.

Following this idea, in this paper, we study the standard greedy alignment loop under the traditional \textit{temperature-zero regret criterion}. Let \(\pi_0(\cdot\mid x)\) be the reference policy, and for any reward estimate \(R\), define the induced KL-tilted policy by
$
\pi_R(a\mid x)\propto \pi_0(a\mid x)e^{\eta R(x,a)},
$ where \(\eta\) is the regularization parameter, 
and define
$
a_R(x)\in \arg\max_{a\in\supp(\pi_0(\cdot\mid x))} R(x,a),
$
where \(\supp(\pi_0(\cdot\mid x))\) denotes the support of the reference policy. If \(R^\star\) denotes the true latent reward, if \(d_0\) denotes the context distribution, and if we write \(a^\star\coloneqq a_{R^\star}\) and \(\pi^\star\coloneqq \pi_{R^\star}\), then the one-step \textit{temperature-zero regret} is
\[
\EE_{X\sim d_0}\!\left[
R^\star(X,a^\star(X)) - R^\star(X,a_R(X))
\right],
\]
whereas the regret notion in \citep{wu2025greedy} is the one-step \textit{KL-regularized regret}, i.e., 
\begin{align}
\EE_{X\sim d_0}\!\bigl[
&\bigl(
\EE_{A\sim \pi^\star(\cdot\mid X)}[R^\star(X,A)]
-\frac1\eta \KL\!\big(\pi^\star(\cdot\mid X)\,\|\,\pi_0(\cdot\mid X)\big)
\bigr) \notag
\\
&-
\bigl(
\EE_{A\sim \pi_R(\cdot\mid X)}[R^\star(X,A)]
-\frac1\eta \KL\!\big(\pi_R(\cdot\mid X)\,\|\,\pi_0(\cdot\mid X)\big)
\bigr)
\bigr]. \notag
\end{align} 
That is, temperature-zero regret is just a traditional regret notion used in the bandit setting, which asks only whether the learned reward induces the correct top-ranked response, whereas KL-regularized regret also charges the deliberate finite-temperature randomization of the deployed KL-tilted policy. 

\paragraph{Main contributions.}
\begin{itemize}[leftmargin=*]
\item We formalize temperature-zero regret, in which only the final top-ranked response matters. 
\item We theoretically and empirically demonstrate that the standard greedy online alignment loop achieves bounded ($O(1)$) cumulative temperature-zero regret. This yields a sharper interpretation of prior logarithmic-regret results, such as \citet{wu2025greedy}: after finitely many rounds, the $O(\log(T))$ KL-regularized regret is driven entirely by the randomization of the KL-regularized policy itself, not by continued failure to identify the correct top-ranked response.
\end{itemize}

The remainder of the paper is organized as follows. In Section \ref{sec:setup}, we formalize the interaction model for iterative alignment. Section \ref{sec:np_greedy_alignment_procedure} describes the standard greedy alignment procedures, including online RLHF and DPO. In Section \ref{sec:nonpersonalized_online_alignment}, we state and prove our main theoretical results regarding bounded temperature-zero regret. Finally, Section \ref{sec:experiment} provides empirical validation of these results through a controlled simulation. Due to page limits, we discuss related works in Appendix \ref{sec:related}.

\section{Setup}
\label{sec:setup}

%We first specify the interaction model for non-personalized iterative alignment: contexts, a fixed
%reference policy, and online preference feedback collected from slates generated by the deployed
%policy.

\subsection{Contexts, actions, and reference policy}

Let \(\cX\) denote a context space (prompts, queries, or tasks), and let \(\cA\) denote an
action space (responses). We allow \(\cA\) to be very large or infinite. Contexts arrive
i.i.d.\ from an unknown distribution \(d_0\) on \(\cX\):
\[
X_1,X_2,\dots \iid d_0.
\]

We are also given a fixed reference policy
\[
\pi_0:\cX\to \Delta(\cA),
\]
which represents the pre-alignment model, e.g.\ an SFT checkpoint. As in standard
KL-regularized alignment, we restrict attention to policies that are absolutely continuous
with respect to \(\pi_0\):
\[
\Pi
\coloneqq
\left\{
\pi:\cX\to\Delta(\cA):
\pi(\cdot\mid x)\ll \pi_0(\cdot\mid x)
\ \text{for every }x\in\cX
\right\}.
\]
This means that aligned policies are obtained by reweighting the reference model rather
than assigning mass to actions that the reference policy never proposes.

\subsection{Online preference feedback}

Fix a slate size \(K\ge 2\). At round \(t\), after observing the current context \(x_t\), the
deployed policy \(\pi_t\) generates a slate
\[
\mathbf a_t=(a_{t,1},\dots,a_{t,K})\in \cA^K,
\quad
a_{t,k}\overset{\mathrm{i.i.d.}}{\sim}\pi_t(\cdot\mid x_t),
\quad k=1,\dots,K.
\]
This is the mathematical idealization of the common ``generate \(K\) candidates from the
current model and ask for the favorite one'' protocol.

A human annotator, panel, or preference model then returns a preferred index
\[
y_t\in\{1,\dots,K\}.
\]
We model the feedback through an unknown conditional choice distribution
\[
P^\star:\cX\times \cA^K\to \Delta(\{1,\dots,K\}),
\]
so that
\[
y_t \sim P^\star(\cdot\mid x_t,\mathbf a_t).
\]
When \(K=2\), this reduces to pairwise preference feedback. Larger \(K\) covers best-of-\(K\)
or slate-level preference collection.

After \(t\) rounds, the accumulated dataset is
\[
\cD_t
=
\left\{
(x_s,a_{s,1},\dots,a_{s,K},y_s):1\le s\le t
\right\}.
\]

\section{Greedy alignment procedures}
\label{sec:np_greedy_alignment_procedure}

%Given the interaction model from Section~\ref{sec:setup}, we now describe the standard greedy alignment procedures: online RLHF \citep{wu2025greedy} and online DPO \citep{guo2024direct}. In both, the learner maintains a scalar reward model, refits it on the cumulative preference data, and redeploys the KL-tilted policy induced by the fitted reward model.

\subsection{Reward functions and empirical preference fitting}
\label{sec:np_exact_erm}

A reward function is a measurable function
\[
R:\cX\times\cA\to\R.
\]
Let \(\cF\) denote the candidate reward class used by the learner. We do not impose any
statistical assumptions on \(\cF\) yet; the theory section will introduce additional
structure only when needed.

Given a reward function \(R\in\cF\) and a slate
\[
\mathbf a=(a_1,\dots,a_K),
\]
define the reward vector
\[
\mathbf v_R(x,\mathbf a)
\coloneqq
\big(R(x,a_1),\dots,R(x,a_K)\big)\in\R^K.
\]
Following standard preference-model training, the learner associates to \(R\) the
multinomial logit choice model
\[
P_R(y=k\mid x,\mathbf a)
=
\frac{\exp(R(x,a_k))}
{\sum_{\ell=1}^K \exp(R(x,a_\ell))},
\qquad k=1,\dots,K.
\]
Define the corresponding multinomial log-loss
\[
\ell(\mathbf v,y)
\coloneqq
\log\!\Big(\sum_{k=1}^K e^{v_k}\Big)-v_y,
\qquad
\mathbf v\in\R^K,\ y\in\{1,\dots,K\}.
\]
Thus the empirical preference risk after \(t\) rounds is
\begin{equation}
\widehat{\cL}_t(R)
\coloneqq
\frac1t\sum_{s=1}^t
\ell\!\big(\mathbf v_R(x_s,\mathbf a_s),y_s\big)
=
\frac1t\sum_{s=1}^t
\left[
\log\!\Big(\sum_{k=1}^K e^{R(x_s,a_{s,k})}\Big)
-
R(x_s,a_{s,y_s})
\right].
\label{eq:empirical_mnl_risk_setup}
\end{equation}
At the end of round \(t\ge 1\), the ERM fits the current reward estimate by
empirical risk minimization:
\begin{equation}
\widehat R_t
\in
\arg\min_{R\in\cF}\widehat{\cL}_t(R).
\label{eq:np_exact_erm}
\end{equation}
We also set \(\widehat R_0\equiv 0\), so that the initial policy equals the reference model. 

\subsection{Greedy policy improvement}

Fix a tilt parameter \(\eta>0\). Given the estimated reward \(\widehat R_t\), the next deployed policy
is the KL-tilted exponential reweighting of the reference policy:
\begin{equation}
\pi_{t+1}(a\mid x)
=
\frac{\pi_0(a\mid x)\exp\!\big(\eta\,\widehat R_t(x,a)\big)}
{\int_{\cA}\pi_0(a'\mid x)\exp\!\big(\eta\,\widehat R_t(x,a')\big)\,da'}.
\label{eq:np_greedy_policy_update}
\end{equation}
We initialize
\[
\pi_1\gets \pi_0.
\]

Equation~\eqref{eq:np_greedy_policy_update} is exactly the one-step greedy KL-regularized
improvement of the current reward estimate. Indeed, for each fixed context \(x\),
\(\pi_{t+1}(\cdot\mid x)\) is the unique maximizer over \(\pi(\cdot\mid x)\ll\pi_0(\cdot\mid x)\) of
\[
\EE_{A\sim \pi(\cdot\mid x)}[\widehat R_t(x,A)]
-
\frac1\eta \KL\!\big(\pi(\cdot\mid x)\,\|\,\pi_0(\cdot\mid x)\big).
\]
Thus the learner is greedy with respect to the current reward estimate, while the KL term keeps deployment
close to the reference model.

\subsection{The greedy online RLHF loop \citep{wu2025greedy}}

Putting the pieces together, the greedy online alignment loop proceeds as follows. Initialize
\[
\widehat R_0\equiv 0,
\qquad
\pi_1\gets \pi_0,
\qquad
\cD_0\gets \emptyset.
\]
For each round \(t=1,\dots,T\):
\begin{enumerate}[label=\arabic*. ,leftmargin=2em]
\item Observe a context \(x_t\sim d_0\).
\item Sample a slate
\[
a_{t,1},\dots,a_{t,K}\overset{\mathrm{i.i.d.}}{\sim}\pi_t(\cdot\mid x_t).
\]
\item Obtain preference feedback
\[
y_t\sim P^\star(\cdot\mid x_t,\mathbf a_t).
\]
\item Update the dataset
\[
\cD_t\gets \cD_{t-1}\cup\{(x_t,a_{t,1},\dots,a_{t,K},y_t)\}.
\]
\item Fit \(\widehat R_t\) by solving \eqref{eq:np_exact_erm}.
\item Deploy the next policy \(\pi_{t+1}\) via \eqref{eq:np_greedy_policy_update}.
\end{enumerate}

This is the concrete online alignment rule studied throughout the paper: repeated preference
collection under the current policy, repeated reward-model fitting on the cumulative dataset, and repeated greedy redeployment relative to a fixed reference model.

\subsection{Equivalent policy-space view: greedy online DPO \citep{guo2024direct}}
\label{sec:np_online_dpo_view}

When the feedback is pairwise (\(K=2\)), the same greedy loop can be written directly in policy
coordinates in the style of direct preference optimization (DPO). For any reward class \(\cF\),
define the induced policy class
\[
\Pi_{\cF}^{\mathrm{DPO}}
\coloneqq
\{\pi_R : R\in\cF\},
\]
where \(\pi_R\) is the KL-tilted policy from \eqref{eq:np_greedy_policy_update}.

Given pairwise data
\[
(x_s,a_{s,1},a_{s,2},y_s),
\qquad
y_s\in\{1,2\},
\]
define the exact online DPO empirical loss of a policy \(\pi\in\Pi_{\cF}^{\mathrm{DPO}}\) by
\begin{equation}
\widehat{\cL}^{\mathrm{DPO}}_t(\pi)
\coloneqq
\frac1t\sum_{s=1}^t
-\log \sigma\!\left(
\frac1\eta
\left[
\log\frac{d\pi(\cdot\mid x_s)}{d\pi_0(\cdot\mid x_s)}(a_{s,y_s})
-
\log\frac{d\pi(\cdot\mid x_s)}{d\pi_0(\cdot\mid x_s)}(a_{s,3-y_s})
\right]
\right),
\label{eq:np_dpo_empirical_risk}
\end{equation}
where \(\sigma(u)\coloneqq (1+e^{-u})^{-1}\).

The exact online DPO update is then
\begin{align}
    \widehat\pi_t
\in
\arg\min_{\pi\in\Pi_{\cF}^{\mathrm{DPO}}}
\widehat{\cL}^{\mathrm{DPO}}_t(\pi),
\qquad
\pi_{t+1}\gets \widehat\pi_t \label{eq:DPOupdate}
\end{align}

This is not a different algorithm. Indeed, if \(\pi=\pi_R\), then for every context \(x\) and
action \(a\),
\[
\frac1\eta
\log\frac{d\pi_R(\cdot\mid x)}{d\pi_0(\cdot\mid x)}(a)
=
R(x,a)
-
\frac1\eta
\log\!\int_{\cA}\pi_0(a'\mid x)e^{\eta R(x,a')}\,da'.
\]
Hence the normalization term cancels in pairwise differences, and for every realized example
\((x_s,a_{s,1},a_{s,2},y_s)\),
\[
\frac1\eta
\left[
\log\frac{d\pi_R(\cdot\mid x_s)}{d\pi_0(\cdot\mid x_s)}(a_{s,y_s})
-
\log\frac{d\pi_R(\cdot\mid x_s)}{d\pi_0(\cdot\mid x_s)}(a_{s,3-y_s})
\right]
=
R(x_s,a_{s,y_s})-R(x_s,a_{s,3-y_s}).
\]
Therefore
\[
-\log \sigma\!\big(R(x_s,a_{s,y_s})-R(x_s,a_{s,3-y_s})\big)
=
\log\!\Big(e^{R(x_s,a_{s,1})}+e^{R(x_s,a_{s,2})}\Big)
-
R(x_s,a_{s,y_s}),
\]
so
\[
\widehat{\cL}^{\mathrm{DPO}}_t(\pi_R)=\widehat{\cL}_t(R)
\qquad
\text{when }K=2.
\]

\begin{comment}

Moreover, the implicit DPO reward
\[
a\longmapsto \frac1\eta\log\frac{d\pi(\cdot\mid x)}{d\pi_0(\cdot\mid x)}(a)
\]
has the same temperature-zero ranking as \(R(x,\cdot)\), because the two differ only by an
\(a\)-independent normalization term. Consequently, the temperature-zero guarantees proved below
immediately yield a corollary for exact online DPO.
\end{comment}

\section{Temperature-zero regret of greedy online alignment methods}
\label{sec:nonpersonalized_online_alignment}

We now formally introduce the temperature-zero regret criterion and analyze the greedy online alignment methods from
Section~\ref{sec:np_greedy_alignment_procedure}. For the rest of this section, we assume that \(\cA\) is a separable metric space and interpret
\(\supp(\pi_0(\cdot\mid x))\) as the topological support of \(\pi_0(\cdot\mid x)\). Throughout this section, we work with \(\pi_0\)-centered rewards.
This is without loss of generality: subtracting, for each context \(x\), an \(a\)-independent constant from a reward leaves
the MNL choice probabilities, the KL-tilted policy, and the temperature-zero selector unchanged;
see Lemma~\ref{lem:pi0_centering_wlog_psi} of Appendix \ref{app:proofs_np_online_alignment}.

\subsection{Temperature-zero regret}

For any centered reward \(R:\cX\times\cA\to\R\), fix a measurable selector
\[
a_R(x)\in \arg\max_{a\in \supp(\pi_0(\cdot\mid x))} R(x,a).
\]
This measurable tie-breaking convention is used throughout.
For truth \(p\in\mathcal P\), write
\[
a_p(x)\coloneqq a_{R_p}(x).
\]

For any centered reward \(R\), define the truth-centered expected \textit{one-step temperature-zero regret}
under truth \(p\) by
\begin{equation}
\mathcal G_p(R)
\coloneqq
\EE_{X\sim d_0}
\Big[
R_p\big(X,a_p(X)\big)-R_p\big(X,a_R(X)\big)
\Big].
\label{eq:np_truth_centered_expected_regret}
\end{equation}

A learning rule \(\mathsf A\) produces reward estimates
\[
\widehat R_0^{\mathsf A},\widehat R_1^{\mathsf A},\widehat R_2^{\mathsf A},\dots,
\]
where \(\widehat R_t^{\mathsf A}\) is fitted from the first \(t\) rounds of data and is deployed on
round \(t+1\). Its expected \textit{cumulative temperature-zero regret} under truth \(p\in\mathcal P\) is
\begin{equation}
\Regret_{0,p}^{\mathsf A}(T)
\coloneqq
\sum_{t=0}^{T-1}\EE_p\big[\mathcal G_p(\widehat R_t^{\mathsf A})\big].
\label{eq:np_expected_cumulative_regret}
\end{equation}

\begin{remark}
\emph{Temperature-zero} regret is simply a traditional notion of regret commonly used in the bandit literature. 
By contrast, the KL-regularized regret studied in \citet{wu2025greedy} evaluates the full finite-temperature policy itself,
and therefore also counts loss coming from randomization.
\end{remark}

\subsection{Model class}
\label{sec:np_model_class_eval}

Fix a compact class of possible truths \(\mathcal P\). Each \(p\in\mathcal P\) induces a centered
measurable reward function
\[
R_p:\cX\times\cA\to\R.
\]
Let
\[
\mathcal F_{\mathcal P}\coloneqq \{R_p:p\in\mathcal P\}.
\]
Under truth \(p\), the preference feedback on a realized slate
\(\mathbf a=(a_1,\dots,a_K)\in \cA^K\) follows the MNL model
\[
P_{R_p}(y=k\mid x,\mathbf a)
=
\frac{\exp(R_p(x,a_k))}
{\sum_{\ell=1}^K \exp(R_p(x,a_\ell))},
\qquad
k=1,\dots,K.
\]

For \(x\in\cX\), write
\[
S_x\coloneqq \supp(\pi_0(\cdot\mid x)).
\]

We impose the following regularity conditions.

\begin{enumerate}[label=(C\arabic*),leftmargin=2em]
\item \label{as:NP1} \textbf{Compact continuous reward class.}
The induced reward class \(\mathcal F_{\mathcal P}\) is compact under
\(\|\cdot\|_{\infty,\supp(\pi_0)}\). Moreover, for every \(R\in\mathcal F_{\mathcal P}\),
the map
\[
a\longmapsto R(x,a)
\]
is continuous on \(S_x\) for \(d_0\)-a.e.\ \(x\in\cX\).

\item \label{as:NP2} \textbf{Reward gap on \(\mathcal P\).}
There exists \(\Delta_{\min}^{\mathcal P}>0\) such that for every truth \(p\in\mathcal P\),
\(a_p(X)\) is unique and
\[
R_p\big(X,a_p(X)\big)
-
\sup_{a\in S_X,\ a\neq a_p(X)}
R_p(X,a)
\ge
\Delta_{\min}^{\mathcal P}
\qquad
d_0\text{-a.s.}
\]
Define also
\[
\Delta_{\max}^{\mathcal P}
\coloneqq
\sup_{p\in\mathcal P}
\sup_{x\in\cX}
\sup_{a\in S_x}
\Big(
R_p(x,a_p(x))-R_p(x,a)
\Big).
\]
\end{enumerate}
Condition \ref{as:NP1} is a compactness-based support-local regularity condition that subsumes the finite-class and bounded-covering-number assumptions commonly used in recent online alignment analyses \citep{xiong2024iterative, ye2024online, wu2025greedy}. 
Continuity in Condition~\ref{as:NP1} is mild for standard parametric reward classes,
including bounded-parameter linear and neural reward families with continuous activations; see Appendix~\ref{app:linear_neural_examples}.
Condition \ref{as:NP2} is a standard\footnote{Mathematically, it can be interpreted as a selector-stability assumption: it guarantees that the top-ranked action is locally robust to small reward perturbations.  In bandit problems, such positive gaps typically appear with logarithmic instance-dependent regret, and bounded or sub-logarithmic regret arises only under additional self-exploration structures, such as optimal-arm spanning, HLS, covariate diversity, smoothed contexts, or local anti-concentration \citep{hao2020adaptive,papini2021leveraging,tirinzoni2023complexity,bastani2021mostly,kannan2018smoothed,raghavan2023greedy,kim2024local}.} \footnote{Practically, \cite{wang2024secrets} reports low annotator agreement in practice, and discusses adaptive margins based on preference strength as the solution in practice. \cite{qin2024towards, kim2024margin} argue that preference pairs have heterogeneous strength and that explicit margin information improves reward models and aligned policies.}reward modeling setup in practice  in alignment literature \citep{wang2024secrets}, as we can often collapse imperceptible quality differences, since preference data exhibit heterogeneous strength and ambiguous low-margin pairs are often unreliable or uninformative \citep{qin2024towards, kim2024margin}.

For any centered reward \(R\), define the truth-centered population loss under truth \(p\) by
\begin{equation}
\mathcal L_p(R)
\coloneqq
\EE\big[\ell(\mathbf v_R,Y)\big],
\label{eq:np_truth_centered_population_loss_main}
\end{equation}
where
\[
X\sim d_0,
\qquad
\mathbf A=(A_1,\dots,A_K)\sim \pi_0(\cdot\mid X)^{\otimes K},
\qquad
Y\sim P_{R_p}(\cdot\mid X,\mathbf A),
\]
and
\[
\mathbf v_R
\coloneqq
\big(R(X,A_1),\dots,R(X,A_K)\big).
\]

For the rest of this section, we instantiate the ERM greedy learner from
Section~\ref{sec:np_greedy_alignment_procedure} with \(\cF=\mathcal F_{\mathcal P}\). By
Condition~\ref{as:NP1} and continuity of \(R\mapsto \widehat{\cL}_t(R)\) under
\(\|\cdot\|_{\infty,\supp(\pi_0)}\), the ERM objective in \eqref{eq:np_exact_erm} admits a minimizer
for each \(t\ge 1\); fix a measurable ERM selection rule and continue to write
\(\widehat R_t\) for the resulting reward estimates.

\subsection{Bounded cumulative regret in online RLHF}
\label{sec:np_bounded_regret}

Condition~\ref{as:NP1} implies the uniform envelope
\[
B_{\mathcal P}
\coloneqq
\sup_{R\in\mathcal F_{\mathcal P}}\|R\|_{\infty,\supp(\pi_0)}
<\infty.
\]
In particular,
\[
\Delta_{\max}^{\mathcal P}\le 2B_{\mathcal P}.
\]

Since every reward function in \(\mathcal F_{\mathcal P}\) is bounded by \(B_{\mathcal P}\) on
\(\supp(\pi_0)\), Lemma~\ref{lem:greedy_kl_tilt_lr} in Appendix \ref{app:proofs_np_online_alignment} implies that the deployed policy satisfies
\[
\beta^{-1}
\le
\frac{d\pi_{t+1}(\cdot\mid x)}{d\pi_0(\cdot\mid x)}(a)
\le
\beta,
\qquad
\beta\coloneqq e^{2\eta B_{\mathcal P}},
\]
for \(d_0\)-a.e.\ \(x\) and \(\pi_0(\cdot\mid x)\)-a.s.\ \(a\).

\begin{lemma}[Automatic isolation of positive regret on \(\mathcal F_{\mathcal P}\)]
\label{prop:np_automatic_positive_regret_isolation}
Assume \ref{as:NP1}--\ref{as:NP2}. Then there exists a constant
\[
\varepsilon_{\mathrm{iso}}^{\mathcal P}>0
\]
such that for every \(p,q\in\mathcal P\),
\[
\mathcal G_p(R_q)\in\{0\}\cup[\varepsilon_{\mathrm{iso}}^{\mathcal P},\infty).
\]
\end{lemma}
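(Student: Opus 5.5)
The plan is to reduce the statement to a claim about the \emph{disagreement probability} between two temperature-zero selectors, and then use compactness of \(\mathcal F_{\mathcal P}\) to show that this probability can take only finitely many values.

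\textbf{Step 1: pointwise dichotomy.} Fix \(p,q\in\mathcal P\). Since \(a_q(x)\in S_x\), the integrand in \eqref{eq:np_truth_centered_expected_regret} vanishes whenever \(a_q(x)=a_p(x)\). Whenever \(a_q(x)\neq a_p(x)\), Condition~\ref{as:NP2} shows that the integrand is at least \(\Delta_{\min}^{\mathcal P}\), for \(d_0\)-a.e.\ \(x\). Hence
\[
\mathcal G_p(R_q)\ \ge\ \Delta_{\min}^{\mathcal P}\,\PP_{X\sim d_0}\big(a_q(X)\neq a_p(X)\big),
\qquad
\mathcal G_p(R_q)=0 \iff \PP\big(a_q(X)\neq a_p(X)\big)=0 .
\]
It therefore suffices to show that \(m(p,q)\coloneqq \PP(a_q(X)\neq a_p(X))\) lies in \(\{0\}\cup[\delta,1]\) for some \(\delta>0\) independent of \(p,q\). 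We can then take \(\varepsilon_{\mathrm{iso}}^{\mathcal P}\coloneqq\Delta_{\min}^{\mathcal P}\delta\).

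\textbf{Step 2: selector stability under small perturbations.} Let \(q,q'\in\mathcal P\) satisfy \(\|R_q-R_{q'}\|_{\infty,\supp(\pi_0)}<\Delta_{\min}^{\mathcal P}/2\). I claim that \(a_q(X)=a_{q'}(X)\) \(d_0\)-a.s.

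To see this, take \(x\) outside the (union of the two) null sets in Condition~\ref{as:NP2}, and let \(a\in S_x\) with \(a\neq a_q(x)\). Then
\[
R_{q'}(x,a_q(x)) > R_q(x,a_q(x))-\tfrac{\Delta}{2} \ge R_q(x,a)+\tfrac{\Delta}{2} > R_{q'}(x,a),
\]
where \(\Delta=\Delta_{\min}^{\mathcal P}\). So \(a_q(x)\) strictly beats every other support point under \(R_{q'}\). By the uniqueness of the maximizer for \(q'\), this forces \(a_{q'}(x)=a_q(x)\).

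\textbf{Step 3: finiteness via compactness.} By Condition~\ref{as:NP1}, \(\mathcal F_{\mathcal P}\) is compact in \(\|\cdot\|_{\infty,\supp(\pi_0)}\). Cover it by finitely many open balls of radius \(\Delta_{\min}^{\mathcal P}/4\), and pick a representative truth \(r_j\) for each nonempty ball, \(j=1,\dots,N\).

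Any two members of the same ball are within \(\Delta_{\min}^{\mathcal P}/2\) of each other. By Step 2, every \(q\) in ball \(j\) therefore has \(a_q=a_{r_j}\) \(d_0\)-a.s. Consequently \(m(p,q)=m(r_i,r_j)\) whenever \(p\) lies in ball \(i\) and \(q\) in ball \(j\). So \(m\) takes values in the finite set \(\{m(r_i,r_j)\}_{i,j\le N}\).

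Let \(\delta\) be the smallest positive element of this set, and set \(\delta=1\) if there is none. Step 1 then concludes the proof.

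\textbf{Main obstacle.} The only delicate point is Step 2. There we must make sure that the a.s.\ qualifiers and the selector convention mesh. Condition~\ref{as:NP2} gives uniqueness of \(a_p(x)\) together with a strict gap, so the measurable tie-breaking rule is irrelevant for truths in \(\mathcal P\). Only finitely many null sets (those for the representatives \(r_j\) and for the given \(p,q\)) are ever combined, so no uncountable-union issue arises.

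If the supremum over \(S_x\setminus\{a_p(x)\}\) is not attained, the argument is unaffected, because only strict inequalities against every competitor \(a\) are used.
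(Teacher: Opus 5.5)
Your proposal is correct and follows essentially the same route as the paper. It reduces to the disagreement mass $d_0\{a_q\neq a_p\}$ via the gap (Lemma~\ref{lem:np_regret_vs_disagreement}), uses $\Delta_{\min}^{\mathcal P}/2$-stability of the selector (Lemma~\ref{lem:np_local_stability_selector}), and gets finitely many selector classes from compactness, setting $\varepsilon_{\mathrm{iso}}^{\mathcal P}=\Delta_{\min}^{\mathcal P}\delta$. The only cosmetic difference is that you obtain finiteness directly from a finite cover by $\Delta_{\min}^{\mathcal P}/4$-balls, whereas the paper argues by contradiction that infinitely many classes would give an infinite $\Delta_{\min}^{\mathcal P}/2$-separated subset of the totally bounded class $\mathcal F_{\mathcal P}$.
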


\begin{lemma}[Zero truth-centered loss identifies the reward on the reference support]
\label{lem:np_zero_loss_implies_zero_regret}
Assume \ref{as:NP1}. Then for every truth \(p\in\mathcal P\) and every \(R\in\mathcal F_{\mathcal P}\),
\[
\mathcal L_p(R)=\mathcal L_p(R_p)
\quad\Longrightarrow\quad
R(x,a)=R_p(x,a)
\qquad
\forall a\in S_x,
\quad
d_0\text{-a.e.\ }x.
\]
In particular,
\[
\mathcal G_p(R)=0.
\]
\end{lemma}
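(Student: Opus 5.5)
The plan is to write the excess loss as an expected KL divergence between MNL choice distributions, conclude that this divergence vanishes almost surely, and then convert a.s.\ equality of choice probabilities into pointwise equality of centered rewards on \(S_x\) using continuity and the definition of topological support.

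\textbf{Step 1: excess loss as a KL divergence.} Fix \(x\) and a slate \(\mathbf a\). Since \(Y\sim P_{R_p}(\cdot\mid x,\mathbf a)\) and \(\ell(\mathbf v_R,y)=-\log P_R(y\mid x,\mathbf a)\), the conditional excess loss is
\[
\EE[\ell(\mathbf v_R,Y)-\ell(\mathbf v_{R_p},Y)\mid x,\mathbf a]=\KL\big(P_{R_p}(\cdot\mid x,\mathbf a)\,\|\,P_R(\cdot\mid x,\mathbf a)\big)\ge 0 .
\]
All quantities are bounded by \(B_{\mathcal P}\) on \(\supp(\pi_0)\), so integrability is not an issue. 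Therefore
\[
\mathcal L_p(R)-\mathcal L_p(R_p)=\EE_{X,\mathbf A}\big[\KL(P_{R_p}\|P_R)\big].
\]
If this is zero, the KL vanishes for \(d_0\otimes\pi_0^{\otimes K}\)-a.e.\ \((x,\mathbf a)\). This gives \(P_R=P_{R_p}\) on such slates.

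\textbf{Step 2: from equal choice probabilities to equal reward differences.} Equality of MNL probabilities means \(R(x,a_1)-R(x,a_2)=R_p(x,a_1)-R_p(x,a_2)\), using the ratio of coordinates 1 and 2 (note \(K\ge2\)). Set \(D(x,a)\coloneqq R(x,a)-R_p(x,a)\). Then for \(d_0\)-a.e.\ \(x\), \(D(x,a_1)=D(x,a_2)\) for \(\pi_0(\cdot\mid x)^{\otimes 2}\)-a.e.\ \((a_1,a_2)\), by marginalizing out the remaining slate coordinates and applying Fubini. Fubini also yields a single \(a_2\) for which the identity holds for a.e.\ \(a_1\). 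Hence \(D(x,\cdot)=c(x)\) holds \(\pi_0(\cdot\mid x)\)-a.s.

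\textbf{Step 3: extend to all of \(S_x\) and use centering (the main obstacle).} By \ref{as:NP1}, \(D(x,\cdot)\) is continuous on \(S_x\) for a.e.\ \(x\). In a separable metric space, every open neighborhood of a point of the topological support has positive \(\pi_0\)-mass, and \(\pi_0(S_x)=1\). So the set \(\{a\in S_x: D(x,a)=c(x)\}\) is dense in \(S_x\), and by continuity \(D(x,\cdot)\equiv c(x)\) on \(S_x\).

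Both rewards are \(\pi_0\)-centered, so \(\int D(x,a)\,\pi_0(da\mid x)=0\), which forces \(c(x)=0\). This proves \(R=R_p\) on \(S_x\) for \(d_0\)-a.e.\ \(x\). The care needed here is the intersection of countably many null sets: the a.e.\ \(x\) for continuity and the a.e.\ \(x\) from Fubini.

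\textbf{Step 4: regret.} Since \(R(x,\cdot)=R_p(x,\cdot)\) on \(S_x\), the two rewards have the same argmax set. By \ref{as:NP2} (if assumed), the maximizer is unique, so \(a_R(x)=a_p(x)\). Even without uniqueness, any selector of the argmax of \(R(x,\cdot)\) over \(S_x\) attains \(\max_{S_x} R_p(x,\cdot)\). Either way, the integrand of \(\mathcal G_p(R)\) is zero a.s., so \(\mathcal G_p(R)=0\).
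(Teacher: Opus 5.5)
Your proposal is correct and matches the paper's proof step for step. The steps are: excess loss as an expected MNL KL divergence, equality of the coordinate-1/coordinate-2 odds, marginalizing to a pairwise a.s.\ identity, constancy of $R-R_p$ holding $\pi_0$-a.s., centering, and the continuity-plus-topological-support upgrade to all of $S_x$. The differences are cosmetic: the paper gets constancy from $\EE[(h_x(A)-h_x(B))^2]=2\Var(h_x(A))$ rather than fixing a good $a_2$ via Fubini, and it applies centering before the support upgrade. Your last step, which uses only that any maximizer of $R(x,\cdot)$ on $S_x$ attains $\max_{S_x}R_p(x,\cdot)$, is if anything slightly more robust than the paper's appeal to a common tie-breaking convention.
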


\begin{lemma}[Automatic verification of the truth-centered loss gap]
\label{lem:np_automatic_fixed_scale_gap}
Assume \ref{as:NP1}--\ref{as:NP2}. Fix any \(\varepsilon_0>0\).
Then there exists a finite constant
\[
\gamma_{\mathcal P,\varepsilon_0}>0
\]
such that for every pair \(p,q\in\mathcal P\),
\[
\mathcal G_p(R_q)\ge \varepsilon_0
\quad\Longrightarrow\quad
\mathcal L_p(R_q)-\mathcal L_p(R_p)\ge \gamma_{\mathcal P,\varepsilon_0}.
\]
\end{lemma}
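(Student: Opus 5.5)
We argue by contradiction and compactness. Suppose no such \(\gamma_{\mathcal P,\varepsilon_0}>0\) exists. Then there are pairs \((p_n,q_n)\in\mathcal P\times\mathcal P\) with
\[
\mathcal G_{p_n}(R_{q_n})\ge\varepsilon_0
\qquad\text{and}\qquad
\mathcal L_{p_n}(R_{q_n})-\mathcal L_{p_n}(R_{p_n})\to 0 .
\]
By Condition~\ref{as:NP1}, \(\mathcal F_{\mathcal P}\) is compact under \(\|\cdot\|_{\infty,\supp(\pi_0)}\). Passing to a subsequence, we may therefore assume \(R_{p_n}\to R_{\bar p}\) and \(R_{q_n}\to R_{\bar q}\) uniformly on the reference support, for some \(\bar p,\bar q\in\mathcal P\).

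\textbf{Continuity of the loss.} The map \((p,R)\mapsto \mathcal L_p(R)\) is jointly continuous in the sup-norm topology. To see this, write
\[
\mathcal L_p(R)=\EE_{X,\mathbf A}\Big[\sum_k P_{R_p}(k\mid X,\mathbf A)\,\ell(\mathbf v_R,k)\Big].
\]
The integrand is uniformly bounded, using the envelope \(B_{\mathcal P}\): \(\ell\le \log K+2B_{\mathcal P}\). It is also Lipschitz in both \(\mathbf v_{R_p}\) and \(\mathbf v_R\) under \(\|\cdot\|_\infty\), because both softmax and log-sum-exp are \(1\)-Lipschitz in sup norm. Only values on \(S_X\) enter, since \(\mathbf A\sim\pi_0^{\otimes K}\). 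Hence \(|\mathcal L_p(R)-\mathcal L_{p'}(R')|\le C(\|R_p-R_{p'}\|+\|R-R'\|)\). Passing to the limit gives \(\mathcal L_{\bar p}(R_{\bar q})=\mathcal L_{\bar p}(R_{\bar p})\).

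\textbf{Identification of the limit.} Lemma~\ref{lem:np_zero_loss_implies_zero_regret} now yields \(R_{\bar q}=R_{\bar p}\) on \(S_x\) for \(d_0\)-a.e.\ \(x\).

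\textbf{Contradiction from the regret bound.} It remains to contradict \(\mathcal G_{p_n}(R_{q_n})\ge\varepsilon_0\). This step uses the gap condition~\ref{as:NP2} and is the main obstacle, because the selector \(a_R\) is discontinuous in \(R\) in general.

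For large \(n\), let \(\delta_n\coloneqq\|R_{p_n}-R_{\bar p}\|_\infty+\|R_{q_n}-R_{\bar p}\|_\infty\). Then \(\delta_n<\Delta^{\mathcal P}_{\min}/2\) for all large \(n\). The truth \(p_n\) has a unique maximizer \(a_{p_n}(x)\) with gap at least \(\Delta_{\min}^{\mathcal P}\). Any \(a\in S_x\) with \(a\neq a_{p_n}(x)\) then satisfies
\[
R_{q_n}(x,a)\le R_{p_n}(x,a)+\delta_n\le R_{p_n}(x,a_{p_n}(x))-\Delta_{\min}^{\mathcal P}+\delta_n < R_{q_n}(x,a_{p_n}(x)),
\]
where the last step uses \(R_{q_n}(x,a_{p_n}(x))\ge R_{p_n}(x,a_{p_n}(x))-\delta_n\) and \(2\delta_n<\Delta_{\min}^{\mathcal P}\). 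So the maximizer over \(S_x\) of \(R_{q_n}(x,\cdot)\) is unique and equals \(a_{p_n}(x)\), and this holds regardless of the tie-breaking selector, for \(d_0\)-a.e.\ \(x\). Consequently \(\mathcal G_{p_n}(R_{q_n})=0<\varepsilon_0\) for large \(n\), which is the desired contradiction.

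Here the maximum over \(S_x\) exists by continuity (Condition~\ref{as:NP1}). If \(S_x\) is not compact, one works with the supremum, and the strict gap still forces the selector to land at \(a_{p_n}(x)\). The delicate point is the measure-zero exceptional sets. Because there are countably many \(n\), their union is still null, so the argument goes through almost surely.
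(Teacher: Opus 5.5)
Your proof is correct and is essentially the paper's argument. Both use the same ingredients: compactness of $\mathcal F_{\mathcal P}^2$, continuity of the excess loss, Lemma~\ref{lem:np_zero_loss_implies_zero_regret} at the limit pair, and gap-based stability of the selector. The paper packages the last step as continuity of $(R^\star,R)\mapsto\mathcal G_{R^\star}(R)$, so that the set of pairs with regret at least $\varepsilon_0$ is compact and the excess loss attains a positive minimum on it. You run the same ingredients sequentially and pull the identification back to show $\mathcal G_{p_n}(R_{q_n})=0$ eventually.

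One small repair is needed. Define $\delta_n$ using $\|R_{q_n}-R_{\bar q}\|_{\infty,\supp(\pi_0)}$ rather than $\|R_{q_n}-R_{\bar p}\|_{\infty,\supp(\pi_0)}$. The identity $R_{\bar q}=R_{\bar p}$ is only known on $S_x$ for $d_0$-a.e.\ $x$, so the sup-norm distance between them need not vanish. Your chain of inequalities only uses the pointwise bound $|R_{q_n}(x,a)-R_{p_n}(x,a)|\le\delta_n$ on that full-measure set of contexts, and this bound holds with the corrected $\delta_n$.
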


\begin{theorem}[Bounded \(O(1)\) regret for online RLHF]
\label{thm:np_bounded_regret}
Assume \ref{as:NP1}--\ref{as:NP2}. Then
\[
\sup_{p\in\mathcal P}\sup_{T\ge 1}
\Regret_{0,p}^{\mathrm{ERM}}(T)<\infty.
\]
More precisely, set
\[
\varepsilon_0\coloneqq \varepsilon_{\mathrm{iso}}^{\mathcal P},
\]
let
\[
\gamma\coloneqq \beta^{-K}\gamma_{\mathcal P,\varepsilon_0},
\qquad
N_\gamma
\coloneqq
\mathcal N\!\big(\mathcal F_{\mathcal P},\gamma/32,\|\cdot\|_{\infty,\supp(\pi_0)}\big),
\qquad
c_\gamma\coloneqq \frac{\gamma^2}{128(\log K+2B_{\mathcal P})^2}
\]
where \(\mathcal N(\mathcal F,\varepsilon,\|\cdot\|)\) denotes the \(\varepsilon\)-covering number of
\(\mathcal F\) under the displayed metric. Then
\[
\sup_{p\in\mathcal P}\sup_{T\ge 1}
\Regret_{0,p}^{\mathrm{ERM}}(T)
\le
\Delta_{\max}^{\mathcal P}
\left(
1
+
\left\lceil \frac{1}{c_\gamma}\log(2N_\gamma)\right\rceil
+
\frac{1}{e^{c_\gamma}-1}
\right).
\]
\end{theorem}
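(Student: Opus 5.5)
The plan is to show that the ERM $\widehat R_t$ can have nonzero temperature-zero regret only on a \emph{bad event} whose probability decays geometrically in $t$, and then to sum. Fix $p\in\mathcal P$. Since $\widehat R_t=R_q$ for some $q\in\mathcal P$, Lemma~\ref{prop:np_automatic_positive_regret_isolation} gives $\mathcal G_p(\widehat R_t)\in\{0\}\cup[\varepsilon_0,\infty)$ with $\varepsilon_0=\varepsilon_{\mathrm{iso}}^{\mathcal P}$. Moreover $\mathcal G_p(\cdot)\le\Delta_{\max}^{\mathcal P}$ always. Hence
\[
\EE_p[\mathcal G_p(\widehat R_t)]\le \Delta_{\max}^{\mathcal P}\,\PP_p\big(\mathcal G_p(\widehat R_t)\ge\varepsilon_0\big).
\]
Lemma~\ref{lem:np_automatic_fixed_scale_gap} then says that on this event $\mathcal L_p(\widehat R_t)-\mathcal L_p(R_p)\ge\gamma_{\mathcal P,\varepsilon_0}$. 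The round $t=0$ (where $\widehat R_0\equiv0$) is charged trivially as $\Delta_{\max}^{\mathcal P}$, which accounts for the leading ``$1$''.

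Next I would transfer the gap from the $\pi_0$-sampled population loss to the on-policy data. Write $m_s(R)\coloneqq\EE[\ell(\mathbf v_R,Y_s)-\ell(\mathbf v_{R_p},Y_s)\mid\cF_{s-1},x_s]$ for the conditional excess loss at round $s$, where $\mathbf a_s\sim\pi_s(\cdot\mid x_s)^{\otimes K}$. Conditioning on $(x,\mathbf a)$, this excess equals $\KL(P_{R_p}\|P_R)\ge0$ pointwise. Because $d\pi_s/d\pi_0\ge\beta^{-1}$, the product density of the slate is at least $\beta^{-K}$ relative to $\pi_0^{\otimes K}$. Therefore $\EE_{x_s}[m_s(R)]\ge\beta^{-K}(\mathcal L_p(R)-\mathcal L_p(R_p))$. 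So every $R_q$ with $\mathcal G_p(R_q)\ge\varepsilon_0$ has average conditional excess $\frac1t\sum_s \bar m_s(R_q)\ge\gamma=\beta^{-K}\gamma_{\mathcal P,\varepsilon_0}$, while ERM guarantees $\widehat{\cL}_t(\widehat R_t)-\widehat{\cL}_t(R_p)\le0$.

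The concentration step is the main obstacle. The data are adaptive, so I would use a martingale argument rather than i.i.d.\ concentration. For each fixed $R$ in a $\gamma/32$-cover of $\mathcal F_{\mathcal P}$ (size $N_\gamma$), the increments $\ell(\mathbf v_R,y_s)-\ell(\mathbf v_{R_p},y_s)-\bar m_s(R)$ form a martingale difference sequence. Since $|\ell|$ differences are bounded by $2(\log K+2B_{\mathcal P})$-type constants, Azuma--Hoeffding gives deviation probability at most $\exp(-c_\gamma t)$ for a deviation of order $\gamma/2$, after a union bound over the cover and both tails (giving $2N_\gamma$). The log-loss is $1$-Lipschitz in each coordinate in sup-norm, with $\ell$ changing by at most $2\|R-R'\|_\infty$. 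Discretization therefore costs $O(\gamma/16)$ in both empirical and conditional excess, and one must check that the ERM cannot have empirical excess $\le0$ while its cover neighbour has conditional average $\ge\gamma-\gamma/16$ without a deviation $>\gamma/2$. The constants $32$ and $128$ should come out of this bookkeeping. The result is $\PP_p(\mathcal G_p(\widehat R_t)\ge\varepsilon_0)\le\min\{1,2N_\gamma e^{-c_\gamma t}\}$, uniformly in $p$.

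Finally, sum over $t$. For $t\le t_0\coloneqq\lceil c_\gamma^{-1}\log(2N_\gamma)\rceil$ I bound the probability by $1$. For $t>t_0$ I use $2N_\gamma e^{-c_\gamma t}\le e^{-c_\gamma(t-t_0)}$, whose geometric sum is at most $1/(e^{c_\gamma}-1)$. Multiplying by $\Delta_{\max}^{\mathcal P}$ yields the stated bound, independent of $T$ and $p$.
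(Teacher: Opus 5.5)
Your proposal is correct and follows essentially the same route as the paper. It uses isolation of positive regret, the fixed-scale loss gap, the $\beta^{-K}$ slate-domination transfer to on-policy data, Azuma--Hoeffding with a union bound over a $\gamma/32$-cover, and the same $t_0$/geometric-tail summation with round $t=0$ charged as $\Delta_{\max}^{\mathcal P}$. The only cosmetic difference is in the concentration step: you concentrate the excess loss $\ell_s(R')-\ell_s(R_p)$ of the cover neighbour one-sidedly, which needs a deviation of about $7\gamma/8$. The paper instead bounds the two-sided raw-loss deviation $b_t=\sup_R|\widehat{\mathcal L}_t(R)-\mathcal L_t(R)|$ and uses $\mathcal L_t(\widehat R_t)-\mathcal L_t(R_p)\le 2b_t$ to force $b_t\ge\gamma/4$. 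Your bookkeeping yields an exponent at least $c_\gamma$, so it implies the stated constants.
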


\begin{proof}[Proof sketch.]
Conditions~\ref{as:NP1}--\ref{as:NP2} imply that, within \(\mathcal F_{\mathcal P}\), positive temperature-zero regret is
isolated away from zero. Lemma~\ref{lem:np_zero_loss_implies_zero_regret} shows that zero excess population loss can occur
only when the learned reward agrees with the truth on the entire reference support, hence only when the temperature-zero regret
is zero. Compactness and continuity therefore turn any fixed positive regret level into a fixed positive population-loss gap.
Since the KL tilt yields a uniform likelihood-ratio lower bound with respect to \(\pi_0\), the same fixed loss gap persists under
the on-policy slate distribution actually observed, up to a constant factor. Exact ERM and uniform concentration then give an
exponentially small probability of an \(\varepsilon_0\)-substantial iterate, and summing over time yields bounded cumulative regret.
Detailed proof is deferred to Appendix~\ref{app:proofs_np_online_alignment}.
\end{proof}

\subsection{Bounded \(O(1)\) regret for online DPO}

Specialize to pairwise feedback \(K=2\), and let \((\widehat\pi_t)_{t\ge 0}\) denote the exact online
DPO iterates from \eqref{eq:DPOupdate} with \(\cF=\mathcal F_{\mathcal P}\). Define the extended class
\[
\Pi_{\mathcal P}^{\mathrm{DPO},0}
\coloneqq
\Pi_{\mathcal P}^{\mathrm{DPO}}\cup\{\pi_0\}.
\]
For \(\pi\in\Pi_{\mathcal P}^{\mathrm{DPO}}\), choose any \(R\in\mathcal F_{\mathcal P}\) such that
\(\pi=\pi_R\), and define
\[
a_\pi(x)\coloneqq a_R(x).
\]
This is well defined \(d_0\)-a.s.\ because any two such rewards differ on \(S_x\) only by an
\(a\)-independent constant on \(S_x\) for \(d_0\)-a.e.\ \(x\); see the proof of
Theorem~\ref{thm:np_bounded_regret_dpo}. For \(\pi=\pi_0\), define
\[
a_{\pi_0}(x)\coloneqq a_0(x),
\]
where \(0\) denotes the zero reward and the same measurable tie-breaking convention is used.
Define, for \(\pi\in\Pi_{\mathcal P}^{\mathrm{DPO},0}\),
\[
\mathcal G_p^{\mathrm{DPO}}(\pi)
\coloneqq
\EE_{X\sim d_0}
\Big[
R_p(X,a_p(X)) - R_p(X,a_\pi(X))
\Big].
\]

\begin{theorem}[Bounded temperature-zero regret for online DPO]
\label{thm:np_bounded_regret_dpo}
Assume \ref{as:NP1}--\ref{as:NP2}, and specialize to pairwise feedback \(K=2\).
Then
\[
\sup_{p\in\mathcal P}\sup_{T\ge 1}
\sum_{t=0}^{T-1}\EE_p\!\left[\mathcal G_p^{\mathrm{DPO}}(\widehat\pi_t)\right]
<
\infty.
\]
More precisely, with the same constants
\[
\varepsilon_0,\ \gamma_{\mathcal P,\varepsilon_0},\ \gamma,\ N_\gamma,\ c_\gamma
\]
as in Theorem~\ref{thm:np_bounded_regret} specialized to \(K=2\),
\[
\sup_{p\in\mathcal P}\sup_{T\ge 1}
\sum_{t=0}^{T-1}\EE_p\!\left[\mathcal G_p^{\mathrm{DPO}}(\widehat\pi_t)\right]
\le
\Delta_{\max}^{\mathcal P}
\left(
1
+
\left\lceil \frac{1}{c_\gamma}\log(2N_\gamma)\right\rceil
+
\frac{1}{e^{c_\gamma}-1}
\right).
\]
\end{theorem}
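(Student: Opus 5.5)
The plan is to reduce the DPO statement to Theorem~\ref{thm:np_bounded_regret} by showing that, for $K=2$, the online DPO iterates are exactly the KL-tilted policies of ERM reward iterates. Once this is established, the temperature-zero selectors $a_{\widehat\pi_t}$ and $a_{\widehat R_t}$ agree $d_0$-a.s. The first step is to justify well-definedness of $a_\pi$. Suppose $\pi_R=\pi_{R'}$ for $R,R'\in\mathcal F_{\mathcal P}$. Then the identity displayed in Section~\ref{sec:np_online_dpo_view} gives
\[
R(x,a)-R'(x,a)=\tfrac1\eta\log\frac{Z_R(x)}{Z_{R'}(x)}
\]
for $\pi_0(\cdot\mid x)$-a.e.\ $a$. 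By continuity on $S_x$ from \ref{as:NP1}, and because a $\pi_0$-full-measure subset is dense in the topological support of a measure on a separable metric space, this identity holds for every $a\in S_x$. Adding a constant does not change the argmax set, so under the fixed measurable tie-breaking convention (which depends only on the argmax set, or more robustly because \ref{as:NP2}-type uniqueness is not needed here) $a_R=a_{R'}$ for $d_0$-a.e.\ $x$. If tie-breaking is a concern, centering (Lemma~\ref{lem:pi0_centering_wlog_psi}) makes $R=R'$ on $S_x$ outright, since both are $\pi_0$-centered and differ by a constant.

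The second step is the coupling of the two algorithms. By the identity $\widehat{\cL}^{\mathrm{DPO}}_t(\pi_R)=\widehat{\cL}_t(R)$ derived in Section~\ref{sec:np_online_dpo_view}, $\pi$ minimizes $\widehat{\cL}^{\mathrm{DPO}}_t$ over $\Pi^{\mathrm{DPO}}_{\mathcal P}$ if and only if $\pi=\pi_R$ for some $R$ minimizing $\widehat{\cL}_t$ over $\mathcal F_{\mathcal P}$. I will choose the measurable DPO selection as $\widehat\pi_t=\pi_{\widehat R_t}$; in fact any selection works, since the bound argument below only uses that the chosen $R$ is some ERM. Under this coupling the deployed policies coincide, $\pi_{t+1}=\widehat\pi_t$, so the data-generating processes have the same law. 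The base case is $\widehat\pi_0=\pi_0$ with $a_{\pi_0}=a_0=a_{\widehat R_0}$. Consequently
\[
\mathcal G_p^{\mathrm{DPO}}(\widehat\pi_t)=\mathcal G_p(\widehat R_t)
\]
almost surely for every $t$, and summing gives exactly $\Regret^{\mathrm{ERM}}_{0,p}(T)$.

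The final step is to invoke Theorem~\ref{thm:np_bounded_regret} with $K=2$ to obtain the stated constant.

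The main obstacle is the subtlety that the DPO minimizer may be represented by an ERM reward differing from the one the ERM selection rule would pick. To handle it, I will note that the proof of Theorem~\ref{thm:np_bounded_regret} uses only two facts: that the iterate is some empirical minimizer in $\mathcal F_{\mathcal P}$, via the uniform concentration and loss-gap arguments of Lemma~\ref{lem:np_automatic_fixed_scale_gap}, and the likelihood-ratio bound with $\beta$. Both facts hold for any selection, so the bound transfers verbatim.
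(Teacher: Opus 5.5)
Your proposal is correct and matches the paper's proof essentially step for step. Like the paper, you prove well-definedness of $a_\pi$ by noting that the Radon--Nikodym derivatives agree almost surely and then upgrading this to all of $S_x$ by continuity. You then use the identity $\widehat{\cL}^{\mathrm{DPO}}_t(\pi_R)=\widehat{\cL}_t(R)$, choose $\widehat\pi_t=\pi_{\widehat R_t}$ with $\widehat\pi_0=\pi_0$, and conclude via Theorem~\ref{thm:np_bounded_regret}. You also note that $\pi_0$-centering forces the constant $c_x$ to vanish, so $R=R'$ on $S_x$ and the tie-breaking worry disappears, and that any measurable DPO minimizer selection would do; both are harmless refinements the paper does not need.
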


\noindent\emph{Proof deferred to Appendix~\ref{app:proofs_np_online_alignment}.}

\section{Experiment}
\label{sec:experiment}

To isolate the paper's main point, we exactly replicate the linear BT testbed experiment of \citet{wu2025greedy} except for the evaluation target: instead of measuring the softened policy used during data collection, we evaluate the temperature-zero action induced by the learned reward estimate. This directly tests the paper's claim that the logarithmic KL-regularized regret reported in the literature is not due to exploration but is entirely attributable to the randomization induced by KL-regularization.

\subsection{Setup}
We exactly replicate the linear BT testbed experiment of \citet{wu2025greedy}: we consider the finite-action linear BT with \(k=5\), i.e., both contexts and actions are represented by five-dimensional feature vectors. We sample a fixed action set
$
\mathcal A=\{a_1,\dots,a_6\}\subset[0,1]^5
$, 
together with a ground-truth matrix
$
W^\star\in[0,1]^{5\times 5}.
$
At round \(t\), a fresh context \(x_t\sim \mathrm{Unif}([0,1]^5)\) is drawn, and the true latent reward of action \(a\in\mathcal A\) is
$
R^\star(x_t,a)=x_t^\top W^\star a.
$
The reference policy \(\pi_0\) is uniform over all actions.

We then run the standard greedy online alignment loop. We initialize \(\pi_1=\pi_0\). At each round \(t\), after observing \(x_t\), the learner samples the first action from the current KL-tilted policy
\[
\pi_t(a\mid x_t)\propto \pi_0(a)\exp\!\big(\eta \widehat R_{t-1}(x_t,a)\big),
\qquad
\widehat R_{t-1}(x_t,a)=x_t^\top \widehat W_{t-1} a,
\]
samples the second action independently from \(\pi_0\), observes the binary preference label, and then refits a Bradley--Terry maximum-likelihood estimate \(\widehat W_t\) using all comparisons collected up to round \(t\). We report results for \(\eta\in\{1,2,3\}\), horizon \(T=200\), and \(50\) independent trajectories for each value of \(\eta\).
Because the experiment is pairwise, we also overlay the corresponding online DPO iterates from the policy-space view in Section~\ref{sec:np_online_dpo_view}.

Each fitted model \(\widehat W_t\) induces a deterministic temperature-zero recommendation rule
$
\widehat a_t(x)=\argmax_{a\in\mathcal A} x^\top \widehat W_t a.
$
Most importantly, we evaluate the associated one-step temperature-zero regret
\[
\EE_x\!\left[R^\star(x,a^\star(x)) - R^\star(x,\widehat a_t(x))\right],
\]
where \(a^\star(x)=\argmax_{a\in\mathcal A} x^\top W^\star a\) is the true optimal action. We estimate this expectation using Monte Carlo, drawing fresh evaluation contexts at each iteration, and report the cumulative sum of these one-step estimates along the trajectory, with a minimum probe gap of 0.2703. Appendix~\ref{app:experiment_details} records the exact sampling details and other minimum probe gap cases.  

\subsection{Results}

Figure \ref{fig:regret} illustrates the same qualitative pattern, uniformly across all three regularization levels: the one-step temperature-zero regret falls rapidly to essentially zero well before round \(100\), and the cumulative regret curves flatten after roughly the first \(100\) rounds. 
This is exactly the bounded $O(1)$ regret behavior predicted by Theorem~\ref{thm:np_bounded_regret}: once the
learned reward estimate induces the correct top-ranked action in almost all contexts, the remaining
temperature-zero regret becomes zero, so the cumulative curve plateaus rather than continuing to
grow throughout the horizon. Note that the online DPO curves are visually indistinguishable from the online RLHF curves at each \(\eta\), exactly as predicted by the policy-space equivalence in the pairwise setting discussed earlier.

It is useful to compare this with the results reported in \citet{wu2025greedy}, which studies the same greedy
online RLHF algorithm under the same finite-action linear Bradley--Terry simulation protocol. Their
theory and experiments yield logarithmic KL-regularized cumulative regret. %There is no contradiction: the difference is the regret notion being measured. \cite{wu2025greedy} evaluate the KL-regularized policy regret of the stochastic deployment policy itself, while we evaluate the temperature-zero recommendation regret and cumulative regret for the induced argmax action. 
This highlights the important takeaway of this paper: after a finite number of time-steps, the logarithmic KL-regularized regret measured in \citet{wu2025greedy} was not due to exploration; rather, the regret was entirely from the randomization induced by the KL-regularized policy.

\begin{figure}
    \centering
    \includegraphics[width=1\linewidth]{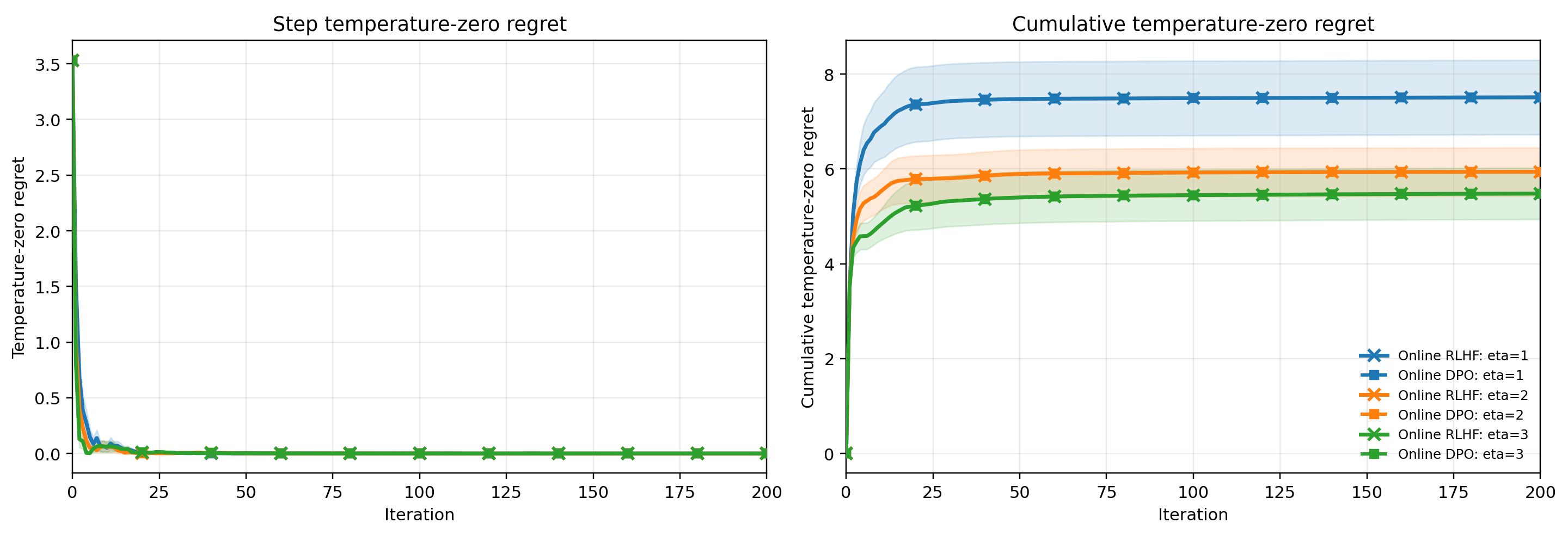}
    \caption{Finite-action linear Bradley--Terry simulation experiment results replicating the experiment of \citet{wu2025greedy} except we use temperature-zero regret. Curves show the mean over \(50\) repeated runs, and shaded bands denote the standard error. We overlay both online RLHF and online DPO; at each \(\eta\), the two curves coincide. For all regularization levels \(\eta\in\{1,2,3\}\), one-step temperature-zero regret (left image) rapidly collapses to zero and cumulative regret (right image) quickly plateaus.}
    \label{fig:regret}
\end{figure}

\section{Conclusion}

We revisited greedy online alignment through a temperature-zero lens, where performance is judged by the single response selected at inference time rather than by the full softened policy used during training and data collection. Under this decision-centric criterion, we showed that the effective regret in online alignment is bounded ($O(1)$) cumulative regret, and that the same conclusion carries over to the pairwise online DPO view. This sharpens the interpretation of existing logarithmic-regret guarantees: the previously known KL-regularized regret in the literature arises from randomizing the finite-temperature policy itself, rather than from a need for continued exploration.

\bibliographystyle{plainnat}
\bibliography{bib}

\appendix

\newpage

\section{Related works}\label{sec:related}

\paragraph{Preference-based alignment.}
Early theory for RLHF established statistical guarantees for learning from pairwise and $K$-wise comparisons and clarified the role of reward-model estimation in preference-based control \citep{zhu2023principled}; adjacent work in preference-based RL studied finite-time guarantees and compared the intrinsic difficulty of RLHF to standard reward-based RL \citep{xu2020preference,wang2023rlhf}. A complementary unifying view is given by $\Psi$PO, which treats RLHF, DPO, and identity-based objectives as special cases of a broader preference-learning framework and makes explicit when scalar-reward reductions are or are not faithful to the underlying preference problem \citep{azar2024paradigm}.

\paragraph{Empirical RLHF lineage.}
Empirically, modern language-model RLHF was established through early systems that learned reward models from human comparisons and optimized policies against them, including stylistic continuation and summarization alignment in \citet{ziegler2019finetuning,stiennon2020summarize}. Subsequent large-scale systems such as InstructGPT and Anthropic's helpful--harmless assistant demonstrated the practical RLHF pipeline for instruction following and assistant alignment, helping motivate the iterative deployment-and-feedback loop that later theory seeks to understand \citep{ouyang2022instructgpt,bai2022helpful}.

\paragraph{KL-regularized iterative RLHF.}
A recent theory line studies the same KL-regularized contextual-bandit / RLHF objective considered here: \citet{xiong2024iterative} analyze offline, online, and hybrid iterative preference learning under KL-constraints, and \citet{ye2024online} extend the online analysis beyond Bradley--Terry to general preference oracles. Subsequent sharp analyses show that KL regularization can fundamentally change the local statistical difficulty, yielding $O(1/\varepsilon)$ or logarithmic rates in offline and online settings under suitable assumptions \citep{zhao2024sharp,zhao2025online,zhao2025offline}.

\paragraph{Greedy and on-policy preference learning.}
Most closely related, \citet{wu2025greedy} show that greedy sampling itself can attain the same $O(\log T)$ order as confidence-based methods in RLHF, including under general preference models, so explicit optimism or pessimism is not always necessary. \citet{kim2026coverage}, \citet{chen2026sampling}, and \citet{lee2026regularized} refine this picture by analyzing coverage improvement in on-policy DPO, the effect of sampling and reference choices on iterative dynamics, and greedy learning under generalized bilinear or potentially intransitive preference models with strongly convex regularization.

\paragraph{Contextual dueling bandits and active preference acquisition.}
From a learning-theoretic perspective, our setting is also closely connected to contextual dueling bandits, where the learner receives context, selects two actions, and observes only relative preference feedback. Classical contextual dueling-bandit work formalized this problem and later gave efficient regret guarantees under realizability \citep{dudik2015contextual,saha2022efficient}; more recent RLHF-specific work makes this connection explicit by formulating alignment as a contextual dueling-bandit or contextual preference-bandit problem and studying active-query or adaptive-sampling algorithms for reducing label complexity \citep{ji2024activequeries,das2024apo}.

\paragraph{Direct preference optimization and online variants.}
On the algorithmic side, DPO recasts KL-regularized preference optimization as a classification-style objective, and a large family of direct-alignment variants modifies the link function, regularizer, or granularity of the implicit reward, including ORPO, KTO, SimPO, TDPO, and $\chi^2$-preference optimization \citep{rafailov2023dpo,hong2024orpo,ethayarajh2024kto,meng2024simpo,zeng2024tdpo,huang2025chipo}. Several works also make direct preference optimization iterative or online. through online AI feedback, online IPO / IPO-MD, self-play or general-preference formulations such as NLHF, DNO, and GPO, exploratory bonuses, efficient online data selection, or explicit online DPO updates, suggesting that fresh on-policy preferences can materially improve empirical performance \citep{guo2024oaif,calandriello2024online,munos2024nash,rosset2024dno,zhang2025gpm,wu2024sppo,xie2024xpo,chen2024optune,qi2024onlinedpo,xu2023cringe}.

\paragraph{Simple-regret objectives.}
Methodologically, our temperature-zero criterion is closer to simple-regret evaluation than to standard cumulative policy regret, because it scores the final recommended action rather than the full stochastic policy used during data collection. This objective has been studied in contextual bandits both in pure-exploration / exploitation formulations and in more recent algorithms that explicitly trade off cumulative and simple regret, providing a useful conceptual parallel to our decision-centric evaluation \citep{deshmukh2018simpleregret,krishnamurthy2023proportional}.

\paragraph{Deployment-centric views.}
Recent inference-time alignment work studies the top-of-ranking decision more directly, for example by analyzing best-of-$N$ scaling, coverage, and reward hacking for response selection with a fixed reward model \citep{huang2025bestofn}. \citet{zhang2025winrate} argue that win rate is the canonical evaluation induced by preference data; our temperature-zero regret is different, but it is motivated by a similar shift away from the full training distribution toward the decision that is actually used at deployment.

\paragraph{Classical greedy bandits.}
Our bounded temperature-zero regret result is also closely related in spirit to implicit-exploration results for greedy contextual bandits: covariate diversity, smoothing, and local anti-concentration can make purely greedy algorithms no-regret or polylogarithmic, substantially reducing or even eliminating the need for explicit exploration \citep{bastani2021mostly,kannan2018smoothed,raghavan2023greedy,kim2024local}. The main difference is that in KL-regularized alignment the learner continues to collect data with a softened policy, while our regret notion evaluates only the temperature-zero action induced by the learned reward estimate.

\paragraph{Bounded-regret and sub-log-regret literature.}
More broadly, our bounded temperature-zero regret result fits a longer line of work asking when interactive decision problems admit faster-than-logarithmic, or even constant (bounded) regret. In linear contextual bandits, rich-context or covariate-diversity conditions, often called the HLS condition, can already yield sub-logarithmic regret for nearly greedy methods \citep{hao2020adaptive}, while later representation-based analyses show that under realizability together with suitable spectral conditions, such as the HLS condition, constant regret can be achieved and even used for representation selection \citep{papini2021leveraging,tirinzoni2022scalable,tirinzoni2023complexity}. Related reinforcement-learning work identifies analogous structural conditions in linear and low-rank MDPs: \citet{papini2021reinforcement} introduce a necessary condition for constant regret in linear MDPs and show sufficiency for important low-rank / Bellman-closed settings, and more recent work studies constant-regret guarantees under good representations in low-rank or misspecified linear MDPs as well as in recommender-style formulations with repeated users \citep{sturm2025constant,zhang2024achieving,kang2023bounded}. At a more general level, asymptotic and non-asymptotic instance-optimal frameworks based on Graves--Lai allocations and the allocation--estimation coefficient clarify when sub-logarithmic behavior is even possible, with recent work making explicit that the zero-complexity regime is precisely the one corresponding to bounded regret \citep{dong2022asymptotic,wagenmaker2023instance,kang2024log}.

Unlike these lines of work, we do not propose a new exploration rule, preference objective, or inference-time selection algorithm. Our contribution is instead to show that the standard greedy KL-regularized alignment loop can look substantially stronger once it is evaluated under a decision-centric, temperature-zero notion of regret.

\section{Extended discussions}

\paragraph{Beyond BT and relation to more general preference models.}
This paper currently focuses on the MNL/BT preference model because it admits a tractable log-loss representation and a clean reward-to-policy correspondence. This leaves open robustness to intransitive preferences and more general preference structures. Recent preference-learning frameworks emphasize that scalar-reward reductions need not be faithful in full generality. Our theory does not attempt to resolve that general question. Instead, it isolates a regime in which the scalar-reward view is sufficient to prove a decision-centric bounded-regret statement. Extending the same temperature-zero perspective to generalized bilinear, non-BT, or intransitive preference models would help situate the result more broadly within recent analyses beyond Bradley--Terry.

\paragraph{Large-scale practical relevance.}
No large-scale or semi-real LLM experiments are included in the current version. As a result, the paper should not be read as claiming immediate quantitative relevance for full production-scale alignment pipelines. The contribution is primarily conceptual and theoretical: it identifies a deployment-centric criterion under which greedy alignment can exhibit bounded cumulative regret, and it illustrates that phenomenon in a controlled simulation. Demonstrating the same separation between selector-identification error and KL-induced randomization in stronger empirical settings remains future work.

\section{Experiment reproducibility details}
\label{app:experiment_details}

This appendix records the exact run used for the main simulation in Section~\ref{sec:experiment}. The released artifacts correspond to the directory
\texttt{Experiments/horizon200\_gapforced\_0p2\_seed3500\_rep50},
generated by
\begin{verbatim}
python Experiments/run_bt_temperature_zero.py \
  --output-dir Experiments/horizon200_gapforced_0p2_seed3500_rep50 \
  --seed 3500 \
  --dimension 5 \
  --num-actions 6 \
  --horizon 200 \
  --repeats 50 \
  --eval-contexts 4096 \
  --mle-maxiter 50 \
  --mle-ftol 1e-9 \
  --min-probe-gap 0.2 \
  --gap-probe-contexts 20000 \
  --problem-search-limit 1000
\end{verbatim}

\paragraph{Problem-instance selection.}
Using the base seed \(3500\), the script first draws a fixed bank of \(20{,}000\) probe contexts from \(\mathrm{Unif}([0,1]^5)\). It then scans candidate problem seeds \(3500,3501,\dots\) until it finds an instance whose minimum true top-two reward gap on that probe bank is at least \(0.2\). For the run used in Figure~\ref{fig:regret}, the accepted candidate is the \(47\)-th one, with chosen problem seed \(3546\). Its realized minimum probe-bank gap is \(0.2703\), and the mean probe-bank gap is \(1.9018\).

\paragraph{Online data collection and fitting.}
The experiment uses \(k=5\)-dimensional contexts, \(|\mathcal A|=6\) actions, horizon \(T=200\), and regularization coefficients \(\eta\in\{1,2,3\}\). Each context \(x_t\) is drawn independently from \(\mathrm{Unif}([0,1]^5)\). The first action is sampled from the current KL-tilted policy, the second action is sampled independently from the uniform reference policy on the six actions, and the binary preference is then drawn from the Bradley--Terry probability
\[
\sigma\!\big(x_t^\top W^\star(a_{t,1}-a_{t,2})\big).
\]
After each round, the estimate \(\widehat W_t\) is refit on all comparisons collected so far by bounded L-BFGS-B maximum likelihood on the \(25\)-dimensional parameter vector, initialized from the previous iterate, with box constraints \([0,1]\), \texttt{maxiter}\(=50\), and \texttt{ftol}\(=10^{-9}\).

\paragraph{Evaluation and uncertainty reporting.}
For each \(\eta\), we run \(50\) independent trajectories. At every iteration \(t\), the one-step temperature-zero regret is estimated on a fresh Monte Carlo batch of \(4096\) evaluation contexts, independent of both the online trajectory and the probe bank. Figure~\ref{fig:regret} plots the mean across the \(50\) trajectories, and the shaded bands are pointwise standard errors, computed as the sample standard deviation divided by \(\sqrt{50}\). In the pairwise setting, the released DPO overlay is exactly identical to the RLHF trajectory data; the reported maximum absolute RLHF-versus-DPO difference is zero for both the step and cumulative curves at every \(\eta\).

\begin{table}[ht]
    \centering
    \caption{Final-iteration summary for the exact run used in Figure~\ref{fig:regret}.}
    \begin{tabular}{ccccc}
        \toprule
        \(\eta\) & final step mean & final step s.e. & final cumulative mean & final cumulative s.e. \\
        \midrule
        \(1\) & \(0.00012\) & \(0.00006\) & \(7.5092\) & \(0.7850\) \\
        \(2\) & \(0.00006\) & \(0.00004\) & \(5.9401\) & \(0.5086\) \\
        \(3\) & \(0.00030\) & \(0.00016\) & \(5.4802\) & \(0.5412\) \\
        \bottomrule
    \end{tabular}
\end{table}

\paragraph{Compute scale.}
The implementation is a single-process NumPy/SciPy/Matplotlib simulation and does not require a GPU. The dominant per-repeat evaluation tensor has shape \((T+1)\times 4096\times 5\), which is about \(33\) MB in double precision, and the full run performs \(3\times 50\times 200 = 30{,}000\) bounded L-BFGS-B solves in dimension \(25\). This is a lightweight workstation-scale simulation rather than a large-model training run.

\paragraph{Additional experiment results.} Figure~\ref{fig:regret2} shows a harder low-gap case to illustrate that the same qualitative plateau still appears when the true top-two reward gap is much smaller, although the flattening can occur later.

\begin{figure}[ht!]
    \centering
    \includegraphics[width=1\linewidth]{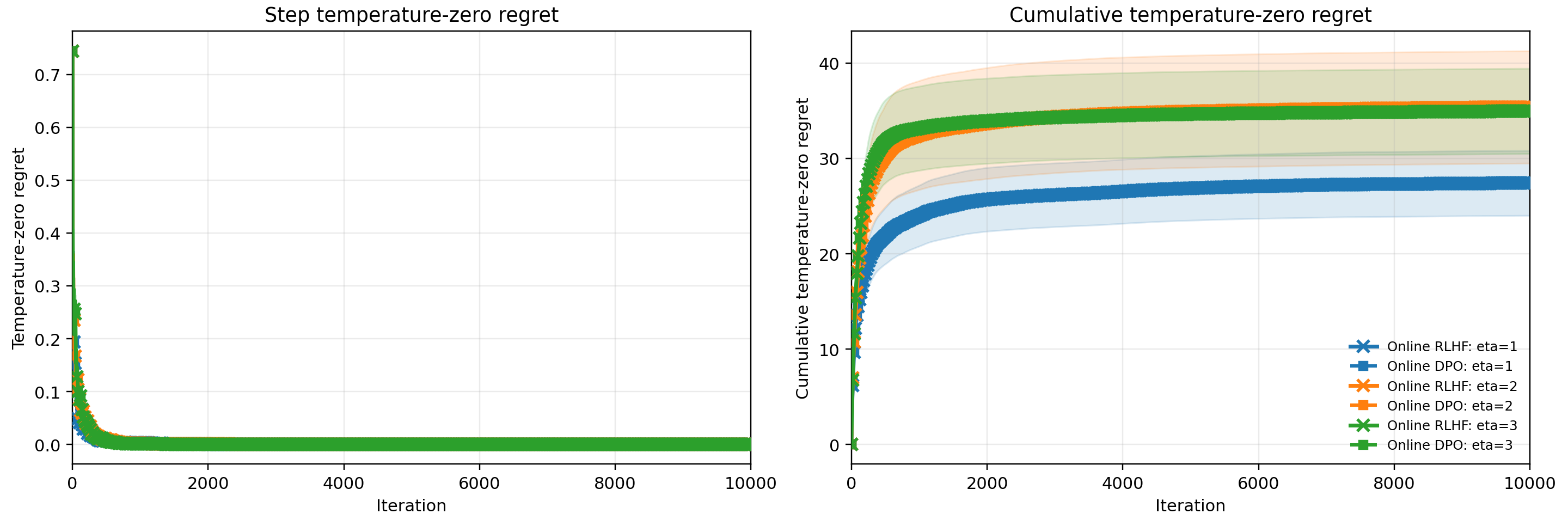}
    \caption{Finite-action linear Bradley--Terry simulation experiment results with probe gap 0.0204, replicating the experiment of \citet{wu2025greedy} except we use temperature-zero regret. Curves show the mean over \(50\) repeated runs, and shaded bands denote the standard error. In the pairwise setting, we overlay the online RLHF and online DPO curves; at each \(\eta\), the two coincide. Again, for all regularization levels \(\eta\in\{1,2,3\}\), one-step temperature-zero regret (left image) rapidly collapses to zero and cumulative regret (right image) plateaus.}
    \label{fig:regret2}
\end{figure}

The online trajectory in this lower-gap case again uses fresh i.i.d.\ contexts \(x_t\sim \mathrm{Unif}([0,1]^5)\) for data collection, with regret estimated at each iteration on an additional fresh batch of \(4096\) evaluation contexts, independent of both the probe bank and the online trajectory contexts. The cumulative curves in Figures~\ref{fig:regret} and~\ref{fig:regret2} are the cumulative sums of these per-iteration Monte Carlo estimates.

\newpage

\section{Proofs for Section~\ref{sec:nonpersonalized_online_alignment}}
\label{app:proofs_np_online_alignment}

We first record examples of reward families satisfying Condition~\ref{as:NP1}.

\subsection{Linear and neural reward families}
\label{app:linear_neural_examples}

Write
\[
S_x\coloneqq \supp(\pi_0(\cdot\mid x)),
\qquad
S_{\pi_0}\coloneqq \{(x,a)\in \cX\times \cA:\ a\in S_x\}.
\]

\begin{proposition}[Centered linear reward classes satisfy Condition~\ref{as:NP1}]
\label{prop:linear_class_compact_continuous}
Let \(\Theta\subset \R^d\) be compact. Let
\[
\phi:\cX\times\cA\to\R^d
\]
be measurable and satisfy
\[
M_\phi
\coloneqq
\sup_{(x,a)\in S_{\pi_0}}\|\phi(x,a)\|_2
<\infty.
\]
Assume also that for \(d_0\)-a.e.\ \(x\), the map
\[
a\longmapsto \phi(x,a)
\]
is continuous on \(S_x\). For \(\theta\in\Theta\), define the centered linear reward
\[
R_\theta(x,a)
\coloneqq
\theta^\top \phi(x,a)
-
\EE_{A\sim \pi_0(\cdot\mid x)}[\theta^\top \phi(x,A)].
\]
Then the class
\[
\mathcal F_\Theta^{\mathrm{lin}}
\coloneqq
\{R_\theta:\theta\in\Theta\}
\]
is compact under \(\|\cdot\|_{\infty,\supp(\pi_0)}\), and for every \(\theta\in\Theta\),
the map \(a\mapsto R_\theta(x,a)\) is continuous on \(S_x\) for \(d_0\)-a.e.\ \(x\).
Hence \(\mathcal F_\Theta^{\mathrm{lin}}\) satisfies Condition~\ref{as:NP1}.
\end{proposition}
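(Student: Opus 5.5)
We show compactness through continuity of the parameter map \(\theta\mapsto R_\theta\), and continuity in the action directly from the continuity of \(\phi\). For compactness, consider \(\Phi:\Theta\to(\mathcal F_\Theta^{\mathrm{lin}},\|\cdot\|_{\infty,\supp(\pi_0)})\), \(\theta\mapsto R_\theta\). The class is the image of the compact set \(\Theta\) under \(\Phi\), so it is compact as soon as \(\Phi\) is continuous.

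To get continuity, we show \(\Phi\) is Lipschitz. Since \(R_\theta\) is linear in \(\theta\), for \((x,a)\in S_{\pi_0}\) and \(\theta,\theta'\in\Theta\) we have
\[
|R_\theta(x,a)-R_{\theta'}(x,a)|
\le |(\theta-\theta')^\top\phi(x,a)| + \EE_{A\sim\pi_0(\cdot\mid x)}\big[|(\theta-\theta')^\top\phi(x,A)|\big]
\le 2M_\phi\|\theta-\theta'\|_2 .
\]
The bound on the expectation term uses that \(A\in S_x\) holds \(\pi_0(\cdot\mid x)\)-a.s. This is the one point needing care: the topological support of a Borel measure on a separable metric space has full measure, so \(\|\phi(x,A)\|_2\le M_\phi\) almost surely. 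For the same reason the centering expectation is well defined and finite, and \(R_\theta\) is measurable by Fubini/measurability of \(\phi\). Taking the supremum over \(S_{\pi_0}\) gives \(\|R_\theta-R_{\theta'}\|_{\infty,\supp(\pi_0)}\le 2M_\phi\|\theta-\theta'\|_2\), so \(\Phi\) is continuous.

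With continuity in hand, compactness follows: every sequence \(R_{\theta_n}\) in the class has \(\theta_{n_j}\to\theta\in\Theta\) along a subsequence, and then \(R_{\theta_{n_j}}\to R_\theta\) in the sup norm on the support. (If \(\|\cdot\|_{\infty,\supp(\pi_0)}\) is only a pseudometric, the same argument gives sequential compactness in it, which is what is used.) Uniform boundedness by \(2M_\phi\sup_{\Theta}\|\theta\|_2\) comes along for free.

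For the continuity part of Condition~\ref{as:NP1}, fix \(x\) in the full-measure set where \(a\mapsto\phi(x,a)\) is continuous on \(S_x\). Then \(a\mapsto\theta^\top\phi(x,a)\) is continuous on \(S_x\), and the centering term does not depend on \(a\). So \(a\mapsto R_\theta(x,a)\) is continuous on \(S_x\) for \(d_0\)-a.e. \(x\), simultaneously for all \(\theta\). The main (mild) obstacle is only the full-measure-of-support fact used in the Lipschitz bound; everything else is routine.
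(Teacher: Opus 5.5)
Your proposal is correct and follows the paper's proof essentially step for step: continuity in $a$ holds because the centering term does not depend on $a$, and compactness follows because $\mathcal F_\Theta^{\mathrm{lin}}$ is the continuous image of $\Theta$ under the Lipschitz bound $\|R_\theta-R_{\tilde\theta}\|_{\infty,\supp(\pi_0)}\le 2M_\phi\|\theta-\tilde\theta\|_2$. You also state explicitly that $\pi_0(\cdot\mid x)$ charges only $S_x$, which is needed to bound the centering term by $M_\phi$; the paper's proof uses this silently and only establishes it later, in Lemma~\ref{lem:support_upgrade}.
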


\begin{proof}
Fix \(\theta\in\Theta\). For \(d_0\)-a.e.\ \(x\), the map
\[
a\longmapsto \theta^\top \phi(x,a)
\]
is continuous on \(S_x\), because it is a continuous linear functional of \(\phi(x,\cdot)\).
The centering term
\[
m_\theta(x)\coloneqq \EE_{A\sim \pi_0(\cdot\mid x)}[\theta^\top \phi(x,A)]
\]
does not depend on \(a\). Hence
\[
a\longmapsto R_\theta(x,a)=\theta^\top \phi(x,a)-m_\theta(x)
\]
is continuous on \(S_x\) for \(d_0\)-a.e.\ \(x\).

Now fix \(\theta,\tilde\theta\in\Theta\). For every \((x,a)\in S_{\pi_0}\),
\[
|R_\theta(x,a)-R_{\tilde\theta}(x,a)|
\le
|(\theta-\tilde\theta)^\top \phi(x,a)|
+
\left|
\EE_{A\sim \pi_0(\cdot\mid x)}
\big[(\theta-\tilde\theta)^\top \phi(x,A)\big]
\right|.
\]
By Cauchy--Schwarz and the bound on \(\phi\),
\[
|R_\theta(x,a)-R_{\tilde\theta}(x,a)|
\le
2M_\phi\|\theta-\tilde\theta\|_2.
\]
Taking the supremum over \(S_{\pi_0}\) yields
\[
\|R_\theta-R_{\tilde\theta}\|_{\infty,\supp(\pi_0)}
\le
2M_\phi\|\theta-\tilde\theta\|_2.
\]
Therefore the map
\[
\Theta\to \ell_\infty(S_{\pi_0}),
\qquad
\theta\mapsto R_\theta,
\]
is continuous. Since \(\Theta\) is compact, its image
\(\mathcal F_\Theta^{\mathrm{lin}}\) is compact under
\(\|\cdot\|_{\infty,\supp(\pi_0)}\).
\end{proof}

\begin{proposition}[Centered neural reward classes satisfy Condition~\ref{as:NP1}]
\label{prop:neural_class_compact_continuous}
Let \(\Theta\subset\R^m\) be compact. For each \(\theta\in\Theta\), let
\[
f_\theta:S_{\pi_0}\to\R
\]
be a bounded measurable function. Assume that:

\begin{enumerate}[label=(\roman*),leftmargin=2em]
\item for \(d_0\)-a.e.\ \(x\) and every \(\theta\in\Theta\), the map
\[
a\longmapsto f_\theta(x,a)
\]
is continuous on \(S_x\);

\item the parameter-to-function map
\[
\Theta\to \ell_\infty(S_{\pi_0}),
\qquad
\theta\longmapsto f_\theta,
\]
is continuous.
\end{enumerate}

Define the centered reward
\[
R_\theta(x,a)
\coloneqq
f_\theta(x,a)
-
\EE_{A\sim \pi_0(\cdot\mid x)}[f_\theta(x,A)].
\]
Then the class
\[
\mathcal F_\Theta^{\mathrm{nn}}
\coloneqq
\{R_\theta:\theta\in\Theta\}
\]
is compact under \(\|\cdot\|_{\infty,\supp(\pi_0)}\), and for every \(\theta\in\Theta\),
the map \(a\mapsto R_\theta(x,a)\) is continuous on \(S_x\) for \(d_0\)-a.e.\ \(x\).
Hence \(\mathcal F_\Theta^{\mathrm{nn}}\) satisfies Condition~\ref{as:NP1}.
\end{proposition}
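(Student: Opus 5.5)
The proof mirrors Proposition~\ref{prop:linear_class_compact_continuous}. The only change is that the Cauchy--Schwarz Lipschitz estimate is replaced by the assumed continuity (ii) of the parameter-to-function map. I will show two things. First, the centering map \(T: f\mapsto f-\EE_{A\sim\pi_0(\cdot\mid x)}[f(x,A)]\) is a bounded (hence continuous) operator on bounded functions over \(S_{\pi_0}\). Second, \(\theta\mapsto R_\theta=T f_\theta\) is then a continuous map from the compact set \(\Theta\) into \(\ell_\infty(S_{\pi_0})\). Compactness of \(\mathcal F_\Theta^{\mathrm{nn}}\) follows because continuous images of compact sets are compact.

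\emph{Continuity in \(a\).} Fix \(\theta\) and a context \(x\) in the full-measure set where (i) holds. The centering term \(m_\theta(x)\coloneqq \EE_{A\sim\pi_0(\cdot\mid x)}[f_\theta(x,A)]\) is finite, because \(f_\theta\) is bounded and measurable and \(\pi_0(\cdot\mid x)\) is concentrated on \(S_x\). For this I use that \(\cA\) is separable, so the topological support has full \(\pi_0(\cdot\mid x)\)-measure. The term \(m_\theta(x)\) is also independent of \(a\). Hence \(a\mapsto R_\theta(x,a)=f_\theta(x,a)-m_\theta(x)\) is continuous on \(S_x\). I will fix one full-measure set of contexts where (i) holds simultaneously for every \(\theta\), which (i) provides as stated.

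\emph{Sup-norm estimate.} For \(\theta,\tilde\theta\in\Theta\) and \((x,a)\in S_{\pi_0}\),
\[
|R_\theta(x,a)-R_{\tilde\theta}(x,a)|\le |f_\theta(x,a)-f_{\tilde\theta}(x,a)|+\EE_{A\sim\pi_0(\cdot\mid x)}\big|f_\theta(x,A)-f_{\tilde\theta}(x,A)\big|.
\]
Since \(A\in S_x\) almost surely, the expectation is at most \(\|f_\theta-f_{\tilde\theta}\|_{\infty,\supp(\pi_0)}\). Taking the supremum gives
\[
\|R_\theta-R_{\tilde\theta}\|_{\infty,\supp(\pi_0)}\le 2\|f_\theta-f_{\tilde\theta}\|_{\infty,\supp(\pi_0)}.
\]
By (ii), \(\theta\mapsto R_\theta\) is therefore continuous into \(\ell_\infty(S_{\pi_0})\). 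Because \(\Theta\) is compact, the image \(\mathcal F_\Theta^{\mathrm{nn}}\) is compact under \(\|\cdot\|_{\infty,\supp(\pi_0)}\), and Condition~\ref{as:NP1} holds.

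The only delicate point is justifying that the expectation is controlled by the supremum over \(S_x\), i.e., that \(\pi_0(S_x\mid x)=1\). This follows from separability of \(\cA\), since the complement of the support is a countable union of null open balls. Measurability of \(x\mapsto m_\theta(x)\) is not needed for the sup-norm argument.
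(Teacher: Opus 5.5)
Your proposal is correct and follows the paper's proof essentially line by line: continuity in \(a\) holds because the centering term is \(a\)-independent, the estimate \(\|R_\theta-R_{\tilde\theta}\|_{\infty,\supp(\pi_0)}\le 2\|f_\theta-f_{\tilde\theta}\|_{\infty,\supp(\pi_0)}\) combined with (ii) gives continuity of \(\theta\mapsto R_\theta\), and compactness follows because \(\mathcal F_\Theta^{\mathrm{nn}}\) is the continuous image of the compact set \(\Theta\). Your explicit justification that \(\pi_0(S_x\mid x)=1\) by separability is a detail the paper uses only implicitly in this proof; it proves that fact separately in Lemma~\ref{lem:support_upgrade}.
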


\begin{proof}
Fix \(\theta\in\Theta\). By assumption, for \(d_0\)-a.e.\ \(x\), the section
\[
a\longmapsto f_\theta(x,a)
\]
is continuous on \(S_x\). The centering term
\[
m_\theta(x)\coloneqq \EE_{A\sim \pi_0(\cdot\mid x)}[f_\theta(x,A)]
\]
does not depend on \(a\). Hence
\[
a\longmapsto R_\theta(x,a)=f_\theta(x,a)-m_\theta(x)
\]
is continuous on \(S_x\) for \(d_0\)-a.e.\ \(x\).

Now fix \(\theta,\tilde\theta\in\Theta\). For every \((x,a)\in S_{\pi_0}\),
\begin{align*}
|R_\theta(x,a)-R_{\tilde\theta}(x,a)|
&\le
|f_\theta(x,a)-f_{\tilde\theta}(x,a)| \\
&\quad +
\left|
\EE_{A\sim \pi_0(\cdot\mid x)}
\big[f_\theta(x,A)-f_{\tilde\theta}(x,A)\big]
\right| \\
&\le
2\|f_\theta-f_{\tilde\theta}\|_{\infty,\supp(\pi_0)}.
\end{align*}
Therefore
\[
\|R_\theta-R_{\tilde\theta}\|_{\infty,\supp(\pi_0)}
\le
2\|f_\theta-f_{\tilde\theta}\|_{\infty,\supp(\pi_0)}.
\]
Since \(\theta\mapsto f_\theta\) is continuous into \(\ell_\infty(S_{\pi_0})\), the map
\[
\Theta\to \ell_\infty(S_{\pi_0}),
\qquad
\theta\mapsto R_\theta,
\]
is also continuous. Because \(\Theta\) is compact, its image
\(\mathcal F_\Theta^{\mathrm{nn}}\) is compact under \(\|\cdot\|_{\infty,\supp(\pi_0)}\).
\end{proof}

\begin{remark}
A standard sufficient condition for Proposition~\ref{prop:neural_class_compact_continuous} is the following:
suppose \(\Theta\) is compact, \(S_{\pi_0}\) is compact, and there exists a jointly continuous map
\[
f:\Theta\times S_{\pi_0}\to\R
\]
such that \(f_\theta(x,a)=f(\theta,x,a)\). Then \(\theta\mapsto f_\theta\) is continuous into
\(\ell_\infty(S_{\pi_0})\) by uniform continuity on the compact set \(\Theta\times S_{\pi_0}\).
Hence any feedforward network with continuous activations (including ReLU), compact parameter set,
and compact support graph \(S_{\pi_0}\) satisfies Proposition~\ref{prop:neural_class_compact_continuous}.
\end{remark}

\begin{definition}[$\pi_0$-essential reward oscillation]
Fix a reference policy \(\pi_0(\cdot\mid x)\). For a measurable reward function \(r:\cX\times\cA\to\R\),
define
\[
\mathrm{osc}_{\pi_0}(r)(x)
\coloneqq
\operatorname*{ess\,sup}_{a\sim\pi_0(\cdot\mid x)} r(x,a)
-
\operatorname*{ess\,inf}_{a\sim\pi_0(\cdot\mid x)} r(x,a).
\]
We say \(r\) is \(B\)-oscillation-bounded (with respect to \(\pi_0\)) if
\[
\mathrm{osc}_{\pi_0}(r)(x)\le 2B
\qquad
\forall x\in\cX.
\]
\end{definition}

\begin{lemma}[$\pi_0$-centering is w.l.o.g.\ for MNL likelihoods]
\label{lem:pi0_centering_wlog_psi}
Fix any reward function \(R:\cX\times\cA\to\R\) and define its \(\pi_0\)-centered version
\[
\bar R(x,a)\coloneqq R(x,a)-m_R(x),
\qquad
m_R(x)\coloneqq \EE_{a'\sim\pi_0(\cdot\mid x)}[R(x,a')].
\]
Then for every \(x\) and every slate \(\mathbf a=(a_1,\dots,a_K)\):
\begin{enumerate}[label=(\roman*),leftmargin=2em]
\item \(P_R(\cdot\mid x,\mathbf a)=P_{\bar R}(\cdot\mid x,\mathbf a)\),
\item \(\pi_R(\cdot\mid x)=\pi_{\bar R}(\cdot\mid x)\) for KL-tilts,
\item \(\arg\max_{a\in\supp(\pi_0(\cdot\mid x))}R(x,a)=\arg\max_{a\in\supp(\pi_0(\cdot\mid x))}\bar R(x,a)\).
\end{enumerate}
Moreover,
\[
\mathrm{osc}_{\pi_0}(\bar R)(x)
=
\mathrm{osc}_{\pi_0}(R)(x)
\]
for all \(x\).
\end{lemma}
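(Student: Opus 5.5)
The plan is to verify each of the four claims directly from the definitions. In every case the mechanism is the same: subtracting the context-only quantity \(m_R(x)\) is an \(a\)-independent shift for each fixed \(x\). The only preliminary point is that \(m_R(x)\) must be well defined, meaning \(R(x,\cdot)\) is \(\pi_0(\cdot\mid x)\)-integrable. Where that fails, I would read the lemma as applying to any finite constant shift \(c(x)\). Nothing below uses the specific value of \(m_R(x)\), only that it does not depend on \(a\), so I will prove everything for a general shift \(c(x)\).

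For (i), fix \(x\) and a slate \(\mathbf a\). Every reward entry changes by \(-c(x)\), so
\[
\frac{e^{R(x,a_k)-c(x)}}{\sum_{\ell=1}^K e^{R(x,a_\ell)-c(x)}}=\frac{e^{R(x,a_k)}}{\sum_{\ell=1}^K e^{R(x,a_\ell)}},
\]
since the common factor \(e^{-c(x)}\) cancels between numerator and denominator.

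For (ii), the same cancellation holds for the KL tilt. The factor \(e^{-\eta c(x)}\) multiplies both the numerator \(\pi_0(a\mid x)e^{\eta R(x,a)}\) and the normalizer \(\int\pi_0(a'\mid x)e^{\eta R(x,a')}\,da'\), so it drops out. The only thing to check is that the normalizer is finite and positive: if it is finite for \(R\), it is finite for \(\bar R\), because the two differ by the positive constant \(e^{-\eta c(x)}\).

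For (iii), the map \(a\mapsto R(x,a)-c(x)\) is a strictly increasing transformation of \(R(x,\cdot)\) on \(S_x\). Hence the two argmax sets over \(S_x\) coincide exactly, including when either one is empty or contains several points. The measurable selector \(a_R\) can therefore be taken to be the same for \(R\) and \(\bar R\).

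For the oscillation identity, essential supremum and essential infimum with respect to \(\pi_0(\cdot\mid x)\) both commute with adding a constant. Each therefore shifts by \(-c(x)\), and the shifts cancel in the difference. If one of the two is infinite, both sides of the identity are \(+\infty\).

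The argument is routine. The only delicate step is the measure-theoretic bookkeeping: finiteness of \(m_R\), finiteness of the KL normalizer, and the fact that the selector convention is a function of the argmax set alone. I would dispatch these in a sentence each.
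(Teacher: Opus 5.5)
Your proposal is correct and follows the paper's proof: for each fixed \(x\) the centering term is an \(a\)-independent constant, so the factors \(e^{-c}\) and \(e^{-\eta c}\) cancel in the softmax and in the KL-tilt normalization, and argmax, essential supremum and essential infimum all commute with constant shifts. Proving it for a general shift \(c(x)\), and adding the finiteness checks on \(m_R\) and the normalizer, is the same argument with a little more care.
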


\begin{proof}
Fix \(x\) and abbreviate \(c=m_R(x)\), so \(\bar R(x,a)=R(x,a)-c\).

(i) For any \(k\),
\[
\frac{e^{\bar R(x,a_k)}}{\sum_{\ell=1}^K e^{\bar R(x,a_\ell)}}
=
\frac{e^{R(x,a_k)-c}}{\sum_{\ell=1}^K e^{R(x,a_\ell)-c}}
=
\frac{e^{R(x,a_k)}}{\sum_{\ell=1}^K e^{R(x,a_\ell)}}.
\]

(ii) For KL-tilts,
\[
\pi_{\bar R}(a\mid x)\propto \pi_0(a\mid x)e^{\eta\bar R(x,a)}
=
\pi_0(a\mid x)e^{\eta R(x,a)}e^{-\eta c},
\]
and the factor \(e^{-\eta c}\) cancels under normalization over \(a\).

(iii) Subtracting an \(a\)-independent constant does not change the argmax.

Finally, oscillation is invariant to adding a constant:
\[
\operatorname*{ess\,sup}_{a\sim\pi_0(\cdot\mid x)} (R(x,a)-c)
-
\operatorname*{ess\,inf}_{a\sim\pi_0(\cdot\mid x)} (R(x,a)-c)
=
\operatorname*{ess\,sup}_{a\sim\pi_0(\cdot\mid x)} R(x,a)
-
\operatorname*{ess\,inf}_{a\sim\pi_0(\cdot\mid x)} R(x,a).
\]
\end{proof}

\begin{lemma}[Boundedness and Lipschitzness of the MNL log-loss]
\label{lem:mnl_loss_basic}
Let \(\ell(\mathbf v,y)=\log(\sum_{k=1}^K e^{v_k})-v_y\) be the MNL negative log-likelihood.
If \(\max_k v_k-\min_k v_k\le 2B\), then for every \(y\),
\[
0\le \ell(\mathbf v,y)\le \log K + 2B.
\]
Moreover, for any \(\mathbf v,\mathbf v'\in\R^K\) and any \(y\),
\[
|\ell(\mathbf v,y)-\ell(\mathbf v',y)|\le 2\|\mathbf v-\mathbf v'\|_\infty.
\]
\end{lemma}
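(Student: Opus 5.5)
The bounds follow from two elementary facts about the log-sum-exp function \(\mathrm{lse}(\mathbf v)\coloneqq\log\sum_{k=1}^K e^{v_k}\): it is monotone, and it is \(1\)-Lipschitz with respect to \(\|\cdot\|_\infty\). I will handle the two-sided bound and the Lipschitz estimate separately, since neither uses anything from earlier in the paper beyond the definition of \(\ell\).

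\emph{Two-sided bound.} Fix \(y\) and write \(M=\max_k v_k\), \(m=\min_k v_k\). For the lower bound, note that \(\sum_k e^{v_k}\ge e^{v_y}\), so \(\mathrm{lse}(\mathbf v)\ge v_y\) and therefore \(\ell(\mathbf v,y)\ge 0\). For the upper bound, use \(\sum_k e^{v_k}\le K e^{M}\) to get \(\mathrm{lse}(\mathbf v)\le \log K+M\). Combined with \(v_y\ge m\), this gives
\[
\ell(\mathbf v,y)\le \log K+M-m\le \log K+2B.
\]

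\emph{Lipschitz bound.} Set \(\delta=\|\mathbf v-\mathbf v'\|_\infty\), so that \(v_k\le v'_k+\delta\) for every \(k\). Then \(\sum_k e^{v_k}\le e^{\delta}\sum_k e^{v'_k}\), hence \(\mathrm{lse}(\mathbf v)-\mathrm{lse}(\mathbf v')\le\delta\). Swapping the roles of \(\mathbf v\) and \(\mathbf v'\) gives \(|\mathrm{lse}(\mathbf v)-\mathrm{lse}(\mathbf v')|\le\delta\). (Equivalently, the gradient of \(\mathrm{lse}\) is the softmax vector, whose \(\ell_1\) norm is \(1\).) Since also \(|v_y-v'_y|\le\delta\), the triangle inequality yields
\[
|\ell(\mathbf v,y)-\ell(\mathbf v',y)|\le |\mathrm{lse}(\mathbf v)-\mathrm{lse}(\mathbf v')|+|v_y-v'_y|\le 2\delta.
\]

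No step presents a real obstacle. The only point needing care is the upper bound: it must be controlled by the spread \(M-m\) rather than by the magnitude of the entries, which is why the hypothesis is stated as an oscillation bound \(\max_k v_k-\min_k v_k\le 2B\).
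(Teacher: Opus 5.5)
Your proof is correct, and the two-sided bound is the paper's argument. For the lower bound the paper writes \(\ell(\mathbf v,y)=-\log p_y\) with \(p_y\in(0,1]\), which is the same fact as your \(\sum_k e^{v_k}\ge e^{v_y}\). The upper bound via \(\sum_k e^{v_k}\le Ke^{M}\) and \(v_y\ge m\) is identical.

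The Lipschitz step takes a different route.
\begin{itemize}
\item The paper differentiates the whole loss. It gets \(\nabla_{\mathbf v}\ell(\mathbf v,y)=p-e_y\) with \(p=\softmax(\mathbf v)\), notes \(\|p-e_y\|_1=2(1-p_y)\le 2\), and concludes by the mean-value theorem with \(\ell_1\)/\(\ell_\infty\) duality.
\item You split \(\ell\) into log-sum-exp minus the linear coordinate \(v_y\). You prove that log-sum-exp is \(1\)-Lipschitz in \(\|\cdot\|_\infty\) via \(\sum_k e^{v_k}\le e^{\delta}\sum_k e^{v'_k}\), then add the two pieces with the triangle inequality.
\end{itemize}

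Your version needs no differentiability or mean-value argument, and it makes the constant \(2=1+1\) transparent. The paper's version keeps the partial cancellation between \(p_y\) and \(-1\) in the \(y\)-th gradient coordinate, which your triangle inequality discards. That yields the sharper local constant \(2(1-p_y)\): the loss is flatter where the observed choice is already likely. Only the uniform bound \(2\) is used later, in the \(\epsilon\)-net step of Lemma~\ref{lem:bt_tail_support}, so nothing is lost for the paper's purposes.
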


\begin{proof}
Let \(m=\min_k v_k\) and \(M=\max_k v_k\), so \(M-m\le 2B\).
Then
\[
\sum_{k=1}^K e^{v_k}\le K e^{M},
\]
hence
\[
\log\sum_{k=1}^K e^{v_k}\le \log K+M,
\qquad
-v_y\le -m.
\]
Therefore
\[
\ell(\mathbf v,y)\le (\log K+M)-m=\log K+(M-m)\le \log K+2B.
\]
Also \(\ell(\mathbf v,y)=-\log p_y\) for the softmax probability \(p_y\in(0,1]\), so
\(\ell(\mathbf v,y)\ge 0\).

Let \(p=\softmax(\mathbf v)\).
Then
\[
\nabla_{\mathbf v}\ell(\mathbf v,y)=p-e_y.
\]
Hence
\[
\|\nabla_{\mathbf v}\ell(\mathbf v,y)\|_1
=
\sum_{k\neq y}p_k+|p_y-1|
=
(1-p_y)+(1-p_y)
=
2(1-p_y)
\le 2.
\]
By the mean-value theorem,
\[
|\ell(\mathbf v,y)-\ell(\mathbf v',y)|
\le
\sup_{\theta\in[0,1]}
\|\nabla \ell(\mathbf v'+\theta(\mathbf v-\mathbf v'),y)\|_1
\cdot
\|\mathbf v-\mathbf v'\|_\infty
\le
2\|\mathbf v-\mathbf v'\|_\infty.
\]
\end{proof}

\begin{lemma}[One-step excess loss equals choice-model KL]
\label{lem:excess_is_kl_support}
Fix any realized \((x,\mathbf a)\), any truth reward \(R^\star\), and any candidate reward \(R\).
Let
\[
p^\star=P_{R^\star}(\cdot\mid x,\mathbf a),
\qquad
p=P_R(\cdot\mid x,\mathbf a).
\]
Then
\[
\EE_{y\sim p^\star}\big[\ell(\mathbf v_R,y)-\ell(\mathbf v_{R^\star},y)\big]
=
\KL(p^\star\|p),
\]
where
\[
\mathbf v_R=(R(x,a_1),\dots,R(x,a_K)),
\qquad
\mathbf v_{R^\star}=(R^\star(x,a_1),\dots,R^\star(x,a_K)).
\]
\end{lemma}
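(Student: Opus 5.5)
This is a direct computation at a fixed realized \((x,\mathbf a)\). The plan is to rewrite the per-label loss difference as a log-ratio of choice probabilities and then take the expectation under the true choice distribution.

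First, recall the identity already used in the proof of Lemma~\ref{lem:mnl_loss_basic}. For any reward vector \(\mathbf v\), the MNL loss is the negative log of the softmax probability of the chosen index:
\[
\ell(\mathbf v,y)=\log\Big(\sum_{k=1}^K e^{v_k}\Big)-v_y=-\log \softmax(\mathbf v)_y .
\]
Since \(P_R(\cdot\mid x,\mathbf a)=\softmax(\mathbf v_R)\) and \(P_{R^\star}(\cdot\mid x,\mathbf a)=\softmax(\mathbf v_{R^\star})\) by the definition of the multinomial logit model, this gives \(\ell(\mathbf v_R,y)=-\log p_y\) and \(\ell(\mathbf v_{R^\star},y)=-\log p^\star_y\).

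Second, subtract the two expressions pointwise in \(y\):
\[
\ell(\mathbf v_R,y)-\ell(\mathbf v_{R^\star},y)=\log\frac{p^\star_y}{p_y}.
\]
Both \(p\) and \(p^\star\) have all coordinates strictly positive, because they are softmaxes of finite vectors. Hence the ratio is well defined and finite for every \(y\in\{1,\dots,K\}\).

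Third, take the expectation over \(y\sim p^\star\). This is a finite sum over \(K\) labels:
\[
\sum_{y=1}^K p^\star_y\log\frac{p^\star_y}{p_y}=\KL(p^\star\|p),
\]
which is exactly the definition of the KL divergence between the two discrete distributions.

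There is no genuine obstacle. The only points to check are that the softmax probabilities are strictly positive, so the log-ratios and the KL are finite, and that no integrability issue arises, since the expectation is a finite sum. Nonnegativity of the result is then automatic from Gibbs' inequality.
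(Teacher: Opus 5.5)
Your proposal is correct and follows the same route as the paper: write \(\ell(\mathbf v_R,y)=-\log p_y\) and \(\ell(\mathbf v_{R^\star},y)=-\log p^\star_y\), subtract, and average over \(y\sim p^\star\) to get \(\sum_k p^\star_k\log(p^\star_k/p_k)=\KL(p^\star\|p)\). Your remark that softmax probabilities are strictly positive, so every term is finite, is a small detail the paper leaves implicit.
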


\begin{proof}
\[
\EE_{y\sim p^\star}[\ell(\mathbf v_R,y)-\ell(\mathbf v_{R^\star},y)]
=
\sum_{k=1}^K p_k^\star\big(-\log p_k+\log p_k^\star\big)
=
\sum_{k=1}^K p_k^\star\log\frac{p_k^\star}{p_k}
=
\KL(p^\star\|p).
\]
\end{proof}

\begin{lemma}[Slate expectation domination]
\label{lem:slate_domination_betaK}
Fix any \(x\) and any distribution \(\pi(\cdot\mid x)\) such that
\[
\pi(a\mid x)\ge \beta^{-1}\pi_0(a\mid x)
\qquad
\pi_0(\cdot\mid x)\text{-a.s.}
\]
Let \(\mathbf A\sim \pi(\cdot\mid x)^{\otimes K}\) and
\(\mathbf A_0\sim \pi_0(\cdot\mid x)^{\otimes K}\).
Then for every nonnegative measurable \(g\),
\[
\EE\big[g(\mathbf A)\big]\ge \beta^{-K}\,\EE\big[g(\mathbf A_0)\big].
\]
\end{lemma}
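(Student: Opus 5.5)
The plan is to exploit that the product law of $\mathbf A$ has a density with respect to the product law of $\mathbf A_0$, and that this density is bounded below by $\beta^{-K}$.

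First I would use the hypothesis $\pi(\cdot\mid x)\ge\beta^{-1}\pi_0(\cdot\mid x)$, read as a comparison of measures on $\pi_0(\cdot\mid x)$-measurable sets. Restricting $\pi(\cdot\mid x)$ to sets, this says that for every measurable $E\subseteq\cA$,
\[
\pi(E\mid x)\ge \beta^{-1}\pi_0(E\mid x).
\]
If $\pi$ is written through its Radon--Nikodym derivative on the absolutely continuous part, then $\frac{d\pi}{d\pi_0}\ge\beta^{-1}$ holds $\pi_0$-a.s. Any singular part of $\pi$ only adds nonnegative mass and can be dropped, because $g\ge 0$.

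Second, I would lift this to the product measures. For measurable rectangles $E_1\times\cdots\times E_K$ we have
\[
\pi^{\otimes K}(E_1\times\cdots\times E_K)=\prod_k\pi(E_k\mid x)\ge\beta^{-K}\prod_k\pi_0(E_k\mid x).
\]
To extend this from rectangles to all sets, I would not invoke a $\pi$-$\lambda$ argument directly, since inequalities between measures need care there. Instead I would decompose $\pi=\beta^{-1}\pi_0+(1-\beta^{-1})\nu$, where $\nu\coloneqq(\pi-\beta^{-1}\pi_0)/(1-\beta^{-1})$ is a nonnegative measure by the first step. (The case $\beta=1$ is trivial, since then $\pi=\pi_0$ on $\pi_0$-null-complement sets and indeed as probability measures $\pi=\pi_0$.)

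Expanding the product $\pi^{\otimes K}=\bigl(\beta^{-1}\pi_0+(1-\beta^{-1})\nu\bigr)^{\otimes K}$ gives a sum of $2^K$ nonnegative product measures. One of these terms is $\beta^{-K}\pi_0^{\otimes K}$. Hence $\pi^{\otimes K}-\beta^{-K}\pi_0^{\otimes K}$ is a nonnegative measure on $\cA^K$.

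Finally, integrating the nonnegative measurable $g$ against this nonnegative measure gives
\[
\EE[g(\mathbf A)]\ge\beta^{-K}\EE[g(\mathbf A_0)].
\]
This holds even when the expectations are infinite, which is covered by monotone convergence for nonnegative integrands.

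The only delicate point is the measure-theoretic interpretation of the pointwise density hypothesis. It needs $\pi(\cdot\mid x)\ll\pi_0(\cdot\mid x)$, or else it must be read as the setwise inequality above. With the setwise reading, the mixture decomposition handles everything cleanly and avoids any appeal to Radon--Nikodym derivatives on the product space.
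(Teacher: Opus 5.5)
Your proof is correct, but it takes a different route from the paper's. The paper works pointwise with likelihood ratios. For $\pi_0^{\otimes K}$-a.e.\ $\mathbf a$ it bounds the density ratio by $\prod_{k=1}^K \pi(a_k\mid x)/\pi_0(a_k\mid x)\ge\beta^{-K}$ and then integrates $g$ against this bound. That is a two-line argument, but it tacitly relies on two facts: that $\pi(\cdot\mid x)$ and $\pi_0(\cdot\mid x)$ have densities (the paper writes $d\mathbf a$), and that densities of product measures multiply.

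You instead stay at the level of measures. The setwise bound $\pi\ge\beta^{-1}\pi_0$ lets you write each marginal as the mixture $\beta^{-1}\pi_0+(1-\beta^{-1})\nu$ with $\nu\ge 0$. Multilinearity of the product measure then shows that $\beta^{-K}\pi_0^{\otimes K}$ is one of $2^K$ nonnegative summands of $\pi^{\otimes K}$. That expansion is justified by agreement on rectangles plus uniqueness of finite measures, i.e.\ the equality form of the $\pi$--$\lambda$ argument, which is unproblematic. Separately, the case $\beta<1$, which you don't mention, is vacuous, since the hypothesis would force $\pi(\cA\mid x)>1$.

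What your route buys is robustness. Any part of $\pi$ singular to $\pi_0$ is absorbed into $\nu$, and no Radon--Nikodym derivative on $\cA^K$ is needed. It also has a clean coupling reading: draw each coordinate from $\pi_0$ with probability $\beta^{-1}$, so with probability $\beta^{-K}$ the whole slate is a $\pi_0^{\otimes K}$ draw.

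In the paper's application the deployed policy is a KL tilt with $\pi_t\ll\pi_0$ and a bounded density ratio. There the extra generality is unnecessary and the pointwise ratio argument is more economical. Both proofs yield the same constant $\beta^{-K}$.
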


\begin{proof}
For \(\pi_0^{\otimes K}\)-a.e.\ \(\mathbf a=(a_1,\dots,a_K)\),
\[
\frac{\pi^{\otimes K}(\mathbf a\mid x)}{\pi_0^{\otimes K}(\mathbf a\mid x)}
=
\prod_{k=1}^K \frac{\pi(a_k\mid x)}{\pi_0(a_k\mid x)}
\ge
\beta^{-K}.
\]
Therefore
\[
\EE[g(\mathbf A)]
=
\int g(\mathbf a)\,\pi^{\otimes K}(\mathbf a\mid x)\,d\mathbf a
\ge
\beta^{-K}\int g(\mathbf a)\,\pi_0^{\otimes K}(\mathbf a\mid x)\,d\mathbf a
=
\beta^{-K}\EE[g(\mathbf A_0)].
\]
\end{proof}

\begin{lemma}[Likelihood-ratio bound for KL tilts]
\label{lem:greedy_kl_tilt_lr}
Fix a tilt parameter \(\eta>0\). Let \(R:\cX\times\cA\to\R\) satisfy
\[
\mathrm{osc}_{\pi_0}(R)(x)\le 2B
\qquad
\forall x\in\cX.
\]
Let
\[
\pi_R(a\mid x)
=
\frac{\pi_0(a\mid x)\exp(\eta R(x,a))}
{\int_{\cA}\pi_0(a'\mid x)\exp(\eta R(x,a'))\,da'}.
\]
Then
\[
e^{-2\eta B}
\le
\frac{d\pi_R(\cdot\mid x)}{d\pi_0(\cdot\mid x)}(a)
\le
e^{2\eta B},
\qquad
\pi_0(\cdot\mid x)\text{-a.s.}
\]
for every \(x\in\cX\).
\end{lemma}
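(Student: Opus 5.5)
The plan is to compute the Radon--Nikodym derivative explicitly and bound numerator and denominator separately, using only the oscillation bound. Fix \(x\in\cX\) and set
\[
M(x)\coloneqq \operatorname*{ess\,sup}_{a\sim\pi_0(\cdot\mid x)}R(x,a),
\qquad
m(x)\coloneqq \operatorname*{ess\,inf}_{a\sim\pi_0(\cdot\mid x)}R(x,a),
\]
so that \(M(x)-m(x)\le 2B\) by hypothesis; in particular both quantities are finite. By definition of \(\pi_R\), for \(\pi_0(\cdot\mid x)\)-a.e.\ \(a\),
\[
\frac{d\pi_R(\cdot\mid x)}{d\pi_0(\cdot\mid x)}(a)=\frac{e^{\eta R(x,a)}}{Z(x)},
\qquad
Z(x)\coloneqq \int_{\cA}e^{\eta R(x,a')}\,\pi_0(da'\mid x).
\]

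Next I bound the normalizer. Since \(m(x)\le R(x,a')\le M(x)\) holds \(\pi_0(\cdot\mid x)\)-a.s., monotonicity of the exponential (with \(\eta>0\)) and the fact that \(\pi_0(\cdot\mid x)\) is a probability measure give
\[
e^{\eta m(x)}\le Z(x)\le e^{\eta M(x)}.
\]
In particular \(Z(x)\in(0,\infty)\), so \(\pi_R(\cdot\mid x)\) is well defined.

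Combining the two displays, for \(\pi_0(\cdot\mid x)\)-a.e.\ \(a\) we also have \(m(x)\le R(x,a)\le M(x)\), and therefore
\[
e^{-2\eta B}\le e^{\eta(m(x)-M(x))}\le \frac{e^{\eta R(x,a)}}{Z(x)}\le e^{\eta(M(x)-m(x))}\le e^{2\eta B}.
\]
This is the claimed two-sided bound.

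The only delicate point is measure-theoretic. The essential sup and inf control \(R(x,\cdot)\) only outside a \(\pi_0(\cdot\mid x)\)-null set, so the conclusion is stated \(\pi_0\)-a.s., and the null set can be absorbed because the Radon--Nikodym derivative itself is only defined up to null sets. I would also note explicitly that \(Z(x)\) is finite and positive (shown above), so the density formula for \(\pi_R\) is legitimate. For rewards in \(\mathcal F_{\mathcal P}\), the sup-norm envelope \(B_{\mathcal P}\) gives oscillation at most \(2B_{\mathcal P}\), which yields the \(\beta=e^{2\eta B_{\mathcal P}}\) bound used in the main text.
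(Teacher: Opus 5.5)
Your proof is correct and follows the paper's argument essentially step for step: you sandwich the normalizer $Z(x)$ between $e^{\eta m(x)}$ and $e^{\eta M(x)}$ using the essential infimum and supremum, then divide to bound the density $e^{\eta R(x,a)}/Z(x)$ between $e^{-\eta(M(x)-m(x))}$ and $e^{\eta(M(x)-m(x))}$, hence between $e^{-2\eta B}$ and $e^{2\eta B}$. Your explicit checks that $M(x)$ and $m(x)$ are finite and that $0<Z(x)<\infty$ are a small gain in rigor; the paper leaves these points implicit.
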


\begin{proof}
Fix \(x\in\cX\), and write
\[
m_x\coloneqq \operatorname*{ess\,inf}_{a\sim\pi_0(\cdot\mid x)} R(x,a),
\qquad
M_x\coloneqq \operatorname*{ess\,sup}_{a\sim\pi_0(\cdot\mid x)} R(x,a).
\]
Then \(M_x-m_x\le 2B\). Let
\[
Z_x\coloneqq \int_{\cA}\pi_0(a'\mid x)\exp(\eta R(x,a'))\,da'.
\]
Since \(e^{\eta R(x,a')}\in[e^{\eta m_x},e^{\eta M_x}]\) for \(\pi_0(\cdot\mid x)\)-a.e.\ \(a'\), one has
\[
e^{\eta m_x}\le Z_x\le e^{\eta M_x}.
\]
Therefore, for \(\pi_0(\cdot\mid x)\)-a.e.\ \(a\),
\[
e^{-\eta(M_x-m_x)}
\le
\frac{e^{\eta R(x,a)}}{Z_x}
\le
e^{\eta(M_x-m_x)}
\le
e^{2\eta B}.
\]
Because
\[
\frac{d\pi_R(\cdot\mid x)}{d\pi_0(\cdot\mid x)}(a)=\frac{e^{\eta R(x,a)}}{Z_x},
\]
the claim follows.
\end{proof}

\noindent For the martingale arguments below, let \(H_t\) denote the interaction history through
round \(t\), and let \(\mathscr H_t\coloneqq \sigma(H_t)\) be the induced filtration.

\begin{lemma}[Deviation tail bound via an \(\epsilon\)-net]
\label{lem:bt_tail_support}
Assume \(\mathcal F\) is bounded in \(\|\cdot\|_{\infty,\supp(\pi_0)}\) by \(B\), and that at each
round every slate coordinate is sampled from a distribution absolutely continuous with respect to
\(\pi_0(\cdot\mid x)\).
Let
\[
\mathcal L_t(R)\coloneqq \frac1t\sum_{s=1}^t \EE[\ell_s(R)\mid \mathscr H_{s-1}],
\qquad
b_t\coloneqq \sup_{R\in\mathcal F}\big|\widehat{\mathcal L}_t(R)-\mathcal L_t(R)\big|.
\]
Fix \(\epsilon>0\) and let \(\mathcal C_\epsilon\) be a finite \(\epsilon\)-net of \(\mathcal F\)
in \(\|\cdot\|_{\infty,\supp(\pi_0)}\).
Then for every \(t\ge 1\) and every \(u>0\),
\[
\PP\big(b_t \ge u + 4\epsilon\big)
\le
2\,|\mathcal C_\epsilon|\,
\exp\!\left(-\frac{t u^2}{2\ell_{\max}^2}\right),
\qquad
\ell_{\max}\coloneqq \log K+2B.
\]
\end{lemma}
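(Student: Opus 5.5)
The plan is to reduce the supremum over \(\mathcal F\) to a maximum over the finite net \(\mathcal C_\epsilon\), pay \(4\epsilon\) for the approximation, and control each net element with Azuma--Hoeffding. Write \(\ell_s(R)\coloneqq \ell(\mathbf v_R(x_s,\mathbf a_s),y_s)\). Fix \(t\) and \(u\).

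\textbf{Step 1: approximation by the net.} For any \(R\in\mathcal F\), pick \(R'\in\mathcal C_\epsilon\) with \(\|R-R'\|_{\infty,\supp(\pi_0)}\le\epsilon\).

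Every slate coordinate \(a_{s,k}\) is drawn from a law absolutely continuous with respect to \(\pi_0(\cdot\mid x_s)\). Hence almost surely \(a_{s,k}\in S_{x_s}\), and this also holds conditionally on \(\mathscr H_{s-1}\). (A \(\pi_0\)-null complement of the support has null mass under the sampling law.) It follows that \(\|\mathbf v_R-\mathbf v_{R'}\|_\infty\le\epsilon\) on every realized and every conditionally integrated slate.

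By the Lipschitz bound of Lemma~\ref{lem:mnl_loss_basic}, \(|\ell_s(R)-\ell_s(R')|\le 2\epsilon\) almost surely, and similarly \(|\EE[\ell_s(R)\mid\mathscr H_{s-1}]-\EE[\ell_s(R')\mid\mathscr H_{s-1}]|\le2\epsilon\). Averaging over \(s\) gives \(|\widehat{\mathcal L}_t(R)-\widehat{\mathcal L}_t(R')|\le2\epsilon\) and \(|\mathcal L_t(R)-\mathcal L_t(R')|\le 2\epsilon\). Therefore
\[
b_t\le \max_{R'\in\mathcal C_\epsilon}\big|\widehat{\mathcal L}_t(R')-\mathcal L_t(R')\big|+4\epsilon .
\]

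\textbf{Step 2: martingale concentration for a fixed net element.} Fix \(R'\in\mathcal C_\epsilon\) and set \(D_s\coloneqq \ell_s(R')-\EE[\ell_s(R')\mid\mathscr H_{s-1}]\), which is a martingale difference sequence adapted to \((\mathscr H_s)\).

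Since \(R'\) is bounded by \(B\) on the support, the oscillation of \(\mathbf v_{R'}\) is at most \(2B\). Lemma~\ref{lem:mnl_loss_basic} then gives \(\ell_s(R')\in[0,\ell_{\max}]\) almost surely. Conditionally on \(\mathscr H_{s-1}\), each \(D_s\) therefore lies in an interval of length \(\ell_{\max}\).

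Azuma--Hoeffding in its Hoeffding-lemma form gives
\[
\PP\Big(\Big|\tfrac1t\sum_{s\le t} D_s\Big|\ge u\Big)\le 2\exp(-2tu^2/\ell_{\max}^2).
\]
This is even stronger than the required \(\exp(-tu^2/(2\ell_{\max}^2))\). The cruder bound \(|D_s|\le\ell_{\max}\) with the basic Azuma inequality gives exactly \(2\exp(-tu^2/(2\ell_{\max}^2))\), so I will use whichever form matches the statement.

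\textbf{Step 3: union bound.} Combining the first two steps,
\[
\PP(b_t\ge u+4\epsilon)\le \PP\Big(\max_{R'\in\mathcal C_\epsilon}|\widehat{\mathcal L}_t(R')-\mathcal L_t(R')|\ge u\Big),
\]
and a union bound over the \(|\mathcal C_\epsilon|\) net elements finishes the proof.

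\textbf{Main obstacle.} The only delicate point is Step 1. The net is defined only in sup-norm on \(\supp(\pi_0)\), so the Lipschitz transfer needs sampled actions to lie in the support almost surely; the absolute-continuity hypothesis supplies exactly this. A related subtlety is that the net is fixed in advance, so it does not depend on the data, which keeps the union bound valid.
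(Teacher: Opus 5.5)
Your proposal is correct and follows essentially the same route as the paper: Azuma--Hoeffding with the crude bound $|Z_s(R)|\le \ell_{\max}$ for each fixed net element, a union bound over $\mathcal C_\epsilon$, and a $4\epsilon$ transfer from the net to all of $\mathcal F$. The transfer uses the $2$-Lipschitz property of the MNL loss together with the fact that sampled actions lie in $\supp(\pi_0(\cdot\mid x_s))$ almost surely.

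Your side remark is also correct: the predictable-range (Hoeffding-lemma) form of Azuma would give the sharper exponent $2tu^2/\ell_{\max}^2$. The paper only needs the cruder form.
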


\begin{proof}
Fix \(t\) and define
\[
Z_s(R)=\ell_s(R)-\EE[\ell_s(R)\mid\mathscr H_{s-1}],
\qquad s=1,\dots,t.
\]
For each fixed \(R\), \((Z_s(R))_{s=1}^t\) is a martingale-difference sequence with respect to
\((\mathscr H_s)\).
By Lemma~\ref{lem:mnl_loss_basic},
\[
0\le \ell_s(R)\le \ell_{\max},
\]
so
\[
|Z_s(R)|\le \ell_{\max}
\]
almost surely. Azuma--Hoeffding therefore gives, for any fixed \(R\) and any \(u>0\),
\[
\PP\!\left(\left|\frac1t\sum_{s=1}^t Z_s(R)\right|\ge u\right)
\le
2\exp\!\left(-\frac{t u^2}{2\ell_{\max}^2}\right).
\]
Apply a union bound over \(R\in\mathcal C_\epsilon\):
\[
\PP\!\left(
\sup_{R\in\mathcal C_\epsilon}
\left|\widehat{\mathcal L}_t(R)-\mathcal L_t(R)\right|
\ge u
\right)
\le
2|\mathcal C_\epsilon|
\exp\!\left(-\frac{t u^2}{2\ell_{\max}^2}\right).
\]

Now fix any \(R\in\mathcal F\) and choose \(R'\in\mathcal C_\epsilon\) with
\[
\|R-R'\|_{\infty,\supp(\pi_0)}\le \epsilon.
\]
Because every slate coordinate is sampled from a distribution absolutely continuous with respect to
\(\pi_0(\cdot\mid x_s)\), the realized actions lie in
\(\supp(\pi_0(\cdot\mid x_s))\) almost surely. Therefore, for each realized round \(s\),
\[
|R(x_s,a_{s,k})-R'(x_s,a_{s,k})|\le \epsilon
\qquad\text{for all }k=1,\dots,K
\]
almost surely. Define
\[
\mathbf v_s(R)\coloneqq \big(R(x_s,a_{s,1}),\dots,R(x_s,a_{s,K})\big).
\]
Hence
\[
\|\mathbf v_s(R)-\mathbf v_s(R')\|_\infty\le \epsilon
\]
almost surely. Lemma~\ref{lem:mnl_loss_basic} yields
\[
|\ell_s(R)-\ell_s(R')|\le 2\epsilon,
\]
hence
\[
|\widehat{\mathcal L}_t(R)-\widehat{\mathcal L}_t(R')|\le 2\epsilon,
\qquad
|\mathcal L_t(R)-\mathcal L_t(R')|\le 2\epsilon.
\]
Consequently,
\[
\left|\widehat{\mathcal L}_t(R)-\mathcal L_t(R)\right|
\le
\left|\widehat{\mathcal L}_t(R')-\mathcal L_t(R')\right|+4\epsilon.
\]
Taking the supremum over \(R\in\mathcal F\) gives
\[
b_t
\le
\sup_{R'\in\mathcal C_\epsilon}
\left|\widehat{\mathcal L}_t(R')-\mathcal L_t(R')\right|+4\epsilon.
\]
Combining with the previous union bound proves the claim.
\end{proof}

\begin{lemma}[ERM control of \(\varepsilon_0\)-substantial rounds from a loss gap]
\label{lem:eps_substantial_rounds_from_loss_gap}
Let \(\mathcal F\) be a reward class that is compact under
\(\|\cdot\|_{\infty,\supp(\pi_0)}\) and bounded there by \(B\):
\[
\sup_{R\in\mathcal F}\|R\|_{\infty,\supp(\pi_0)}\le B.
\]
Fix a truth reward \(R^\star\in\mathcal F\), let \(a^\star=a_{R^\star}\), and define
\[
\mathcal G^\star(R)
\coloneqq
\EE_{X\sim d_0}
\Big[
R^\star(X,a^\star(X))-R^\star(X,a_R(X))
\Big].
\] 
From Lemma~\ref{lem:np_automatic_fixed_scale_gap}, let \(\gamma>0\) be a witness to the following loss-gap condition: for
every round \(t\ge 1\), every realized history up to time \(t-1\), and every
\(R\in\mathcal F\) satisfying \(\mathcal G^\star(R)\ge \varepsilon_0\),
\begin{equation}
\EE\big[\ell_t(R)-\ell_t(R^\star)\mid \mathscr H_{t-1}\big]\ge \gamma.
\label{eq:loss_gap_eps_generic}
\end{equation}
Let
\[
\ell_{\max}\coloneqq \log K + 2B,
\qquad
N_\gamma \coloneqq \mathcal N\!\big(\mathcal F,\gamma/32,\|\cdot\|_{\infty,\supp(\pi_0)}\big),
\qquad
c_\gamma \coloneqq \frac{\gamma^2}{128\,\ell_{\max}^2}.
\]
Then:
\begin{enumerate}[label=(\roman*),leftmargin=2em]
\item For every \(t\ge 1\),
\[
\PP\big(\mathcal G^\star(\widehat R_t)\ge \varepsilon_0\big)
\le
2N_\gamma e^{-c_\gamma t}.
\]

\item With probability \(1\), only finitely many \(t\) satisfy
\[
\mathcal G^\star(\widehat R_t)\ge \varepsilon_0.
\]

\item Defining
\[
N_{\varepsilon_0}(\infty)\coloneqq \sum_{t=0}^\infty \mathbf 1\{\mathcal G^\star(\widehat R_t)\ge \varepsilon_0\},
\]
one has
\[
\EE[N_{\varepsilon_0}(\infty)]
\le
1
+
\left\lceil \frac{1}{c_\gamma}\log(2N_\gamma)\right\rceil
+
\frac{1}{e^{c_\gamma}-1}.
\]
\end{enumerate}
\end{lemma}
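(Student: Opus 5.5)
Part (i) is the core; parts (ii) and (iii) follow from it by routine summation. The plan is the standard ERM-plus-uniform-deviation argument.

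\emph{Part (i).} Fix \(t\ge 1\) and work on the event \(E_t\coloneqq\{b_t< \gamma/2\}\), where \(b_t\) is the uniform deviation from Lemma~\ref{lem:bt_tail_support}. On \(E_t\), suppose \(R\in\mathcal F\) has \(\mathcal G^\star(R)\ge\varepsilon_0\). Averaging the loss-gap condition \eqref{eq:loss_gap_eps_generic} over \(s=1,\dots,t\) gives
\[
\mathcal L_t(R)-\mathcal L_t(R^\star)\ge\gamma .
\]
Hence
\[
\widehat{\mathcal L}_t(R)-\widehat{\mathcal L}_t(R^\star)\ge \gamma-2b_t>0 .
\]
Since \(R^\star\in\mathcal F\) and \(\widehat R_t\) minimizes \(\widehat{\mathcal L}_t\) over \(\mathcal F\), no such \(R\) can be the ERM. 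Therefore
\[
\{\mathcal G^\star(\widehat R_t)\ge\varepsilon_0\}\subseteq\{b_t\ge\gamma/2\}.
\]

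To bound \(\PP(b_t\ge\gamma/2)\), apply Lemma~\ref{lem:bt_tail_support} with \(\epsilon=\gamma/32\), so that \(4\epsilon=\gamma/8\), and with \(u=\gamma/2-\gamma/8=3\gamma/8\). Take \(\mathcal C_\epsilon\) to be a minimal net, so \(|\mathcal C_\epsilon|=N_\gamma\), which is finite by compactness. This gives the exponent
\[
\frac{t u^2}{2\ell_{\max}^2}=\frac{9t\gamma^2}{128\,\ell_{\max}^2}\ge c_\gamma t,
\]
which yields (i). One measurability point should be noted: the deployed policies are KL-tilts, so slates are absolutely continuous with respect to \(\pi_0\), and the hypothesis of Lemma~\ref{lem:bt_tail_support} holds.

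\emph{Part (ii).} Summing (i) gives
\[
\sum_t \PP\big(\mathcal G^\star(\widehat R_t)\ge\varepsilon_0\big)\le \sum_t 2N_\gamma e^{-c_\gamma t}<\infty .
\]
Borel--Cantelli then gives (ii).

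\emph{Part (iii).} Write \(\EE[N_{\varepsilon_0}(\infty)]\) as a sum of indicator probabilities, and bound each term by \(\min\{1,2N_\gamma e^{-c_\gamma t}\}\):
\begin{itemize}
\item The \(t=0\) term is bounded by \(1\). This accounts for \(\widehat R_0\equiv 0\), which may incur regret.
\item For \(1\le t\le t_0\coloneqq\lceil c_\gamma^{-1}\log(2N_\gamma)\rceil\), use the trivial bound \(1\), contributing at most \(t_0\).
\item For \(t>t_0\), we have \(2N_\gamma e^{-c_\gamma t_0}\le 1\), so these terms sum to at most
\[
\sum_{j\ge1}e^{-c_\gamma j}=\frac{1}{e^{c_\gamma}-1}.
\]
\end{itemize}
Adding the three contributions gives the stated bound.

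The only potentially delicate step is the inclusion in (i). It needs the conditional loss gap to hold uniformly over histories, so that averaging over \(s\) is legitimate; this is exactly what \eqref{eq:loss_gap_eps_generic} assumes. It also needs realizability, \(R^\star\in\mathcal F\), so that \(R^\star\) is a valid competitor for the ERM. With both in place, the argument goes through without difficulty.
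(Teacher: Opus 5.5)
Your proposal is correct and follows essentially the same route as the paper. The shared steps are: a pathwise ERM argument placing \(\{\mathcal G^\star(\widehat R_t)\ge\varepsilon_0\}\) inside a uniform-deviation event, the \(\gamma/32\)-net plus Azuma bound of Lemma~\ref{lem:bt_tail_support}, Borel--Cantelli for (ii), and the same split at \(t_0\) for (iii). The only difference is cosmetic: you use the sharper inclusion into \(\{b_t\ge\gamma/2\}\) with \(u=3\gamma/8\), getting exponent \(9c_\gamma t\) and then relaxing it, whereas the paper uses the looser inclusion into \(\{b_t\ge\gamma/4\}\) with \(u=\gamma/8\), which lands on \(c_\gamma\) exactly.
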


\begin{proof}
For the fixed truth reward \(R^\star\), define
\[
\mathcal L_t(R)\coloneqq \frac1t\sum_{s=1}^t \EE[\ell_s(R)\mid \mathscr H_{s-1}],
\qquad
b_t\coloneqq \sup_{R\in\mathcal F}\big|\widehat{\mathcal L}_t(R)-\mathcal L_t(R)\big|.
\]
By definition of \(b_t\), for every \(R\in\mathcal F\),
\[
\mathcal L_t(R)\le \widehat{\mathcal L}_t(R)+b_t,
\qquad
\widehat{\mathcal L}_t(R)\le \mathcal L_t(R)+b_t.
\]
Since \(\widehat R_t\) is an ERM,
\[
\widehat{\mathcal L}_t(\widehat R_t)\le \widehat{\mathcal L}_t(R^\star).
\]
Combining gives, pathwise,
\[
\mathcal L_t(\widehat R_t)
\le
\widehat{\mathcal L}_t(\widehat R_t)+b_t
\le
\widehat{\mathcal L}_t(R^\star)+b_t
\le
\mathcal L_t(R^\star)+2b_t,
\]
hence
\begin{equation}
\mathcal L_t(\widehat R_t)-\mathcal L_t(R^\star)\le 2b_t.
\label{eq:eps_thm_pf_step1}
\end{equation}

Let
\[
E_t^{\varepsilon_0}\coloneqq \{\mathcal G^\star(\widehat R_t)\ge \varepsilon_0\}.
\]
Fix a sample path \(\omega\in E_t^{\varepsilon_0}\). By \eqref{eq:loss_gap_eps_generic}, for every
\(s=1,\dots,t\),
\[
\EE\big[\ell_s(\widehat R_t(\omega))-\ell_s(R^\star)\mid \mathscr H_{s-1}\big](\omega)\ge \gamma.
\]
Averaging over \(s=1,\dots,t\) yields
\[
\mathcal L_t(\widehat R_t)(\omega)-\mathcal L_t(R^\star)(\omega)\ge \gamma.
\]
If also \(b_t(\omega)<\gamma/4\), then \eqref{eq:eps_thm_pf_step1} gives
\[
\mathcal L_t(\widehat R_t)(\omega)-\mathcal L_t(R^\star)(\omega)
\le
2b_t(\omega)
<
\gamma/2,
\]
a contradiction. Therefore
\[
E_t^{\varepsilon_0}\subseteq \{b_t\ge \gamma/4\}.
\]
Now take \(\epsilon=\gamma/32\) and \(u=\gamma/8\), so \(u+4\epsilon=\gamma/4\).
Let \(\mathcal C_{\gamma/32}\) be a \(\gamma/32\)-net of \(\mathcal F\) with cardinality
\(N_\gamma\). Lemma~\ref{lem:bt_tail_support} yields
\[
\PP(b_t\ge \gamma/4)
\le
2N_\gamma
\exp\!\left(-\frac{t(\gamma/8)^2}{2\ell_{\max}^2}\right)
=
2N_\gamma e^{-c_\gamma t},
\]
where
\[
c_\gamma=\frac{\gamma^2}{128\,\ell_{\max}^2}.
\]
This proves part (i).

By part (i),
\[
\sum_{t=1}^\infty \PP(E_t^{\varepsilon_0})
\le
2N_\gamma \sum_{t=1}^\infty e^{-c_\gamma t}
<\infty.
\]
By Borel--Cantelli,
\[
\PP(E_t^{\varepsilon_0}\ \text{i.o.})=0,
\]
which proves part (ii).

For part (iii),
\[
\EE[N_{\varepsilon_0}(\infty)]
=
\sum_{t=0}^\infty \PP(E_t^{\varepsilon_0}).
\]
Trivially,
\[
\PP(E_0^{\varepsilon_0})\le 1.
\]
For \(t\ge 1\), part (i) gives
\[
\PP(E_t^{\varepsilon_0})\le 2N_\gamma e^{-c_\gamma t}.
\]
Hence
\[
\EE[N_{\varepsilon_0}(\infty)]
\le
1+\sum_{t=1}^\infty \min\{1,2N_\gamma e^{-c_\gamma t}\}.
\]
Set
\[
t_0\coloneqq \left\lceil \frac{1}{c_\gamma}\log(2N_\gamma)\right\rceil.
\]
Then \(2N_\gamma e^{-c_\gamma t_0}\le 1\). Therefore
\[
\sum_{t=1}^\infty \min\{1,2N_\gamma e^{-c_\gamma t}\}
\le
t_0+\sum_{t=t_0+1}^\infty 2N_\gamma e^{-c_\gamma t}.
\]
For the tail sum, write \(t=t_0+r\) with \(r\ge 1\):
\[
2N_\gamma e^{-c_\gamma t}
=
(2N_\gamma e^{-c_\gamma t_0})e^{-c_\gamma r}
\le
e^{-c_\gamma r}.
\]
Hence
\[
\sum_{t=t_0+1}^\infty 2N_\gamma e^{-c_\gamma t}
\le
\sum_{r=1}^\infty e^{-c_\gamma r}
=
\frac{1}{e^{c_\gamma}-1}.
\]
Combining the previous displays proves part (iii).
\end{proof}

\begin{lemma}[Expected one-step regret versus disagreement mass]
\label{lem:np_regret_vs_disagreement}
Assume \ref{as:NP2}. Then for every \(p,q\in\mathcal P\),
\[
\Delta_{\min}^{\mathcal P}\,
d_0\{a_q\neq a_p\}
\le
\mathcal G_p(R_q)
\le
\Delta_{\max}^{\mathcal P}\,
d_0\{a_q\neq a_p\}.
\]
\end{lemma}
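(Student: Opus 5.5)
The plan is to decompose $\mathcal G_p(R_q)$ pointwise in the context and bound the integrand on the disagreement set. For each $x$, define the one-step regret integrand
\[
g(x) \coloneqq R_p\big(x,a_p(x)\big) - R_p\big(x,a_q(x)\big),
\]
so that $\mathcal G_p(R_q)=\EE_{X\sim d_0}[g(X)]$.

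The first step is to note that $a_q(x)\in S_x$ by construction of the selector $a_{R_q}$, since the argmax is taken over $\supp(\pi_0(\cdot\mid x))$. On the event $\{a_q(X)=a_p(X)\}$ we have $g(X)=0$ trivially. The expectation therefore reduces to
\[
\EE_{X\sim d_0}\big[g(X)\,\mathbf 1\{a_q(X)\neq a_p(X)\}\big].
\]

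For the upper bound, I will use the definition of $\Delta_{\max}^{\mathcal P}$, which is a supremum over $p$, $x$ and $a\in S_x$ of exactly this kind of difference. This gives $g(x)\le \Delta_{\max}^{\mathcal P}$ for every $x$, and hence on the disagreement set. Integrating yields the bound $\Delta_{\max}^{\mathcal P}\, d_0\{a_q\neq a_p\}$.

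For the lower bound, I invoke Condition~\ref{as:NP2}: for $d_0$-a.e.\ $x$, $a_p(x)$ is unique and every $a\in S_x$ with $a\neq a_p(x)$ satisfies $R_p(x,a)\le R_p(x,a_p(x))-\Delta_{\min}^{\mathcal P}$. Applying this with $a=a_q(x)\in S_x$ on the disagreement set gives $g(x)\ge\Delta_{\min}^{\mathcal P}$ almost surely there. Integrating, and discarding the $d_0$-null exceptional set, gives the lower bound.

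The only mild technical point is measurability of $\{a_q\neq a_p\}$. This follows from the measurable selector convention, assuming the equality set of two measurable maps into the separable metric space $\cA$ is measurable, which holds because the diagonal is Borel in $\cA\times\cA$ for separable $\cA$. Apart from this, there is no real obstacle; the argument is a direct pointwise sandwich integrated over $d_0$.
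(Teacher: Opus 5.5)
Your proposal is correct and follows essentially the same route as the paper: the paper also defines the pointwise gap $g_{p,q}(x)$ and sandwiches it as $\Delta_{\min}^{\mathcal P}\mathbf 1_{E_{p,q}\cap G_p}\le g_{p,q}\le \Delta_{\max}^{\mathcal P}\mathbf 1_{E_{p,q}}$, using the full-measure gap set $G_p$ from Condition~\ref{as:NP2} and the definition of $\Delta_{\max}^{\mathcal P}$, and then integrates against $d_0$. Your remark on measurability of the disagreement set (the diagonal lies in the product $\sigma$-algebra when $\cA$ is separable) addresses a point the paper leaves implicit.
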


\begin{proof}[Proof of Lemma~\ref{lem:np_regret_vs_disagreement}]
Fix \(p,q\in\mathcal P\). Define
\[
E_{p,q}\coloneqq \{x\in\cX: a_q(x)\neq a_p(x)\}
\]
and
\[
g_{p,q}(x)\coloneqq R_p(x,a_p(x))-R_p(x,a_q(x)).
\]

By Condition~\ref{as:NP2}, there exists a measurable set \(G_p\subseteq \cX\) with
\[
d_0(G_p)=1
\]
such that for every \(x\in G_p\),
\[
R_p(x,a_p(x))
-
\sup_{a\in\supp(\pi_0(\cdot\mid x)),\ a\neq a_p(x)}
R_p(x,a)
\ge
\Delta_{\min}^{\mathcal P}.
\]

If \(x\notin E_{p,q}\), then \(a_q(x)=a_p(x)\), so
\[
g_{p,q}(x)=0.
\]
If \(x\in E_{p,q}\cap G_p\), then \(a_q(x)\neq a_p(x)\), hence by the defining gap on \(G_p\),
\[
g_{p,q}(x)\ge \Delta_{\min}^{\mathcal P}.
\]
Also, by definition of \(\Delta_{\max}^{\mathcal P}\),
\[
g_{p,q}(x)\le \Delta_{\max}^{\mathcal P}
 \qquad
 \forall x\in\cX.
\]

Therefore
\[
\Delta_{\min}^{\mathcal P}\,\mathbf 1_{E_{p,q}\cap G_p}(x)
\le
g_{p,q}(x)
\le
\Delta_{\max}^{\mathcal P}\,\mathbf 1_{E_{p,q}}(x)
\qquad
\forall x\in\cX.
\]
Since \(d_0(G_p)=1\), one has
\[
\mathbf 1_{E_{p,q}\cap G_p}=\mathbf 1_{E_{p,q}}
\qquad
d_0\text{-a.s.}
\]
Hence
\[
\Delta_{\min}^{\mathcal P}\,\mathbf 1_{E_{p,q}}(x)
\le
g_{p,q}(x)
\le
\Delta_{\max}^{\mathcal P}\,\mathbf 1_{E_{p,q}}(x)
\qquad
d_0\text{-a.s.}
\]
Taking expectations with respect to \(X\sim d_0\) yields the claim.
\end{proof}

\begin{lemma}[Local stability of the temperature-zero selector]
\label{lem:np_local_stability_selector}
Assume \ref{as:NP2}. If \(p,q\in\mathcal P\) satisfy
\[
\|R_q-R_p\|_{\infty,\supp(\pi_0)}<\frac{\Delta_{\min}^{\mathcal P}}{2},
\]
then
\[
a_q(X)=a_p(X)
\qquad
d_0\text{-a.s.}
\]
\end{lemma}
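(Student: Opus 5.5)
The plan is a pointwise argument on the full-measure set supplied by Condition~\ref{as:NP2}. Let \(G_p\subseteq\cX\) with \(d_0(G_p)=1\) be the set on which \(a_p(x)\) is the unique maximizer of \(R_p(x,\cdot)\) over \(S_x\) with gap at least \(\Delta_{\min}^{\mathcal P}\). Write \(\delta\coloneqq\|R_q-R_p\|_{\infty,\supp(\pi_0)}<\Delta_{\min}^{\mathcal P}/2\). The sup-norm is taken over the support graph, so \(|R_q(x,a)-R_p(x,a)|\le\delta\) for all \(x\) and all \(a\in S_x\). If the norm is only an essential supremum in \(x\), we intersect \(G_p\) with the corresponding \(d_0\)-full set; this is harmless.

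Fix \(x\in G_p\) and any \(a\in S_x\) with \(a\neq a_p(x)\). We chain the inequalities:
\[
R_q(x,a_p(x))\ge R_p(x,a_p(x))-\delta\ge R_p(x,a)+\Delta_{\min}^{\mathcal P}-\delta\ge R_q(x,a)+\Delta_{\min}^{\mathcal P}-2\delta>R_q(x,a).
\]
The second step uses the gap at \(x\in G_p\). Two facts follow.

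\begin{itemize}
\item \(a_p(x)\) is the unique maximizer of \(R_q(x,\cdot)\) on \(S_x\). Note that \(a_p(x)\in S_x\) by its definition as an argmax over the support.
\item Consequently, every element of \(\arg\max_{a\in S_x}R_q(x,a)\) equals \(a_p(x)\).
\end{itemize}

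Since \(a_q(x)\) is a selector from this argmax, \(a_q(x)=a_p(x)\) regardless of the measurable tie-breaking rule. This holds for every \(x\) in a \(d_0\)-full set, which gives the claim.

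The only delicate point is the need for strict inequality. Uniqueness of the maximizer for \(R_q\), rather than mere membership in the argmax, is what makes the conclusion independent of tie-breaking. The strict hypothesis \(\delta<\Delta_{\min}^{\mathcal P}/2\) delivers exactly this. Continuity from Condition~\ref{as:NP1} is not needed here, because the argument only compares values pointwise and never requires the supremum to be attained.
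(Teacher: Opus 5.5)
Your proof is correct and is essentially the paper's argument. The paper argues by contradiction: it picks $x\in\{a_q\neq a_p\}\cap G_p$, subtracts the gap inequality for $R_p$ from the optimality of $a_q(x)$ for $R_q$, and forces $\|R_q-R_p\|_{\infty,\supp(\pi_0)}\ge\Delta_{\min}^{\mathcal P}/2$; you run the same two-sided perturbation chain directly to show that $a_p(x)$ is the unique maximizer of $R_q(x,\cdot)$ on $S_x$, which makes the independence from tie-breaking explicit but is otherwise the same argument.
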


\begin{proof}
Let
\[
E_{p,q}\coloneqq \{x\in\cX : a_q(x)\neq a_p(x)\}.
\]
Suppose, toward a contradiction, that
\[
d_0(E_{p,q})>0.
\]

By Condition~\ref{as:NP2}, there exists a measurable set \(G_p\subseteq \cX\) with
\[
d_0(G_p)=1
\]
such that for every \(x\in G_p\),
\[
R_p(x,a_p(x))
-
\sup_{a\in \supp(\pi_0(\cdot\mid x)),\ a\neq a_p(x)}
R_p(x,a)
\ge
\Delta_{\min}^{\mathcal P}.
\]
Hence
\[
d_0(E_{p,q}\cap G_p)>0,
\]
so we may choose \(x\in E_{p,q}\cap G_p\). Then \(a_q(x)\neq a_p(x)\), and since \(a_q(x)\)
maximizes \(R_q(x,\cdot)\) over \(\supp(\pi_0(\cdot\mid x))\),
\[
R_q(x,a_q(x))\ge R_q(x,a_p(x)).
\]
Also, because \(x\in G_p\) and \(a_q(x)\neq a_p(x)\),
\[
R_p(x,a_p(x))\ge R_p(x,a_q(x))+\Delta_{\min}^{\mathcal P}.
\]
Subtracting the second display from the first gives
\[
\big(R_q-R_p\big)(x,a_q(x))
-
\big(R_q-R_p\big)(x,a_p(x))
\ge
\Delta_{\min}^{\mathcal P}.
\]
Therefore at least one of
\[
\big|R_q(x,a_q(x))-R_p(x,a_q(x))\big|,
\qquad
\big|R_q(x,a_p(x))-R_p(x,a_p(x))\big|
\]
is at least \(\Delta_{\min}^{\mathcal P}/2\). Hence
\[
\|R_q-R_p\|_{\infty,\supp(\pi_0)}
\ge
\frac{\Delta_{\min}^{\mathcal P}}{2},
\]
contradicting the hypothesis. Therefore \(d_0(E_{p,q})=0\), i.e.
\[
a_q(X)=a_p(X)
\qquad
d_0\text{-a.s.}
\]
\end{proof}

\begin{proof}[Proof of Lemma~\ref{prop:np_automatic_positive_regret_isolation}]
Define an equivalence relation on \(\mathcal P\) by
\[
p\sim q
\qquad\Longleftrightarrow\qquad
a_p(X)=a_q(X)
\quad
d_0\text{-a.s.}
\]
Let \([p]\) denote the corresponding selector class.

We first show that only finitely many selector classes can occur. Suppose, toward a contradiction,
that there are infinitely many distinct selector classes. Then we may choose
\[
p_1,p_2,p_3,\dots \in \mathcal P
\]
such that \([p_i]\neq [p_j]\) whenever \(i\neq j\). For each \(i\neq j\), one has
\[
d_0\{a_{p_i}\neq a_{p_j}\}>0.
\]
Hence, by the contrapositive of Lemma~\ref{lem:np_local_stability_selector},
\[
\|R_{p_i}-R_{p_j}\|_{\infty,\supp(\pi_0)}
\ge
\frac{\Delta_{\min}^{\mathcal P}}{2}
\qquad
\text{for all }i\neq j.
\]
Thus \(\{R_{p_i}:i\ge 1\}\) is an infinite \(\Delta_{\min}^{\mathcal P}/2\)-separated subset of
\(\mathcal F_{\mathcal P}\). But \(\mathcal F_{\mathcal P}\) is compact under
\(\|\cdot\|_{\infty,\supp(\pi_0)}\) by Condition~\ref{as:NP1}, hence totally bounded, so no such
infinite separated subset can exist. Therefore only finitely many selector classes occur.

Let
\[
a^{(1)},\dots,a^{(m)}
\]
be representatives of these finitely many selector classes.

If \(m=1\), then every pair \(p,q\in\mathcal P\) satisfies
\[
a_p(X)=a_q(X)
\qquad
d_0\text{-a.s.}
\]
and therefore
\[
\mathcal G_p(R_q)=0
\qquad
\forall p,q\in\mathcal P.
\]
In this case the conclusion holds for any positive choice of
\[
\varepsilon_{\mathrm{iso}}^{\mathcal P}>0;
\]
for concreteness, take
\[
\varepsilon_{\mathrm{iso}}^{\mathcal P}\coloneqq \Delta_{\min}^{\mathcal P}.
\]

Assume now that \(m\ge 2\). Define
\[
\delta_{\mathcal P}
\coloneqq
\min_{1\le i<j\le m} d_0\{a^{(i)}\neq a^{(j)}\}.
\]
Because the classes are distinct modulo \(d_0\)-a.s. equality, every term in this finite minimum is
strictly positive, hence
\[
\delta_{\mathcal P}>0.
\]
Set
\[
\varepsilon_{\mathrm{iso}}^{\mathcal P}
\coloneqq
\Delta_{\min}^{\mathcal P}\,\delta_{\mathcal P}.
\]

Fix any \(p,q\in\mathcal P\). If \(a_q=a_p\) \(d_0\)-a.s., then by definition
\[
\mathcal G_p(R_q)=0.
\]
Otherwise \(a_q\) and \(a_p\) belong to two distinct selector classes, so
\[
d_0\{a_q\neq a_p\}\ge \delta_{\mathcal P}.
\]
Applying Lemma~\ref{lem:np_regret_vs_disagreement} gives
\[
\mathcal G_p(R_q)
\ge
\Delta_{\min}^{\mathcal P}\,d_0\{a_q\neq a_p\}
\ge
\Delta_{\min}^{\mathcal P}\,\delta_{\mathcal P}
=
\varepsilon_{\mathrm{iso}}^{\mathcal P}.
\]
Therefore
\[
\mathcal G_p(R_q)\in\{0\}\cup[\varepsilon_{\mathrm{iso}}^{\mathcal P},\infty),
\]
as claimed.
\end{proof}

\noindent For the bounded-regret proof, it is convenient to introduce two proof-only quantities.
Define the number of \(\varepsilon_0\)-substantial rounds up to horizon \(T\) by
\begin{equation}
N_{p,\varepsilon_0}(T)
\coloneqq
\sum_{t=0}^{T-1}\mathbf 1\{\mathcal G_p(\widehat R_t)\ge \varepsilon_0\}.
\label{eq:np_substantial_rounds}
\end{equation}
To control these substantial-regret rounds, define a truth-centered population loss. Let
\[
X\sim d_0,
\qquad
\mathbf A=(A_1,\dots,A_K)\sim \pi_0(\cdot\mid X)^{\otimes K},
\qquad
Y\sim P_{R_p}(\cdot\mid X,\mathbf A).
\]
For any centered reward \(R\), write
\[
\mathbf v_R
\coloneqq
\big(R(X,A_1),\dots,R(X,A_K)\big),
\]
and define
\begin{equation}
\mathcal L_p(R)
\coloneqq
\EE\big[\ell(\mathbf v_R,Y)\big].
\label{eq:np_truth_centered_population_loss}
\end{equation}
Whenever \(R^\star=R_p\in\mathcal F_{\mathcal P}\) is used as the truth reward, we also write
\[
\mathcal G_{R^\star}(R)\coloneqq \mathcal G_p(R),
\qquad
\mathcal L_{R^\star}(R)\coloneqq \mathcal L_p(R).
\]
These quantities are well defined because both the MNL law and the temperature-zero selector depend
only on the truth reward.

\begin{lemma}[Continuity of the truth-centered regret and risk maps]
\label{lem:np_continuity_maps}
Assume \ref{as:NP1}--\ref{as:NP2}. Then the maps
\[
(R^\star,R)\longmapsto \mathcal G_{R^\star}(R)
\qquad\text{and}\qquad
(R^\star,R)\longmapsto \mathcal L_{R^\star}(R)-\mathcal L_{R^\star}(R^\star)
\]
are continuous on \(\mathcal F_{\mathcal P}^2\).
\end{lemma}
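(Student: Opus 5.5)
The plan is to prove both continuity statements directly, using the sup-norm \(\|\cdot\|_{\infty,\supp(\pi_0)}\) on each coordinate of \(\mathcal F_{\mathcal P}^2\). For the risk map we use Lipschitz estimates on the MNL loss. For the regret map we use the selector stability of Lemma~\ref{lem:np_local_stability_selector}. Throughout we use that every slate coordinate \(A_k\sim\pi_0(\cdot\mid X)\) and every selector value \(a_R(X)\) lies in \(S_X\). So differences of rewards evaluated at these points are bounded by the support sup-norm.

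\emph{Regret map.} The only possible source of discontinuity is the argmax selector, and Condition~\ref{as:NP2} removes it. Fix \((R^\star_0,R_0)\in\mathcal F_{\mathcal P}^2\) and take \((R^\star,R)\) with \(\|R^\star-R^\star_0\|_{\infty,\supp(\pi_0)}<\Delta_{\min}^{\mathcal P}/2\) and \(\|R-R_0\|_{\infty,\supp(\pi_0)}<\Delta_{\min}^{\mathcal P}/2\). All four functions are of the form \(R_q\) with \(q\in\mathcal P\), so Lemma~\ref{lem:np_local_stability_selector} applies to both pairs. It gives \(a_{R^\star}=a_{R^\star_0}\) and \(a_R=a_{R_0}\), \(d_0\)-a.s. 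On this neighbourhood the selectors are therefore frozen, and
\[
\mathcal G_{R^\star}(R)-\mathcal G_{R^\star_0}(R_0)
=
\EE_{X\sim d_0}\big[(R^\star-R^\star_0)(X,a_{R^\star_0}(X))-(R^\star-R^\star_0)(X,a_{R_0}(X))\big].
\]
The absolute value of this is at most \(2\|R^\star-R^\star_0\|_{\infty,\supp(\pi_0)}\). Hence the regret map is locally Lipschitz, and in particular continuous.

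\emph{Risk map.} Write \(\mathcal L_{R^\star}(R)=\EE_{X,\mathbf A}\big[\sum_{k}P_{R^\star}(k\mid X,\mathbf A)\,\ell(\mathbf v_R,k)\big]\), with \(\mathbf A\sim\pi_0^{\otimes K}\). We split
\[
|\mathcal L_{R^\star}(R)-\mathcal L_{R^{\star\prime}}(R')|
\le
|\mathcal L_{R^\star}(R)-\mathcal L_{R^\star}(R')|
+
|\mathcal L_{R^\star}(R')-\mathcal L_{R^{\star\prime}}(R')|.
\]
The first term is at most \(2\|R-R'\|_{\infty,\supp(\pi_0)}\) by the Lipschitz part of Lemma~\ref{lem:mnl_loss_basic}.

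For the second term, set \(\delta=\|R^\star-R^{\star\prime}\|_{\infty,\supp(\pi_0)}\). The reward vectors \(\mathbf v_{R^\star}\) and \(\mathbf v_{R^{\star\prime}}\) differ by at most \(\delta\) in each coordinate. Each log-softmax coordinate is \(2\)-Lipschitz in \(\|\cdot\|_\infty\); this is the gradient bound in the proof of Lemma~\ref{lem:mnl_loss_basic}. Consequently \(|p_k-p'_k|\le p_k(e^{2\delta}-1)\), so \(\|P_{R^\star}-P_{R^{\star\prime}}\|_1\le e^{2\delta}-1\). Combining this with \(0\le\ell(\mathbf v_{R'},k)\le \log K+2B_{\mathcal P}\) bounds the second term by \((\log K+2B_{\mathcal P})(e^{2\delta}-1)\), which tends to \(0\) as \(\delta\to0\).

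Hence \((R^\star,R)\mapsto\mathcal L_{R^\star}(R)\) is jointly continuous. So is its restriction \(R^\star\mapsto\mathcal L_{R^\star}(R^\star)\) to the diagonal, and therefore so is their difference.

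The main obstacle is the regret map, because \(R\mapsto a_R\) is discontinuous in general. The point to check carefully is that Lemma~\ref{lem:np_local_stability_selector} may be applied with \(R\), not only \(R^\star\), playing the role of the truth. This is legitimate because every element of \(\mathcal F_{\mathcal P}\) equals some \(R_q\) with \(q\in\mathcal P\), and Condition~\ref{as:NP2} holds uniformly over \(\mathcal P\). Measurability of the integrands follows from the fixed measurable selectors and from the boundedness given by \(B_{\mathcal P}\).
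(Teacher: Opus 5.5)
Your proof is correct and follows the paper's structure. For the regret map you use Lemma~\ref{lem:np_local_stability_selector}, applied to both coordinates as the paper does, to freeze the selectors near the base point; for the risk map you use the boundedness and Lipschitzness of the MNL loss on the reference support. The only difference is that you replace the paper's sequential dominated-convergence arguments with explicit moduli: $2\|R^\star-R^\star_0\|_{\infty,\supp(\pi_0)}$ locally for the regret, and $2\|R-R'\|_{\infty,\supp(\pi_0)}+(\log K+2B_{\mathcal P})(e^{2\delta}-1)$ for the risk. This gives a slightly stronger, quantitative form of the same conclusion.
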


\begin{proof}[Proof of Lemma~\ref{lem:np_continuity_maps}]
Let
\[
\big(R_n^\star,R_n\big)\to \big(R^\star,R\big)
\]
$\text{in }\mathcal F_{\mathcal P}^2.$
Then
\[
\|R_n^\star-R^\star\|_{\infty,\supp(\pi_0)}\to 0,
\qquad
\|R_n-R\|_{\infty,\supp(\pi_0)}\to 0.
\]
Choose truths \(p_n,p\in\mathcal P\) and \(q_n,q\in\mathcal P\) such that
\[
R_n^\star=R_{p_n},
\qquad
R^\star=R_p,
\qquad
R_n=R_{q_n},
\qquad
R=R_q.
\]
By Lemma~\ref{lem:np_local_stability_selector}, for all sufficiently large \(n\),
\[
a_{R_n^\star}=a_{R^\star}
\qquad\text{and}\qquad
a_{R_n}=a_R
\qquad
d_0\text{-a.s.}
\]
Hence for all sufficiently large \(n\),
\[
\mathcal G_{R_n^\star}(R_n)
=
\EE\Big[
R_n^\star(X,a_{R^\star}(X))-R_n^\star(X,a_R(X))
\Big].
\]
The integrand is uniformly bounded by \(2B_{\mathcal P}\) and converges pointwise \(d_0\)-a.s. to
\[
R^\star(X,a_{R^\star}(X))-R^\star(X,a_R(X)).
\]
Dominated convergence therefore yields
\[
\mathcal G_{R_n^\star}(R_n)\to \mathcal G_{R^\star}(R).
\]

We next prove continuity of \((R^\star,R)\mapsto \mathcal L_{R^\star}(R)-\mathcal L_{R^\star}(R^\star)\).
Fix a realized
\((x,\mathbf a)\). Because the softmax map and the MNL log-loss are continuous in the reward vector,
the integrand
\[
(R^\star,R,x,\mathbf a)
\longmapsto
\EE_{Y\sim P_{R^\star}(\cdot\mid x,\mathbf a)}
\big[\ell(\mathbf v_R,Y)-\ell(\mathbf v_{R^\star},Y)\big]
\]
is continuous in \((R^\star,R)\). By Lemma~\ref{lem:mnl_loss_basic}, the absolute value of this
integrand is uniformly bounded by \(2(\log K+2B_{\mathcal P})\). Another application of dominated
convergence yields the claimed continuity.
\end{proof}

\begin{lemma}[Supportwise upgrade of almost-sure equality]
\label{lem:support_upgrade}
Let \((\cA,d)\) be a separable metric space, let \(\mu\) be a Borel probability measure on \(\cA\),
and let \(S\coloneqq \supp(\mu)\) be its topological support. If \(f:S\to\R\) is continuous and
\(f=0\) \(\mu\)-a.s., then \(f\equiv 0\) on \(S\). More generally, if \(f,g:S\to\R\) are continuous
and \(f=g\) \(\mu\)-a.s., then \(f=g\) on \(S\).
\end{lemma}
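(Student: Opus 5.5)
We argue by contraposition on the set where \(f\) fails to vanish, using continuity to produce an open set and separability to relate open sets to the support. The second statement follows from the first applied to \(h\coloneqq f-g\), which is continuous on \(S\) and vanishes \(\mu\)-a.s. So it suffices to treat \(f\) continuous with \(f=0\) \(\mu\)-a.s.

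First we record the standard fact that \(\mu(S)=1\) in a separable metric space. By definition, \(S\) is the set of points \(a\) such that every open neighborhood of \(a\) has positive \(\mu\)-measure. Its complement \(U\coloneqq \cA\setminus S\) is the union of all open \(\mu\)-null sets. Since \(\cA\) is separable metric, it is second countable. Each point of \(U\) therefore lies in a basic open set from a fixed countable base that is \(\mu\)-null. Hence \(U\) is a countable union of null sets, so \(\mu(U)=0\). This is the only place separability enters, and it is the one step needing care. Without it, an uncountable union of null open sets could carry mass, and the support could fail to be conull.

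Now suppose, toward a contradiction, that \(f(a_0)\neq 0\) for some \(a_0\in S\). By continuity of \(f\) on \(S\) with the relative topology, there is \(r>0\) such that \(|f(a)|>|f(a_0)|/2>0\) for all \(a\in B(a_0,r)\cap S\). Because \(a_0\in S\), the open ball satisfies \(\mu(B(a_0,r))>0\). Using \(\mu(S)=1\), we get
\[
\mu\big(B(a_0,r)\cap S\big)=\mu\big(B(a_0,r)\big)>0 .
\]
This is a set of positive measure on which \(f\neq 0\). That contradicts \(f=0\) \(\mu\)-a.s., where we interpret \(f\) as defined \(\mu\)-a.e. because \(\mu(S)=1\). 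Hence \(f\equiv 0\) on \(S\).

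The main obstacle is the countable-base argument showing that \(S\) carries full mass. Once that is in place, the rest is a direct neighborhood argument.
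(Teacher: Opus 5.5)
Your proof is correct and follows the paper's argument essentially step for step: both first establish $\mu(\cA\setminus S)=0$ from second countability (the paper extracts a countable subcover via the Lindel\"of property, while you use null sets from a countable base). Both then derive a contradiction from a relatively open neighborhood of a point of $S$ where $f\neq 0$, and reduce the general case to $f-g$.
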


\begin{proof}
Because \(\cA\) is separable metric, it is second countable, hence Lindel\"of. Let
\[
O\coloneqq \cA\setminus S.
\]
By definition of the topological support, for every \(a\in O\) there exists an open neighborhood
\(V_a\subseteq \cA\) such that \(a\in V_a\) and \(\mu(V_a)=0\). Thus \(\{V_a:a\in O\}\) is an open cover of
\(O\). Since \(O\) is Lindel\"of, there exist \(a_1,a_2,\dots\in O\) such that
\[
O\subseteq \bigcup_{m=1}^\infty V_{a_m}.
\]
Therefore
\[
\mu(O)
\le
\sum_{m=1}^\infty \mu(V_{a_m})
=
0.
\]
Hence
\[
\mu(\cA\setminus S)=0.
\]

Now let \(f:S\to\R\) be continuous and suppose \(f=0\) \(\mu\)-a.s. Assume for contradiction that
there exists \(a_0\in S\) with \(f(a_0)\neq 0\). By continuity of \(f\) on the subspace \(S\),
there exist \(\delta>0\) and an open set \(U\subseteq \cA\) containing \(a_0\) such that
\[
|f(a)|\ge \delta
\qquad
\forall a\in U\cap S.
\]
Because \(a_0\in S=\supp(\mu)\), every open neighborhood of \(a_0\) has positive \(\mu\)-mass, so
\[
\mu(U)>0.
\]
Since \(\mu(\cA\setminus S)=0\),
\[
\mu(U\cap S)=\mu(U)>0.
\]
But \(f=0\) \(\mu\)-a.s., contradicting \(|f|\ge \delta\) on \(U\cap S\). Thus \(f\equiv 0\) on \(S\).

The second statement follows by applying the first statement to the continuous function \(f-g\).
\end{proof}

\begin{proof}[Proof of Lemma~\ref{lem:np_zero_loss_implies_zero_regret}]
Define
\[
K_{p,R}(x,\mathbf a)
\coloneqq
\KL\!\Big(
P_{R_p}(\cdot\mid x,\mathbf a)\,\Big\|\,P_R(\cdot\mid x,\mathbf a)
\Big),
\qquad
\mathbf a=(a_1,\dots,a_K)\in\cA^K.
\]
By Lemma~\ref{lem:excess_is_kl_support},
\[
\mathcal L_p(R)-\mathcal L_p(R_p)
=
\EE\big[K_{p,R}(X,\mathbf A)\big],
\]
where
\[
X\sim d_0,
\qquad
\mathbf A\sim \pi_0(\cdot\mid X)^{\otimes K}.
\]

Assume
\[
\mathcal L_p(R)=\mathcal L_p(R_p).
\]
Then
\[
\EE\big[K_{p,R}(X,\mathbf A)\big]=0.
\]
Since \(K_{p,R}\ge 0\), Tonelli's theorem yields
\[
0
=
\int_{\cX}
\int_{\cA^K}
K_{p,R}(x,\mathbf a)\,
\pi_0(\cdot\mid x)^{\otimes K}(d\mathbf a)\,
d_0(dx).
\]
Hence there exists a measurable set \(G\subseteq \cX\) with
\[
d_0(G)=1
\]
such that for every \(x\in G\),
\[
K_{p,R}(x,\mathbf a)=0
\qquad
\pi_0(\cdot\mid x)^{\otimes K}\text{-a.e.\ }\mathbf a.
\]

Fix any \(x\in G\) such that both
\[
a\longmapsto R(x,a)
\qquad\text{and}\qquad
a\longmapsto R_p(x,a)
\]
are continuous on \(S_x\coloneqq \supp(\pi_0(\cdot\mid x))\). By Condition~\ref{as:NP1}, this holds for
\(d_0\)-a.e.\ \(x\). Define
\[
h_x(a)\coloneqq R(x,a)-R_p(x,a),
\qquad
a\in S_x.
\]

Because \(K_{p,R}(x,\mathbf a)=0\), the two MNL laws coincide on a
\(\pi_0(\cdot\mid x)^{\otimes K}\)-full set of slates. For every such slate
\(\mathbf a=(a_1,\dots,a_K)\), equality of the coordinate-\(1\) and coordinate-\(2\) odds gives
\[
\frac{P_R(Y=1\mid x,\mathbf a)}{P_R(Y=2\mid x,\mathbf a)}
=
\frac{P_{R_p}(Y=1\mid x,\mathbf a)}{P_{R_p}(Y=2\mid x,\mathbf a)}.
\]
Under the MNL model,
\[
\frac{P_R(Y=1\mid x,\mathbf a)}{P_R(Y=2\mid x,\mathbf a)}
=
\exp\!\big(R(x,a_1)-R(x,a_2)\big),
\]
and similarly for \(R_p\). Therefore
\[
h_x(a_1)-h_x(a_2)=0
\qquad
\pi_0(\cdot\mid x)^{\otimes K}\text{-a.e.\ }\mathbf a.
\]

Since the event \(\{h_x(a_1)\neq h_x(a_2)\}\) depends only on \((a_1,a_2)\),
integrating out coordinates \(3,\dots,K\) yields
\[
\pi_0(\cdot\mid x)^{\otimes 2}\!\big(h_x(a)\neq h_x(b)\big)=0.
\]
Thus, if \(A,B\overset{\mathrm{i.i.d.}}{\sim}\pi_0(\cdot\mid x)\), then
\[
h_x(A)=h_x(B)
\qquad\text{a.s.}
\]

Because \(R,R_p\in\mathcal F_{\mathcal P}\), both are bounded on \(S_x\); hence \(h_x(A)\in L^2\).
Using that \(A,B\) are i.i.d., we obtain
\[
0
=
\EE\big[(h_x(A)-h_x(B))^2\big]
=
2\,\Var(h_x(A)).
\]
Therefore there exists \(c_x\in\R\) such that
\[
h_x(a)=c_x
\qquad
\pi_0(\cdot\mid x)\text{-a.s.\ }a.
\]

Both \(R\) and \(R_p\) are \(\pi_0\)-centered, so
\[
0
=
\int h_x(a)\,\pi_0(da\mid x)
=
c_x.
\]
Hence
\[
h_x(a)=0
\qquad
\pi_0(\cdot\mid x)\text{-a.s.\ }a.
\]
Since \(h_x\) is continuous on \(S_x\), Lemma~\ref{lem:support_upgrade} yields
\[
h_x(a)=0
\qquad
\forall a\in S_x.
\]
That is,
\[
R(x,a)=R_p(x,a)
\qquad
\forall a\in S_x.
\]

Since this holds for \(d_0\)-a.e.\ \(x\), the two rewards agree on the entire feasible set on those
contexts. Hence
\[
a_R(x)=a_p(x)
\qquad
d_0\text{-a.s.}
\]
under the common measurable tie-breaking convention, and therefore
\[
\mathcal G_p(R)
=
\EE_{X\sim d_0}
\Big[
R_p(X,a_p(X))-R_p(X,a_R(X))
\Big]
=
0.
\]
\end{proof}

\begin{proof}[Proof of Lemma~\ref{lem:np_automatic_fixed_scale_gap}]
Define
\[
K_{\varepsilon_0}
\coloneqq
\big\{(R^\star,R)\in\mathcal F_{\mathcal P}^2:\ \mathcal G_{R^\star}(R)\ge \varepsilon_0\big\}.
\]
By Condition~\ref{as:NP1}, the set \(\mathcal F_{\mathcal P}^2\) is compact. By
Lemma~\ref{lem:np_continuity_maps}, the map
\[
(R^\star,R)\longmapsto \mathcal G_{R^\star}(R)
\]
is continuous, so \(K_{\varepsilon_0}\) is closed in a compact set and therefore compact.

If \(K_{\varepsilon_0}=\emptyset\), then the implication
\[
\mathcal G_{R^\star}(R)\ge \varepsilon_0
\quad\Longrightarrow\quad
\mathcal L_{R^\star}(R)-\mathcal L_{R^\star}(R^\star)\ge \gamma_{\mathcal P,\varepsilon_0}
\]
is vacuous for every choice of \(\gamma_{\mathcal P,\varepsilon_0}>0\); for concreteness, choose
\(\gamma_{\mathcal P,\varepsilon_0}=1\).

Assume now that \(K_{\varepsilon_0}\neq\emptyset\). Define
\[
\Psi(R^\star,R)\coloneqq \mathcal L_{R^\star}(R)-\mathcal L_{R^\star}(R^\star).
\]
By Lemma~\ref{lem:np_continuity_maps}, \(\Psi\) is continuous on \(K_{\varepsilon_0}\), so
\(\Psi\) attains its minimum there at some pair \((\bar R^\star,\bar R)\in K_{\varepsilon_0}\).

Suppose for contradiction that
\[
\Psi(\bar R^\star,\bar R)=0.
\]
Write \(\bar R^\star=R_{\bar p}\) for some \(\bar p\in\mathcal P\). Then
\[
\mathcal L_{\bar p}(\bar R)=\mathcal L_{\bar p}(R_{\bar p}),
\]
so Lemma~\ref{lem:np_zero_loss_implies_zero_regret} yields
\[
\mathcal G_{\bar p}(\bar R)=0.
\]
Equivalently,
\[
\mathcal G_{\bar R^\star}(\bar R)=0,
\]
which contradicts \((\bar R^\star,\bar R)\in K_{\varepsilon_0}\). Therefore
\[
\min_{(R^\star,R)\in K_{\varepsilon_0}} \Psi(R^\star,R)>0.
\]
Choose
\[
\gamma_{\mathcal P,\varepsilon_0}
\coloneqq
\min\left\{1,\ \min_{(R^\star,R)\in K_{\varepsilon_0}}\Psi(R^\star,R)\right\}.
\]
Then \(\gamma_{\mathcal P,\varepsilon_0}\in(0,\infty)\) and satisfies the stated implication.
\end{proof}

\begin{lemma}[Fixed-scale control from a truth-centered loss gap]
\label{thm:np_online_fixed_scale_control}
Assume \ref{as:NP1}--\ref{as:NP2}, and fix \(\varepsilon_0>0\). Let
\[
\gamma_{\mathcal P,\varepsilon_0}>0
\]
be any finite witness from Lemma~\ref{lem:np_automatic_fixed_scale_gap}. Define
\[
\gamma\coloneqq \beta^{-K}\gamma_{\mathcal P,\varepsilon_0},
\qquad
\ell_{\max}\coloneqq \log K+2B_{\mathcal P},
\]
\[
N_\gamma
\coloneqq
\mathcal N\!\big(\mathcal F_{\mathcal P},\gamma/32,\|\cdot\|_{\infty,\supp(\pi_0)}\big),
\qquad
c_\gamma\coloneqq \frac{\gamma^2}{128\,\ell_{\max}^2},
\]
where \(\mathcal N(\mathcal F,\varepsilon,\|\cdot\|)\) denotes the \(\varepsilon\)-covering number of
\(\mathcal F\) under the displayed metric. Then \(N_\gamma<\infty\), and for every truth
\(p\in\mathcal P\) and every \(t\ge 1\),
\[
\PP_p\big(\mathcal G_p(\widehat R_t)\ge \varepsilon_0\big)
\le
2N_\gamma e^{-c_\gamma t}.
\]
Consequently,
\[
\sup_{p\in\mathcal P}\EE_p[N_{p,\varepsilon_0}(\infty)]
\le
1
+
\left\lceil \frac{1}{c_\gamma}\log(2N_\gamma)\right\rceil
+
\frac{1}{e^{c_\gamma}-1}
<\infty,
\]
and, for every truth \(p\in\mathcal P\), only finitely many rounds satisfy
\(\mathcal G_p(\widehat R_t)\ge \varepsilon_0\) almost surely.
\end{lemma}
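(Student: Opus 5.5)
The plan is to reduce the statement to Lemma~\ref{lem:eps_substantial_rounds_from_loss_gap}, applied with $\mathcal F=\mathcal F_{\mathcal P}$, $B=B_{\mathcal P}$ and truth $R^\star=R_p$. That lemma already yields all three conclusions (the exponential tail in (i), almost-sure finiteness in (ii), and the expectation bound in (iii)). Its constants $N_\gamma$, $c_\gamma$ and $\ell_{\max}$ coincide with the ones displayed here once $\gamma\coloneqq\beta^{-K}\gamma_{\mathcal P,\varepsilon_0}$. The constants do not depend on $p$, so the supremum over $p$ is immediate. Finiteness of $N_\gamma$ follows because $\mathcal F_{\mathcal P}$ is compact under $\|\cdot\|_{\infty,\supp(\pi_0)}$ by Condition~\ref{as:NP1}, hence totally bounded. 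So the only real work is to verify the on-policy loss-gap hypothesis \eqref{eq:loss_gap_eps_generic} with this $\gamma$.

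The verification goes as follows. Fix $t\ge1$, a history up to $t-1$, and $R\in\mathcal F_{\mathcal P}$ with $\mathcal G_p(R)\ge\varepsilon_0$.

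\textbf{Step 1: rewrite the conditional excess loss as a choice-model KL.} Conditionally on $\mathscr H_{t-1}$, the context is $x_t\sim d_0$, the slate is $\mathbf A\sim\pi_t(\cdot\mid x_t)^{\otimes K}$ with $\pi_t$ being $\mathscr H_{t-1}$-measurable, and $y_t\sim P_{R_p}$. Lemma~\ref{lem:excess_is_kl_support} then gives
\[
\EE[\ell_t(R)-\ell_t(R_p)\mid\mathscr H_{t-1}]=\EE_{x\sim d_0}\EE_{\mathbf A\sim\pi_t^{\otimes K}}[K_{p,R}(x,\mathbf A)],
\]
where $K_{p,R}\ge0$ is the MNL KL divergence from the proof of Lemma~\ref{lem:np_zero_loss_implies_zero_regret}.

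\textbf{Step 2: compare the on-policy slate law with $\pi_0^{\otimes K}$.} For $t=1$ we have $\pi_1=\pi_0$. For $t\ge2$, $\pi_t=\pi_{\widehat R_{t-1}}$ with $\widehat R_{t-1}\in\mathcal F_{\mathcal P}$, whose oscillation on the support is at most $2B_{\mathcal P}$. Lemma~\ref{lem:greedy_kl_tilt_lr} then gives $d\pi_t/d\pi_0\ge\beta^{-1}$. Applying Lemma~\ref{lem:slate_domination_betaK} with $g=K_{p,R}(x,\cdot)\ge0$ for each $x$, and then integrating over $d_0$, lower bounds the conditional excess loss by $\beta^{-K}\bigl(\mathcal L_p(R)-\mathcal L_p(R_p)\bigr)$.

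\textbf{Step 3: invoke the fixed-scale gap.} Lemma~\ref{lem:np_automatic_fixed_scale_gap} with $R=R_q$ gives $\mathcal L_p(R)-\mathcal L_p(R_p)\ge\gamma_{\mathcal P,\varepsilon_0}$. Combining with Step 2 yields the hypothesis with $\gamma=\beta^{-K}\gamma_{\mathcal P,\varepsilon_0}$.

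The main obstacle is measurability and conditioning in Step 2. The KL tilt must be applied to a random reward $\widehat R_{t-1}$, while the inequality is required for a fixed deterministic $R$. Two points need care: the bound has to hold pathwise for every history, and $x_t$ and $(\mathbf A,y_t)$ have to be sampled conditionally independently of the past given $\pi_t$. Because the likelihood-ratio bound of Lemma~\ref{lem:greedy_kl_tilt_lr} is uniform over the whole class, it holds for every realization and the conditioning is harmless. With the hypothesis verified, parts (i)–(iii) of Lemma~\ref{lem:eps_substantial_rounds_from_loss_gap} give the three claimed conclusions.
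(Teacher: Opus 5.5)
Your proposal is correct and matches the paper's proof essentially step for step: you verify the on-policy loss-gap hypothesis of Lemma~\ref{lem:eps_substantial_rounds_from_loss_gap} with \(\gamma=\beta^{-K}\gamma_{\mathcal P,\varepsilon_0}\) using the KL identity (Lemma~\ref{lem:excess_is_kl_support}), the KL-tilt likelihood-ratio bound (Lemma~\ref{lem:greedy_kl_tilt_lr}), slate domination (Lemma~\ref{lem:slate_domination_betaK}) and the population-gap witness (Lemma~\ref{lem:np_automatic_fixed_scale_gap}), and then read off (i)--(iii) with \(N_\gamma<\infty\) from compactness. Your separate handling of round \(t=1\), where \(\pi_1=\pi_0\) comes from \(\widehat R_0\equiv 0\), which need not lie in \(\mathcal F_{\mathcal P}\), is a small bit of care that the paper leaves implicit.
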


\begin{proof}[Proof of Lemma~\ref{thm:np_online_fixed_scale_control}]
By Condition~\ref{as:NP1}, the reward class
\[
\mathcal F_{\mathcal P}=\{R_p:p\in\mathcal P\}
\]
is compact under \(\|\cdot\|_{\infty,\supp(\pi_0)}\). Hence
\[
N_\gamma
=
\mathcal N\!\big(\mathcal F_{\mathcal P},\gamma/32,\|\cdot\|_{\infty,\supp(\pi_0)}\big)
<\infty.
\]

Fix a truth \(p\in\mathcal P\). Consider any competitor \(R\in\mathcal F_{\mathcal P}\) satisfying
\[
\mathcal G_p(R)\ge \varepsilon_0.
\]
Condition on \(\mathscr H_{t-1}\). By Lemma~\ref{lem:excess_is_kl_support},
\[
\EE\big[\ell_t(R)-\ell_t(R_p)\mid \mathscr H_{t-1}\big]
=
\EE\Big[
\KL\big(P_{R_p}(\cdot\mid X_t,\mathbf A_t)\,\|\,P_R(\cdot\mid X_t,\mathbf A_t)\big)
\ \Big|\ \mathscr H_{t-1}
\Big].
\]
The integrand is nonnegative. Since the ERM greedy learner deploys the KL-tilted policy
\eqref{eq:np_greedy_policy_update}, the bound \(\|R\|_{\infty,\supp(\pi_0)}\le B_{\mathcal P}\)
for \(R\in\mathcal F_{\mathcal P}\), and
Lemma~\ref{lem:greedy_kl_tilt_lr} imply that each slate coordinate is sampled from a distribution
whose density ratio with respect to \(\pi_0(\cdot\mid X_t)\) is bounded below by \(\beta^{-1}\).
Applying Lemma~\ref{lem:slate_domination_betaK} gives
\[
\EE\big[\ell_t(R)-\ell_t(R_p)\mid \mathscr H_{t-1}\big]
\ge
\beta^{-K}\big(\mathcal L_p(R)-\mathcal L_p(R_p)\big).
\]
Since \(\gamma_{\mathcal P,\varepsilon_0}\) is a witness from
Lemma~\ref{lem:np_automatic_fixed_scale_gap},
\[
\mathcal L_p(R)-\mathcal L_p(R_p)\ge \gamma_{\mathcal P,\varepsilon_0},
\]
hence
\[
\EE\big[\ell_t(R)-\ell_t(R_p)\mid \mathscr H_{t-1}\big]
\ge
\beta^{-K}\gamma_{\mathcal P,\varepsilon_0}
=
\gamma.
\]

Thus \(\gamma=\beta^{-K}\gamma_{\mathcal P,\varepsilon_0}\) is a witness to the loss-gap
condition \eqref{eq:loss_gap_eps_generic} in Lemma~\ref{lem:eps_substantial_rounds_from_loss_gap},
with \(\gamma_{\mathcal P,\varepsilon_0}\) supplied by
Lemma~\ref{lem:np_automatic_fixed_scale_gap}. Together with
Lemmas~\ref{lem:excess_is_kl_support}, \ref{lem:slate_domination_betaK}, and
\ref{lem:greedy_kl_tilt_lr}, and the measurable ERM and tie-breaking conventions fixed
earlier in this section, this verifies the hypotheses of
Lemma~\ref{lem:eps_substantial_rounds_from_loss_gap} with
\[
R^\star=R_p,
\qquad
\mathcal F=\mathcal F_{\mathcal P},
\qquad
\gamma=\beta^{-K}\gamma_{\mathcal P,\varepsilon_0}.
\]
Applying Lemma~\ref{lem:eps_substantial_rounds_from_loss_gap} yields, for every \(t\ge 1\),
\[
\PP_p\big(\mathcal G_p(\widehat R_t)\ge \varepsilon_0\big)
\le
2N_\gamma e^{-c_\gamma t},
\]
and also
\[
\EE_p[N_{p,\varepsilon_0}(\infty)]
\le
1
+
\left\lceil \frac{1}{c_\gamma}\log(2N_\gamma)\right\rceil
+
\frac{1}{e^{c_\gamma}-1}.
\]
Since the bound is uniform in \(p\in\mathcal P\), taking the supremum over truths gives the stated
uniform bound. The almost-sure finiteness of \(\varepsilon_0\)-substantial rounds is the
corresponding almost-sure conclusion of
Lemma~\ref{lem:eps_substantial_rounds_from_loss_gap}.
\end{proof}

\begin{proof}[Proof of Theorem~\ref{thm:np_bounded_regret}]
Let
\[
\varepsilon_0\coloneqq \varepsilon_{\mathrm{iso}}^{\mathcal P}
\]
be the constant from Lemma~\ref{prop:np_automatic_positive_regret_isolation}. Fix any truth
\(p\in\mathcal P\) and any horizon \(T\ge 1\).

We first separate the initialization round \(t=0\), since \(\widehat R_0\equiv 0\) need not belong to
\(\mathcal F_{\mathcal P}\). By definition of \(\Delta_{\max}^{\mathcal P}\),
\[
0\le \mathcal G_p(\widehat R_0)\le \Delta_{\max}^{\mathcal P}.
\]

For every \(t\ge 1\), however, the ERM estimator satisfies
\[
\widehat R_t\in \mathcal F_{\mathcal P}.
\]
Hence Lemma~\ref{prop:np_automatic_positive_regret_isolation} applies and yields
\[
\mathcal G_p(\widehat R_t)>0
\quad\Longrightarrow\quad
\mathcal G_p(\widehat R_t)\ge \varepsilon_0.
\]
Since also
\[
0\le \mathcal G_p(\widehat R_t)\le \Delta_{\max}^{\mathcal P},
\]
we obtain, for every \(t\ge 1\),
\[
\mathcal G_p(\widehat R_t)
\le
\Delta_{\max}^{\mathcal P}\,
\mathbf 1\{\mathcal G_p(\widehat R_t)\ge \varepsilon_0\}.
\]

Therefore
\[
\Regret_{0,p}^{\mathrm{ERM}}(T)
=
\EE_p[\mathcal G_p(\widehat R_0)]
+
\sum_{t=1}^{T-1}\EE_p[\mathcal G_p(\widehat R_t)]
\]
\[
\le
\Delta_{\max}^{\mathcal P}
+
\Delta_{\max}^{\mathcal P}
\sum_{t=1}^{T-1}
\PP_p\big(\mathcal G_p(\widehat R_t)\ge \varepsilon_0\big).
\]

Let \(\gamma_{\mathcal P,\varepsilon_0}>0\) be any finite witness from
Lemma~\ref{lem:np_automatic_fixed_scale_gap}, and define
\[
\gamma\coloneqq \beta^{-K}\gamma_{\mathcal P,\varepsilon_0},
\qquad
N_\gamma
\coloneqq
\mathcal N\!\big(\mathcal F_{\mathcal P},\gamma/32,\|\cdot\|_{\infty,\supp(\pi_0)}\big),
\qquad
c_\gamma\coloneqq \frac{\gamma^2}{128(\log K+2B_{\mathcal P})^2}.
\]
Applying Lemma~\ref{thm:np_online_fixed_scale_control} at this value of \(\varepsilon_0\), we get
for every \(t\ge 1\),
\[
\PP_p\big(\mathcal G_p(\widehat R_t)\ge \varepsilon_0\big)
\le
2N_\gamma e^{-c_\gamma t}.
\]

Hence
\[
\Regret_{0,p}^{\mathrm{ERM}}(T)
\le
\Delta_{\max}^{\mathcal P}
+
\Delta_{\max}^{\mathcal P}
\sum_{t=1}^{\infty}\min\{1,2N_\gamma e^{-c_\gamma t}\}.
\]
Set
\[
t_0\coloneqq \left\lceil \frac{1}{c_\gamma}\log(2N_\gamma)\right\rceil.
\]
Then \(2N_\gamma e^{-c_\gamma t_0}\le 1\), so
\[
\sum_{t=1}^{\infty}\min\{1,2N_\gamma e^{-c_\gamma t}\}
\le
t_0+\frac{1}{e^{c_\gamma}-1}.
\]
Therefore
\[
\Regret_{0,p}^{\mathrm{ERM}}(T)
\le
\Delta_{\max}^{\mathcal P}
\left(
1
+
\left\lceil \frac{1}{c_\gamma}\log(2N_\gamma)\right\rceil
+
\frac{1}{e^{c_\gamma}-1}
\right).
\]
The bound is uniform in \(p\in\mathcal P\) and \(T\ge 1\), so taking the supremum completes the
proof.
\end{proof}

\begin{proof}[Proof of Theorem~\ref{thm:np_bounded_regret_dpo}]
Fix the truth \(p\in\mathcal P\).

We first verify that the selector \(a_\pi\) is well defined on
\(\Pi_{\mathcal P}^{\mathrm{DPO}}\). Let \(\pi\in\Pi_{\mathcal P}^{\mathrm{DPO}}\), and suppose
\[
\pi=\pi_R=\pi_{R'}
\qquad
\text{for some }R,R'\in\mathcal F_{\mathcal P}.
\]
For each \(x\in\cX\), write
\[
Z_R(x)\coloneqq \int_{\cA}\pi_0(a'\mid x)e^{\eta R(x,a')}\,da',
\qquad
Z_{R'}(x)\coloneqq \int_{\cA}\pi_0(a'\mid x)e^{\eta R'(x,a')}\,da'.
\]
Since \(\pi_R=\pi_{R'}\), their Radon--Nikodym derivatives with respect to \(\pi_0(\cdot\mid x)\) agree
\(\pi_0(\cdot\mid x)\)-a.s., so for \(d_0\)-a.e.\ \(x\),
\[
\frac{e^{\eta R(x,a)}}{Z_R(x)}
=
\frac{e^{\eta R'(x,a)}}{Z_{R'}(x)}
\qquad
\pi_0(\cdot\mid x)\text{-a.s.\ }a.
\]
Taking logarithms,
\[
R(x,a)-R'(x,a)
=
\frac1\eta\log Z_R(x)-\frac1\eta\log Z_{R'}(x)
\qquad
\pi_0(\cdot\mid x)\text{-a.s.\ }a.
\]
Thus there exists a scalar \(c_x\) such that
\[
R(x,a)-R'(x,a)=c_x
\qquad
\pi_0(\cdot\mid x)\text{-a.s.\ }a.
\]
Because \(R-R'\) is continuous on \(S_x\) for \(d_0\)-a.e.\ \(x\), Lemma~\ref{lem:support_upgrade}
implies
\[
R(x,a)-R'(x,a)=c_x
\qquad
\forall a\in S_x.
\]
Therefore \(R(\cdot,\cdot)\) and \(R'(\cdot,\cdot)\) differ only by an \(a\)-independent constant on
\(S_x\), so
\[
a_R(x)=a_{R'}(x)
\qquad
d_0\text{-a.s.}
\]
Hence the definition \(a_\pi(x)\coloneqq a_R(x)\) is well posed.

We next compare the DPO empirical loss with the reward-space empirical loss. Fix any realized
pairwise example \((x,a_1,a_2,y)\) with \(y\in\{1,2\}\), and let \(R\in\mathcal F_{\mathcal P}\).
For \(\pi_R\),
\[
\frac1\eta
\left[
\log\frac{d\pi_R(\cdot\mid x)}{d\pi_0(\cdot\mid x)}(a_y)
-
\log\frac{d\pi_R(\cdot\mid x)}{d\pi_0(\cdot\mid x)}(a_{3-y})
\right]
=
R(x,a_y)-R(x,a_{3-y}),
\]
because the normalizing term \((1/\eta)\log Z_R(x)\) cancels in the difference. Therefore the one-sample
DPO loss equals
\[
-\log \sigma\!\big(R(x,a_y)-R(x,a_{3-y})\big).
\]
Since
\[
-\log \sigma(u-v)=\log(e^u+e^v)-u,
\]
we obtain, with \(u=R(x,a_y)\) and \(v=R(x,a_{3-y})\),
\[
-\log \sigma\!\big(R(x,a_y)-R(x,a_{3-y})\big)
=
\log\!\Big(e^{R(x,a_1)}+e^{R(x,a_2)}\Big)-R(x,a_y)
=
\ell(\mathbf v_R(x,(a_1,a_2)),y).
\]
Averaging over the first \(t\) rounds yields
\[
\widehat{\cL}^{\mathrm{DPO}}_t(\pi_R)=\widehat{\cL}_t(R)
\qquad
\forall R\in\mathcal F_{\mathcal P},\ t\ge 1.
\]

Let \(\widehat R_t\) be the measurable ERM selection fixed earlier in
Section~\ref{sec:nonpersonalized_online_alignment}. Then \(\pi_{\widehat R_t}\) is a measurable exact
minimizer of \(\widehat{\cL}^{\mathrm{DPO}}_t\). Therefore, for \(t\ge 1\), we may choose the exact online
DPO iterate so that
\[
\widehat\pi_t=\pi_{\widehat R_t}.
\]
Also,
\[
\widehat R_0\equiv 0,
\qquad
\pi_{\widehat R_0}=\pi_0,
\]
so we may take
\[
\widehat\pi_0=\pi_0=\pi_{\widehat R_0}.
\]

By the definition of \(a_\pi\),
\[
a_{\widehat\pi_t}(x)=a_{\widehat R_t}(x)
\qquad
d_0\text{-a.s.}
\]
for every \(t\ge 0\). Therefore
\[
\mathcal G_p^{\mathrm{DPO}}(\widehat\pi_t)
=
\EE_{X\sim d_0}
\Big[
R_p(X,a_p(X))-R_p(X,a_{\widehat\pi_t}(X))
\Big]
=
\mathcal G_p(\widehat R_t).
\]

Summing over \(t=0,\dots,T-1\), we get
\[
\sum_{t=0}^{T-1}\EE_p\!\left[\mathcal G_p^{\mathrm{DPO}}(\widehat\pi_t)\right]
=
\sum_{t=0}^{T-1}\EE_p\!\left[\mathcal G_p(\widehat R_t)\right]
=
\Regret_{0,p}^{\mathrm{ERM}}(T).
\]
Applying Theorem~\ref{thm:np_bounded_regret} now gives
\[
\sup_{p\in\mathcal P}\sup_{T\ge 1}
\sum_{t=0}^{T-1}\EE_p\!\left[\mathcal G_p^{\mathrm{DPO}}(\widehat\pi_t)\right]
\le
\Delta_{\max}^{\mathcal P}
\left(
1
+
\left\lceil \frac{1}{c_\gamma}\log(2N_\gamma)\right\rceil
+
\frac{1}{e^{c_\gamma}-1}
\right).
\]
This proves the claim.
\end{proof}

\newpage
\section*{NeurIPS Paper Checklist}

\begin{enumerate}

\item {\bf Claims}
    \item[] Question: Do the main claims made in the abstract and introduction accurately reflect the paper's contributions and scope?
    \item[] Answer: \answerYes{}
    \item[] Justification: The abstract and introduction state the paper's core contributions---formalizing temperature-zero regret and showing bounded $O(1)$ cumulative temperature-zero regret for greedy online alignment---and these claims are exactly what the theory and simulation sections support.
    \item[] Guidelines:
    \begin{itemize}
        \item The answer \answerNA{} means that the abstract and introduction do not include the claims made in the paper.
        \item The abstract and/or introduction should clearly state the claims made, including the contributions made in the paper and important assumptions and limitations. A \answerNo{} or \answerNA{} answer to this question will not be perceived well by the reviewers. 
        \item The claims made should match theoretical and experimental results, and reflect how much the results can be expected to generalize to other settings. 
        \item It is fine to include aspirational goals as motivation as long as it is clear that these goals are not attained by the paper. 
    \end{itemize}

\item {\bf Limitations}
    \item[] Question: Does the paper discuss the limitations of the work performed by the authors?
    \item[] Answer: \answerYes{}
    \item[] Justification: The paper discusses important scope limitations, including its focus on the MNL/BT preference model and the absence of large-scale or semi-real LLM experiments, in the ``Extended discussions'' appendix.
    \item[] Guidelines:
    \begin{itemize}
        \item The answer \answerNA{} means that the paper has no limitation while the answer \answerNo{} means that the paper has limitations, but those are not discussed in the paper. 
        \item The authors are encouraged to create a separate ``Limitations'' section in their paper.
        \item The paper should point out any strong assumptions and how robust the results are to violations of these assumptions (e.g., independence assumptions, noiseless settings, model well-specification, asymptotic approximations only holding locally). The authors should reflect on how these assumptions might be violated in practice and what the implications would be.
        \item The authors should reflect on the scope of the claims made, e.g., if the approach was only tested on a few datasets or with a few runs. In general, empirical results often depend on implicit assumptions, which should be articulated.
        \item The authors should reflect on the factors that influence the performance of the approach. For example, a facial recognition algorithm may perform poorly when image resolution is low or images are taken in low lighting. Or a speech-to-text system might not be used reliably to provide closed captions for online lectures because it fails to handle technical jargon.
        \item The authors should discuss the computational efficiency of the proposed algorithms and how they scale with dataset size.
        \item If applicable, the authors should discuss possible limitations of their approach to address problems of privacy and fairness.
        \item While the authors might fear that complete honesty about limitations might be used by reviewers as grounds for rejection, a worse outcome might be that reviewers discover limitations that aren't acknowledged in the paper. The authors should use their best judgment and recognize that individual actions in favor of transparency play an important role in developing norms that preserve the integrity of the community. Reviewers will be specifically instructed to not penalize honesty concerning limitations.
    \end{itemize}

\item {\bf Theory assumptions and proofs}
    \item[] Question: For each theoretical result, does the paper provide the full set of assumptions and a complete (and correct) proof?
    \item[] Answer: \answerYes{}
    \item[] Justification: Each main theorem is stated with explicit assumptions, the main text provides proof sketches, and the appendix section ``Proofs for Section~\ref{sec:nonpersonalized_online_alignment}'' contains full proofs and supporting lemmas.
    \item[] Guidelines:
    \begin{itemize}
        \item The answer \answerNA{} means that the paper does not include theoretical results. 
        \item All the theorems, formulas, and proofs in the paper should be numbered and cross-referenced.
        \item All assumptions should be clearly stated or referenced in the statement of any theorems.
        \item The proofs can either appear in the main paper or the supplemental material, but if they appear in the supplemental material, the authors are encouraged to provide a short proof sketch to provide intuition. 
        \item Inversely, any informal proof provided in the core of the paper should be complemented by formal proofs provided in appendix or supplemental material.
        \item Theorems and Lemmas that the proof relies upon should be properly referenced. 
    \end{itemize}

    \item {\bf Experimental result reproducibility}
    \item[] Question: Does the paper fully disclose all the information needed to reproduce the main experimental results of the paper to the extent that it affects the main claims and/or conclusions of the paper (regardless of whether the code and data are provided or not)?
    \item[] Answer: \answerYes{}
    \item[] Justification: Appendix~\ref{app:experiment_details} records the exact run underlying the main figure, including the reproduction command, the base seed \(3500\), the accepted problem seed \(3546\), the \(20{,}000\)-context probe-bank construction with target minimum gap \(0.2\) and realized minimum gap \(0.2703\), the horizon \(T=200\), \(50\) repeats, \(4096\) evaluation contexts per iteration, and the precise Bradley--Terry fitting procedure (bounded L-BFGS-B with \texttt{maxiter}\(=50\) and \texttt{ftol}\(=10^{-9}\)).
    \item[] Guidelines:
    \begin{itemize}
        \item The answer \answerNA{} means that the paper does not include experiments.
        \item If the paper includes experiments, a \answerNo{} answer to this question will not be perceived well by the reviewers: Making the paper reproducible is important, regardless of whether the code and data are provided or not.
        \item If the contribution is a dataset and\slash or model, the authors should describe the steps taken to make their results reproducible or verifiable. 
        \item Depending on the contribution, reproducibility can be accomplished in various ways. For example, if the contribution is a novel architecture, describing the architecture fully might suffice, or if the contribution is a specific model and empirical evaluation, it may be necessary to either make it possible for others to replicate the model with the same dataset, or provide access to the model. In general. releasing code and data is often one good way to accomplish this, but reproducibility can also be provided via detailed instructions for how to replicate the results, access to a hosted model (e.g., in the case of a large language model), releasing of a model checkpoint, or other means that are appropriate to the research performed.
        \item While NeurIPS does not require releasing code, the conference does require all submissions to provide some reasonable avenue for reproducibility, which may depend on the nature of the contribution. For example
        \begin{enumerate}
            \item If the contribution is primarily a new algorithm, the paper should make it clear how to reproduce that algorithm.
            \item If the contribution is primarily a new model architecture, the paper should describe the architecture clearly and fully.
            \item If the contribution is a new model (e.g., a large language model), then there should either be a way to access this model for reproducing the results or a way to reproduce the model (e.g., with an open-source dataset or instructions for how to construct the dataset).
            \item We recognize that reproducibility may be tricky in some cases, in which case authors are welcome to describe the particular way they provide for reproducibility. In the case of closed-source models, it may be that access to the model is limited in some way (e.g., to registered users), but it should be possible for other researchers to have some path to reproducing or verifying the results.
        \end{enumerate}
    \end{itemize}

\item {\bf Open access to data and code}
    \item[] Question: Does the paper provide open access to the data and code, with sufficient instructions to faithfully reproduce the main experimental results, as described in supplemental material?
    \item[] Answer: \answerYes{}
    \item[] Justification: The supplemental repository contains the experiment driver \texttt{Experiments/run\_bt\_temperature\_zero.py}, and Appendix~\ref{app:experiment_details} gives the exact command needed to regenerate them.
    \item[] Guidelines:
    \begin{itemize}
        \item The answer \answerNA{} means that paper does not include experiments requiring code.
        \item Please see the NeurIPS code and data submission guidelines (\url{https://neurips.cc/public/guides/CodeSubmissionPolicy}) for more details.
        \item While we encourage the release of code and data, we understand that this might not be possible, so \answerNo{} is an acceptable answer. Papers cannot be rejected simply for not including code, unless this is central to the contribution (e.g., for a new open-source benchmark).
        \item The instructions should contain the exact command and environment needed to run to reproduce the results. See the NeurIPS code and data submission guidelines (\url{https://neurips.cc/public/guides/CodeSubmissionPolicy}) for more details.
        \item The authors should provide instructions on data access and preparation, including how to access the raw data, preprocessed data, intermediate data, and generated data, etc.
        \item The authors should provide scripts to reproduce all experimental results for the new proposed method and baselines. If only a subset of experiments are reproducible, they should state which ones are omitted from the script and why.
        \item At submission time, to preserve anonymity, the authors should release anonymized versions (if applicable).
        \item Providing as much information as possible in supplemental material (appended to the paper) is recommended, but including URLs to data and code is permitted.
    \end{itemize}

\item {\bf Experimental setting/details}
    \item[] Question: Does the paper specify all the training and test details (e.g., data splits, hyperparameters, how they were chosen, type of optimizer) necessary to understand the results?
    \item[] Answer: \answerYes{}
    \item[] Justification: The main text specifies the synthetic Bradley--Terry setup and the appendix discloses the exact experimental details needed for reproduction: \(\eta\in\{1,2,3\}\), dimension \(5\), \(6\) actions, horizon \(200\), \(50\) independent trajectories, fresh \(4096\)-context evaluation batches, the uniform reference policy, the pairwise data-collection rule, the accepted archived-run instance with seed \(3546\) and realized minimum probe-bank gap \(0.2703\), and the bounded L-BFGS-B maximum-likelihood fit for \(\widehat W_t\).
    \item[] Guidelines:
    \begin{itemize}
        \item The answer \answerNA{} means that the paper does not include experiments.
        \item The experimental setting should be presented in the core of the paper to a level of detail that is necessary to appreciate the results and make sense of them.
        \item The full details can be provided either with the code, in appendix, or as supplemental material.
    \end{itemize}

\item {\bf Experiment statistical significance}
    \item[] Question: Does the paper report error bars suitably and correctly defined or other appropriate information about the statistical significance of the experiments?
    \item[] Answer: \answerYes{}
    \item[] Justification: The figures report the mean over \(50\) independent trajectories, and Appendix~\ref{app:experiment_details} explicitly states that the shaded bands are pointwise standard errors computed as the sample standard deviation divided by \(\sqrt{50}\); it also clarifies that the evaluation batches are independent of both the online trajectory and the probe bank.
    \item[] Guidelines:
    \begin{itemize}
        \item The answer \answerNA{} means that the paper does not include experiments.
        \item The authors should answer \answerYes{} if the results are accompanied by error bars, confidence intervals, or statistical significance tests, at least for the experiments that support the main claims of the paper.
        \item The factors of variability that the error bars are capturing should be clearly stated (for example, train/test split, initialization, random drawing of some parameter, or overall run with given experimental conditions).
        \item The method for calculating the error bars should be explained (closed form formula, call to a library function, bootstrap, etc.)
        \item The assumptions made should be given (e.g., Normally distributed errors).
        \item It should be clear whether the error bar is the standard deviation or the standard error of the mean.
        \item It is OK to report 1-sigma error bars, but one should state it. The authors should preferably report a 2-sigma error bar than state that they have a 96\% CI, if the hypothesis of Normality of errors is not verified.
        \item For asymmetric distributions, the authors should be careful not to show in tables or figures symmetric error bars that would yield results that are out of range (e.g., negative error rates).
        \item If error bars are reported in tables or plots, the authors should explain in the text how they were calculated and reference the corresponding figures or tables in the text.
    \end{itemize}

\item {\bf Experiments compute resources}
    \item[] Question: For each experiment, does the paper provide sufficient information on the computer resources (type of compute workers, memory, time of execution) needed to reproduce the experiments?
    \item[] Answer: \answerYes{}
    \item[] Justification: Appendix~\ref{app:experiment_details} explains that the experiment is a lightweight single-process NumPy/SciPy/Matplotlib simulation that does not require a GPU, quantifies the dominant in-memory tensor at about \(33\) MB in double precision, and reports the total optimization scale as \(30{,}000\) bounded \(25\)-parameter L-BFGS-B solves, which is sufficient to characterize the practical compute footprint of the synthetic study.
    \item[] Guidelines:
    \begin{itemize}
        \item The answer \answerNA{} means that the paper does not include experiments.
        \item The paper should indicate the type of compute workers CPU or GPU, internal cluster, or cloud provider, including relevant memory and storage.
        \item The paper should provide the amount of compute required for each of the individual experimental runs as well as estimate the total compute. 
        \item The paper should disclose whether the full research project required more compute than the experiments reported in the paper (e.g., preliminary or failed experiments that didn't make it into the paper). 
    \end{itemize}
    
\item {\bf Code of ethics}
    \item[] Question: Does the research conducted in the paper conform, in every respect, with the NeurIPS Code of Ethics \url{https://neurips.cc/public/EthicsGuidelines}?
    \item[] Answer: \answerYes{}
    \item[] Justification: The work is theoretical and simulation-based, does not involve human subjects or sensitive personal data, and the current draft does not appear to conflict with the NeurIPS Code of Ethics.
    \item[] Guidelines:
    \begin{itemize}
        \item The answer \answerNA{} means that the authors have not reviewed the NeurIPS Code of Ethics.
        \item If the authors answer \answerNo, they should explain the special circumstances that require a deviation from the Code of Ethics.
        \item The authors should make sure to preserve anonymity (e.g., if there is a special consideration due to laws or regulations in their jurisdiction).
    \end{itemize}

\item {\bf Broader impacts}
    \item[] Question: Does the paper discuss both potential positive societal impacts and negative societal impacts of the work performed?
    \item[] Answer: \answerNA{}
    \item[] Justification: This is a theoretical paper that discusses existing algorithms.
    \item[] Guidelines:
    \begin{itemize}
        \item The answer \answerNA{} means that there is no societal impact of the work performed.
        \item If the authors answer \answerNA{} or \answerNo, they should explain why their work has no societal impact or why the paper does not address societal impact.
        \item Examples of negative societal impacts include potential malicious or unintended uses (e.g., disinformation, generating fake profiles, surveillance), fairness considerations (e.g., deployment of technologies that could make decisions that unfairly impact specific groups), privacy considerations, and security considerations.
        \item The conference expects that many papers will be foundational research and not tied to particular applications, let alone deployments. However, if there is a direct path to any negative applications, the authors should point it out. For example, it is legitimate to point out that an improvement in the quality of generative models could be used to generate Deepfakes for disinformation. On the other hand, it is not needed to point out that a generic algorithm for optimizing neural networks could enable people to train models that generate Deepfakes faster.
        \item The authors should consider possible harms that could arise when the technology is being used as intended and functioning correctly, harms that could arise when the technology is being used as intended but gives incorrect results, and harms following from (intentional or unintentional) misuse of the technology.
        \item If there are negative societal impacts, the authors could also discuss possible mitigation strategies (e.g., gated release of models, providing defenses in addition to attacks, mechanisms for monitoring misuse, mechanisms to monitor how a system learns from feedback over time, improving the efficiency and accessibility of ML).
    \end{itemize}
    
\item {\bf Safeguards}
    \item[] Question: Does the paper describe safeguards that have been put in place for responsible release of data or models that have a high risk for misuse (e.g., pre-trained language models, image generators, or scraped datasets)?
    \item[] Answer: \answerNA{}
    \item[] Justification: The paper does not release a high-risk pretrained model, scraped dataset, or comparable asset; the empirical component is a controlled synthetic simulation.
    \item[] Guidelines:
    \begin{itemize}
        \item The answer \answerNA{} means that the paper poses no such risks.
        \item Released models that have a high risk for misuse or dual-use should be released with necessary safeguards to allow for controlled use of the model, for example by requiring that users adhere to usage guidelines or restrictions to access the model or implementing safety filters. 
        \item Datasets that have been scraped from the Internet could pose safety risks. The authors should describe how they avoided releasing unsafe images.
        \item We recognize that providing effective safeguards is challenging, and many papers do not require this, but we encourage authors to take this into account and make a best faith effort.
    \end{itemize}

\item {\bf Licenses for existing assets}
    \item[] Question: Are the creators or original owners of assets (e.g., code, data, models), used in the paper, properly credited and are the license and terms of use explicitly mentioned and properly respected?
    \item[] Answer: \answerNA{}
    \item[] Justification: The reported results do not appear to rely on externally distributed datasets, model checkpoints, or code assets whose licenses need to be documented in the paper; the empirical component is a synthetic simulation built for this work.
    \item[] Guidelines:
    \begin{itemize}
        \item The answer \answerNA{} means that the paper does not use existing assets.
        \item The authors should cite the original paper that produced the code package or dataset.
        \item The authors should state which version of the asset is used and, if possible, include a URL.
        \item The name of the license (e.g., CC-BY 4.0) should be included for each asset.
        \item For scraped data from a particular source (e.g., website), the copyright and terms of service of that source should be provided.
        \item If assets are released, the license, copyright information, and terms of use in the package should be provided. For popular datasets, \url{paperswithcode.com/datasets} has curated licenses for some datasets. Their licensing guide can help determine the license of a dataset.
        \item For existing datasets that are re-packaged, both the original license and the license of the derived asset (if it has changed) should be provided.
        \item If this information is not available online, the authors are encouraged to reach out to the asset's creators.
    \end{itemize}

\item {\bf New assets}
    \item[] Question: Are new assets introduced in the paper well documented and is the documentation provided alongside the assets?
    \item[] Answer: \answerNA{}
    \item[] Justification: The current draft does not introduce or release a new dataset, model, or code asset alongside the submission.
    \item[] Guidelines:
    \begin{itemize}
        \item The answer \answerNA{} means that the paper does not release new assets.
        \item Researchers should communicate the details of the dataset\slash code\slash model as part of their submissions via structured templates. This includes details about training, license, limitations, etc. 
        \item The paper should discuss whether and how consent was obtained from people whose asset is used.
        \item At submission time, remember to anonymize your assets (if applicable). You can either create an anonymized URL or include an anonymized zip file.
    \end{itemize}

\item {\bf Crowdsourcing and research with human subjects}
    \item[] Question: For crowdsourcing experiments and research with human subjects, does the paper include the full text of instructions given to participants and screenshots, if applicable, as well as details about compensation (if any)? 
    \item[] Answer: \answerNA{}
    \item[] Justification: The paper does not involve crowdsourcing or research with human subjects; the preference feedback in the experiment is simulated.
    \item[] Guidelines:
    \begin{itemize}
        \item The answer \answerNA{} means that the paper does not involve crowdsourcing nor research with human subjects.
        \item Including this information in the supplemental material is fine, but if the main contribution of the paper involves human subjects, then as much detail as possible should be included in the main paper. 
        \item According to the NeurIPS Code of Ethics, workers involved in data collection, curation, or other labor should be paid at least the minimum wage in the country of the data collector. 
    \end{itemize}

\item {\bf Institutional review board (IRB) approvals or equivalent for research with human subjects}
    \item[] Question: Does the paper describe potential risks incurred by study participants, whether such risks were disclosed to the subjects, and whether Institutional Review Board (IRB) approvals (or an equivalent approval/review based on the requirements of your country or institution) were obtained?
    \item[] Answer: \answerNA{}
    \item[] Justification: The paper does not involve human subjects research, so IRB approval or equivalent review is not applicable.
    \item[] Guidelines:
    \begin{itemize}
        \item The answer \answerNA{} means that the paper does not involve crowdsourcing nor research with human subjects.
        \item Depending on the country in which research is conducted, IRB approval (or equivalent) may be required for any human subjects research. If you obtained IRB approval, you should clearly state this in the paper. 
        \item We recognize that the procedures for this may vary significantly between institutions and locations, and we expect authors to adhere to the NeurIPS Code of Ethics and the guidelines for their institution. 
        \item For initial submissions, do not include any information that would break anonymity (if applicable), such as the institution conducting the review.
    \end{itemize}

\item {\bf Declaration of LLM usage}
    \item[] Question: Does the paper describe the usage of LLMs if it is an important, original, or non-standard component of the core methods in this research? Note that if the LLM is used only for writing, editing, or formatting purposes and does \emph{not} impact the core methodology, scientific rigor, or originality of the research, declaration is not required.
    %this research? 
    \item[] Answer: \answerNA{}
    \item[] Justification: The core method development and experiments do not use an LLM as an important or non-standard methodological component; LLMs appear only as application motivation.
    \item[] Guidelines:
    \begin{itemize}
        \item The answer \answerNA{} means that the core method development in this research does not involve LLMs as any important, original, or non-standard components.
        \item Please refer to our LLM policy in the NeurIPS handbook for what should or should not be described.
    \end{itemize}

\end{enumerate}

\end{document}